\documentclass[runningheads]{llncs}

\usepackage{xr}
\externaldocument[sup-]{supplement}
\usepackage{graphicx}
\usepackage[disable]{todonotes}
\usepackage{url}
\usepackage{algorithm}
\usepackage{algorithmic}
\usepackage{caption}
\usepackage{booktabs}
\usepackage{microtype}

\usepackage{amssymb}
\usepackage{mathtools}
\newtheorem{mydef}{Definition}

\usepackage{cleveref}

\renewcommand{\vec}[1]{\mathbf{#1}}

\newcommand{\rewritten}{\todo[inline,color=green]{This section has been paraphrased}}

\newcommand{\done}{\todo[inline,color=green!50!red]{This section seems finalized}}

\newcommand{\forcameraready}[1]{}

\begin{document}

\forcameraready{
\section*{General todos}

Check chapter starting on own page where reasonable. Check also references.

\subsection{Additional Refs}
Lun Du, Yun Wang, Guojie Song, Zhicong Lu, and Junshan Wang, ‘Dynamic network embedding: An extended approach for skip-gram based network embedding.’, in IJCAI, pp. 2086–2092, (2018).

Jundong Li, Harsh Dani, Xia Hu, Jiliang Tang, Yi Chang, and Huan Liu, ‘Attributed network embedding for learning in a dynamic environment’, in Proceedings of the 2017 ACM on Conference on Information and Knowledge Management, pp. 387–396. ACM, (2017).

Linhong Zhu, Dong Guo, Junming Yin, Greg Ver Steeg, and Aram Galstyan, ‘Scalable temporal latent space inference for link prediction in dynamic social networks’, IEEE Transactions on Knowledge and Data Engineering, 28(10), 2765–2777, (2016).

Jianxin Ma, Peng Cui, and Wenwu Zhu, ‘Depthlgp: learning embeddings of out-of-sample nodes in dynamic networks’, in Thirty-Second AAAI Conference on Artificial Intelligence, (2018).

Some smoothness measure is defined in:

Mark Heimann, Haoming Shen, and Danai Koutra, ‘Node representation learning for multiple networks: The case of graph alignment’, CoRR, abs/1802.06257, (2018).

\newpage

}

\title{Updating Embeddings for Dynamic Knowledge Graphs}

\author{
Christopher Wewer\inst{1} \and
Florian Lemmerich\inst{2}\and
Michael Cochez\inst{3}
}
\authorrunning{
	C. Wewer, et al.
}
\institute{
	RWTH Aachen University, Germany\and
	University of Passau, Germany\and
	Vrije Universiteit Amsterdam, the Netherlands
}
\maketitle              %
\noindent Part of the work was performed while F. Lemmerich and M. Cochez were affiliated with RWTH Aachen and Fraunhofer FIT, respectively.

\begin{abstract}

\rewritten
Data in Knowledge Graphs often represents part of the current state of the real world. 
Thus, to stay up-to-date the graph data needs to be updated frequently.
To utilize information from Knowledge Graphs, many state-of-the-art machine learning approaches use embedding techniques.
These techniques typically compute an embedding, i.e., vector representations of the nodes as input for the main machine learning algorithm. If a graph update occurs later on --- specifically when nodes are added or removed --- the training has to be done all over again.
This is undesirable, because of the time it takes and also because downstream models which were trained with these embeddings have to be retrained if they change significantly.
In this paper, we investigate embedding updates that do not require full retraining and evaluate them in combination with various embedding models on real dynamic Knowledge Graphs covering multiple use cases.
We study approaches that place newly appearing nodes optimally according to local information, but notice that this does not work well.
However, we find that if we continue the training of the old embedding, interleaved with epochs during which we only optimize for the added and removed parts, we obtain good results in terms of typical metrics used in link prediction.
This performance is obtained much faster than with a complete retraining and hence makes it possible to maintain embeddings for dynamic Knowledge Graphs.

\keywords{Knowledge Graph \and KG Embedding \and Dynamic Graph}
\end{abstract}

\section{Introduction} \label{introduction}
\rewritten

\forcameraready{A large part of the results presented in this work were part of the thesis of the first author \cite{thesisChristopher}}
In recent years, Knowledge Graphs (KGs) have become a go-to option for the representation of relational data.
While they are useful to model relationships between entities with labeled, directed edges between vertices, downstream machine learning models typically require numerical features as input.
In order to exploit these data sources, Knowledge Graph embedding methods have been devised.
However, these embedding models are static in the sense that they only consider a single fixed state of a KG.
In practice, we observe that almost every KG is evolving over time with regular additions and deletions of entities, relations, and triples; they are dynamic.
It is not desirable and in some cases infeasible to recompute the entire embedding from scratch after each minor change of the underlying graph, specifically since this would also require the retraining of all downstream models, as the new embedding would differ significantly from the old one.
Therefore, there is a high demand for an efficient and effective update procedure to initialize the vector representations of new entities and relations as well as to refresh the outdated embedding according to added and deleted triples.

A key goal for that purpose is to achieve high-quality approximations, that is, the updates obtained should perform similarly to full embedding recomputations. In this paper, we present a novel approach for updating KG embeddings with respect to changes in the graphs with a focus on this primary objective. 
First, we find that initializing vector representations of new entities and relations optimally according to local information does not maintain the link prediction performance over time.
Next, as we observe the need of negative evidence, pre-training the new vectors while fixing the remaining embedding improves the initialization.
The final approach is able to keep the quality stable over time by continuing the training of the old embedding and biasing it towards the newly added and deleted triples.
This way, we are able to update embeddings for evolving KGs much faster compared to a complete retraining, enabling the maintenance of embeddings in dynamic contexts.

While we focus on link prediction performance in this paper, we acknowledge that also other properties such as stability (smoothness of updates to preserve downstream models), low time complexity, or scalability with respect to large-scale KGs are important and provide respective experimental evaluations in the supplementary material.
We based our experiments on the OpenKE framework~\cite{openke}. All code used for our experiments can be found in our git repository\footnote{\url{https://anonymous.4open.science/r/cc361916-2988-4b1a-93c4-e96dd51278a3/}}.

The remainder of the paper is structured as follows:
After introducing static KGs and their embedding models, we formalize our notion of dynamic KGs in \cref{sec:dynamic_kgs}.
\Cref{sec:maintaining_embeddings} covers the requirements for the update procedure in detail. %
Our approach and supporting experiments are described in \cref{sec:approach}.
Finally, we provide an evaluation on real-world dynamic KGs.

\section{Preliminaries} %

\rewritten
\done

Before we move to our notion of dynamic Knowledge Graphs, we introduce (static) KGs and different embedding models.
We consider two groups of similar embedding models, whose learning procedure can be generalized such that our dynamic updating strategy can be used in combination with any of them.

\subsection{Static Knowledge Graphs}
Knowledge Graphs are a fundamental tool for the representation of relational data and nowadays widely established in practice~\cite{2003.02320}.
In a KG, relationships between entities are represented in form of labeled edges between vertices, often called triples $(h, r, t)$, where $h$ is the head entity, $r$ the relation, and $t$ the tail entity\footnote{In this work, we use terminology commonly used in link prediction work. %
}.
In this paper, we use the following general definition:
\begin{mydef}\label{KG_def}
	A Knowledge Graph $\mathcal{G}=(V, E)$ over a set of relations $\mathcal{R}$ is defined as a pair of vertices $V$ and edges $E\subseteq V\times\mathcal{R}\times V$.
\end{mydef}

\subsection{Static Embedding Methods}
As most downstream machine learning models expect numerical feature vectors as input, KGs cannot be directly used as a source of training data.
Therefore, it is common to embed the KG into a continuous vector space first, for which several embedding models have been proposed in recent years.
Also the link prediction task, which we focus on in this paper, has been successfully solved using these models.
Wang et al.~\cite{Survey} categorized two groups of embedding models: translational distance and semantic matching models.

\emph{Translational Distance (TD)} Models represent relations by a translation between participating entities; this translation is usually relation type specific.
Furthermore, they consider a triple to be likely true if the distance between a translated head entity and a potential tail entity is relatively small.
In this paper, we consider several translational models, namely \textsc{TransE}~\cite{TransE}, \textsc{TransH}~\cite{TransH}, and \textsc{TransD}~\cite{TransD}.
These methods differ in how they represent entities and how the translation is done.

\emph{Semantic Matching (SM)} Models like \textsc{DistMult}~\cite{DistMult}, \textsc{RESCAL}~\cite{RESCALpaper}, and ANALOGY \cite{Analogy} interpret the representation of entities and relations directly by regarding the latent semantics of the embedding space.
If two entities are in a relation with each other, then it is expected that their representations are correlated in a specific way.
We limit ourselves to these techniques, because they either have shown good overall performance (see \cite{pykeen}) or, in the case of \textsc{ANALOGY}, subsume other embedding models like \textsc{ComplEx} and \textsc{HolE} \forcameraready{Add references} as special cases.

The different methods are described in more detail in the supplement \cref{sup-embedding_models}.
Several more methods have been created to embed RDF graphs. Specifically, models like \textsc{RDF2Vec}~\cite{RDF2Vec} or \textsc{KGloVe}~\cite{RDFGloVe} perform or simulate random walks to extract the context of entities in the graph. These contexts are then used as the input of a language model.
Such methods do not train a link prediction objective and accordingly, the embeddings do also not perform well on that task.
Hence, they are out of scope for the current work.

\subsection{Training of Embedding Models}
To train embeddings and later evaluate them, a part of the triples in the graph is set aside to form the evaluation (also called test) set $T$.
The rest is split into the training set $S$ and the validation set $W$.
Then, computing embeddings of entities and relations is formulated as an optimization problem with regard to the training data.
There are two types of loss functions that are usually employed in combination with the considered methods.%
For translational models, we would typically use
a pairwise ranking loss
\begin{equation}\label{pairwise_ranking_loss}
	\underset{\Theta}{\mathrm{min}}\sum_{(h,r,t)\in S}\sum_{(h',r,t')\in S'} [\gamma-f_{r}(h,t)+f_{r}(h',t')]_+
\end{equation}
and for semantic matching models
the logistic loss
\begin{equation}\label{logistic_loss}
	\underset{\Theta}{\mathrm{min}}\sum_{(h,r,t)\in S\cup S'} \mathrm{log}(1+\mathrm{exp}(-y_{(h,r,t)}\cdot f_{r}(h,t)))
\end{equation}
to find the embedding $\Theta$. 

Both minimization problems use the Stochastic Local Closed World Assumption\cite{pykeen} to account for non-existing triples.
As a source of negative evidence, we sample from
\begin{equation}\label{corruptedTriples}
	S'_{(h,r,t)}=(\{(h',r,t)|h'\in V\} \cup \{(h,r,t')|t'\in V\})\setminus S %
\end{equation}
where $h'$ and $t'$ are chosen such that the newly formed triples are not in the training set\footnote{We do not employ specific hard negative sampling strategies in this work.}.
As a result, the embedding is trained to not only reproduce the positive triples in the KG but also to explicitly not mistake negative triples as positive ones.

The remaining symbols in these optimization problems can be understood as follows:
$y_{(h,r,t)}$ is $1$ if $(h,r,t)$ holds and $-1$ otherwise. 
$\gamma>0$ is a margin hyperparameter, $[x]_+ = \mathrm{max}(0, x)$ 
and $f_{r}$ the model dependent scoring function for the relation $r$ (see also~\cite{Survey}).
Intuitively, the scoring functions $f_r:V\times V \rightarrow\mathbb{R}$ give an indication on how plausible it is that $h$ is related to $t$ through relation $r$.
Hence, a higher score should be obtained if the triple is in the graph, compared to when it is not.

Finding a solution for the optimization problem is done numerically by performing stochastic gradient descent.
We start with initializing the embeddings randomly. 
Then, in each epoch, the training set is split into batches and combined with a set of negative triples.
To avoid overfitting, we evaluate the performance of the embeddings on the validation set after every $valid\_steps$ epochs, and apply early stopping if the quality does not improve for a fixed number of epochs.

Note that for several methods, there are trivial, undesired solutions to the optimization problems.
One such example is making the embedding vectors larger to reduce the loss.
To avoid these solutions, we add a soft constraint to the loss function.
Further specifics on the optimization and soft constraints can be found in the supplementary material in \cref{sup-embedding_models}.

\section{Dynamic Knowledge Graphs}\label{sec:dynamic_kgs}
\rewritten
\done
Knowledge Graphs as formalized in Definition~\ref{KG_def} are static, i.e., they represent the knowledge at a specific point in time.
However, in practice there are many applications in which the KG is evolving.
In this section, we discuss two broad use cases and define dynamic Knowledge Graphs. 

\subsection{Use Cases}\label{sec:use_cases}
One category of evolving KGs are large-scale knowledge bases that are incrementally populated by 
extracting knowledge from external sources.
Examples include DBpedia~\cite{DBpedia}, extracted from structured content on Wikipedia, or Google's Knowledge Vault~\cite{kv}, which leverages multiple extraction techniques for different forms of structured and unstructured data in order to populate the graph~\cite{GKG}.
Another similar approach is the Never-Ending Language Learner (NELL)~\cite{NEL}, whose dynamic KG we use for our evaluation.
Besides the addition of newly extracted triples, the often noisy generation of such a large-scale KG involves the deletion of manually or automatically detected false positives.

As the second use case of dynamic KGs, we consider data streams, like these coming from sensors.
These sensors include social sensing, as described in the Semantic Sensor Network Ontology\footnote{See also \url{https://www.w3.org/TR/vocab-ssn/}}; 
different kinds of user interactions can be represented as directed labeled edges in an evolving KG.
Therefore, machine learning models using data from such a KG can benefit from all relationships simultaneously.
With two datasets from the MathOverflow forum~\cite{Mathoverflow} and the Twitter social network~\cite{Higgs}, we also cover this use case in our final evaluation.

In practice, almost every KG has some kind of evolution because of initial incompleteness.
Moreover, in most cases this evolution is smooth with small changes at a time rather than drastic modifications of large subgraphs.
Therefore, it is desirable to also have corresponding smoothly evolving embeddings.

\subsection{Definition of Dynamic Knowledge Graphs}

Before we propose a method to update the embeddings of dynamic Knowledge Graphs, we first formalize them. %
To provide applicability for as many use cases as possible, we keep the definitions general and also assume a dynamic ontology, i.e., the set of relations can change over time.
\begin{mydef}
	A dynamic Knowledge Graph at time step $t$, $\mathcal{G}^{(t)}=(V^{(t)}, E^{(t)})$, with a set of relations $\mathcal{R}^{(t)}$ is defined as a pair of:
	\begin{itemize}
		\item vertices $V^{(t)}$ at time step t and
		\item edges $E^{(t)}\subseteq V^{(t)}\times \mathcal{R}^{(t)} \times V^{(t)}$ at time step t.
	\end{itemize}
\end{mydef}
Hence, we observe different states (also called snapshots) of the same KG at discrete time steps.
This discretization is necessary in order to consider the differences between two consecutive time steps, which we define as follows:
\begin{mydef}\label{definition_change}
	The change of the dynamic Knowledge Graph $\mathcal{G}^{(t+1)}=(V^{(t+1)}, E^{(t+1)})$ over the set of relations $\mathcal{R}^{(t+1)}$ at time step t+1 compared to $\mathcal{G}^{(t)}=(V^{(t)}, E^{(t)})$ over $\mathcal{R}^{(t)}$ at time step t is defined by:
	\begin{itemize}
		\item the added $\Delta V^{+}:=V^{(t+1)}\setminus V^{(t)}$ and deleted $\Delta V^{-}:=V^{(t)}\setminus V^{(t+1)}$ vertices,
		\item the added $\Delta E^{+}:=E^{(t+1)}\setminus E^{(t)}$ and deleted $\Delta E^{-}:=E^{(t)}\setminus E^{(t+1)}$ of edges,
		\item the added $\Delta \mathcal{R}^{+}:=\mathcal{R}^{(t+1)}\setminus\mathcal{R}^{(t)}$ and deleted $\Delta \mathcal{R}^{-}:=\mathcal{R}^{(t)}\setminus\mathcal{R}^{(t+1)}$ relations.

	\end{itemize}
\end{mydef}
After the observation of a new snapshot, these sets form the input of our later update procedures.

\section{Maintaining Embeddings for a Dynamic KG}\label{sec:maintaining_embeddings}%
\rewritten
\done

In this work, our goal is to update an initial KG embedding according to the graph's changes over time.
We first discuss the properties that we expect from this update. 
Since our approach was developed in an incremental fashion, we will provide intermediate validation results.
These results will show assets and drawbacks supporting our next steps.
For this, we already briefly specify the used metrics, our complexity analysis setup, and an unbiased benchmark dataset.

\subsection{Desirable Properties}
We distinguish four properties one might want an update algorithm for embeddings to have:
good quality and stability of the embeddings, a low time complexity of the algorithm, and scalability to larger graphs.
The main focus of the results included in this work is on the quality of the embeddings. 
Detailed results for stability as well as complexity and scalability are included in the supplementary material.

\subsubsection{Quality}
To measure the quality of embeddings, we use link prediction as the standard evaluation task for embedding models in previous work~\cite{Survey}.
More precisely, we perform entity prediction such that the embedding has to enable the reproduction of the correct head or tail entity given the remaining entries of a triple.
For this, we employ the following three widely used metrics:
\begin{itemize}
    \item \textbf{Mean Rank (MR)} - the average rank of the ground truth entity,
    \item \textbf{Mean Reciprocal Rank (MRR)} - the arithmetic mean of all inverse ranks
    \item \textbf{Hits@10} - the proportion of triples for which the ground truth entity is in the top 10 predictions
\end{itemize}
All of these metrics are applied in a filtered setting ignoring other correct predictions than the ground truth, as there can be multiple triples in the KG differing in only one entity.
Our updated embeddings are said to have good quality in case their link prediction performance is close to the one obtained from complete recalculations of the embedding on the updated graph. %

\subsubsection{Stability}
It is reasonable to assume that the simultaneous training of downstream models on an evolving embedding requires some form of input stability.
If the update steps would induce drastic changes to the embedding, it is unlikely that the machine learning model is still able to recognize previously learned patterns.
Therefore, we consider an embedding to be stable, if its vector representations do not change significantly between consecutive time steps.
To our knowledge, there are no techniques to specifically measure this notion of embedding stability so far.
Hence, we provide the definition of an own stability metric together with corresponding experimental results in the supplement \cref{sup-sec:stability_metric,sup-sec:stability_results}.

\subsubsection{Time Complexity}
In order to compare the computing time of our update procedure with plain recalculations of the entire embedding, we provide a time complexity analysis in the supplementary material in \cref{sup-sec:time_complexity_analysis,sup-sec:time_complexity_local_optmimum,sup-sec:time_complexity_online_method}.
Since our approach can be used in combination with several embedding models, each inducing a different time complexity, we keep our analysis general by counting the number of evaluations of the model-specific scoring function as well as the number of gradient steps during training.
Overall, the complexity analysis shows that our approach scales well in the size of the changes.

\subsubsection{Scalability}
If a graph is small, then retraining the complete embedding from scratch is something which could be done every time the graph gets updated.
So, the methods used to update embeddings should not focus on small graphs, but rather be scalable to larger ones.
The time complexity already gives an indication that our approach scales well.
The Higgs dataset (description in \cref{sec:evaluation_on_real_datasets}) is used for the scalability experiments included in the supplementary material \cref{sup-results_scalability} and supports that claim in practice.

\subsection{Dynamic Training, Validation and Test Sets}\label{sec:dynamic_train_valid_test}
The usual training of an embedding and our later update procedure have in common that they are supervised learning tasks involving separate training, validation, and test sets.
Since these sets are usually considered to be static like the KG itself, we have to adjust them to the scenario of dynamic KGs.
Therefore, we introduce a dynamic training set $S^{(t)}$, a dynamic validation set $W^{(t)}$, and a dynamic test set $T^{(t)}$ with $S^{(t)}\mathbin{\dot{\cup}}W^{(t)}\mathbin{\dot{\cup}}T^{(t)}=E^{(t)}$.
The probabilities that a new triple is added to one of these sets are given by the split proportions of the initial data in order to approximately maintain the relative sizes.
However, in the case of adding a temporarily deleted triple, it will always be assigned to the same set again such that the training, validation, and test sets are disjunct at any point in time, and no test or validation set leakage occurs.
Finally, we split the change of edges according to definition~\ref{definition_change} into $\Delta E_{S}^{+}$, $\Delta E_{W}^{+}$, and $\Delta E_{T}^{+}$ with $\Delta E_S^{+}\mathbin{\dot{\cup}}\Delta E_{W}^{+}\mathbin{\dot{\cup}}\Delta E_{T}^{+}=\Delta E^{+}$ for the additions and $\Delta E_{S}^{-}$, $\Delta E_{W}^{-}$, and $\Delta E_{T}^{-}$ with $\Delta E_{S}^{-}\mathbin{\dot{\cup}}\Delta E_{W}^{-}\mathbin{\dot{\cup}}\Delta E_{T}^{-}=\Delta E^{-}$ for the deletions.
\Cref{fig:dynamic_train_valid_test} gives an overview of the complete partitioning for two consecutive snapshots of a dynamic KG.
\begin{figure}[!h]
	\centering
	\includegraphics[width=0.75\textwidth]{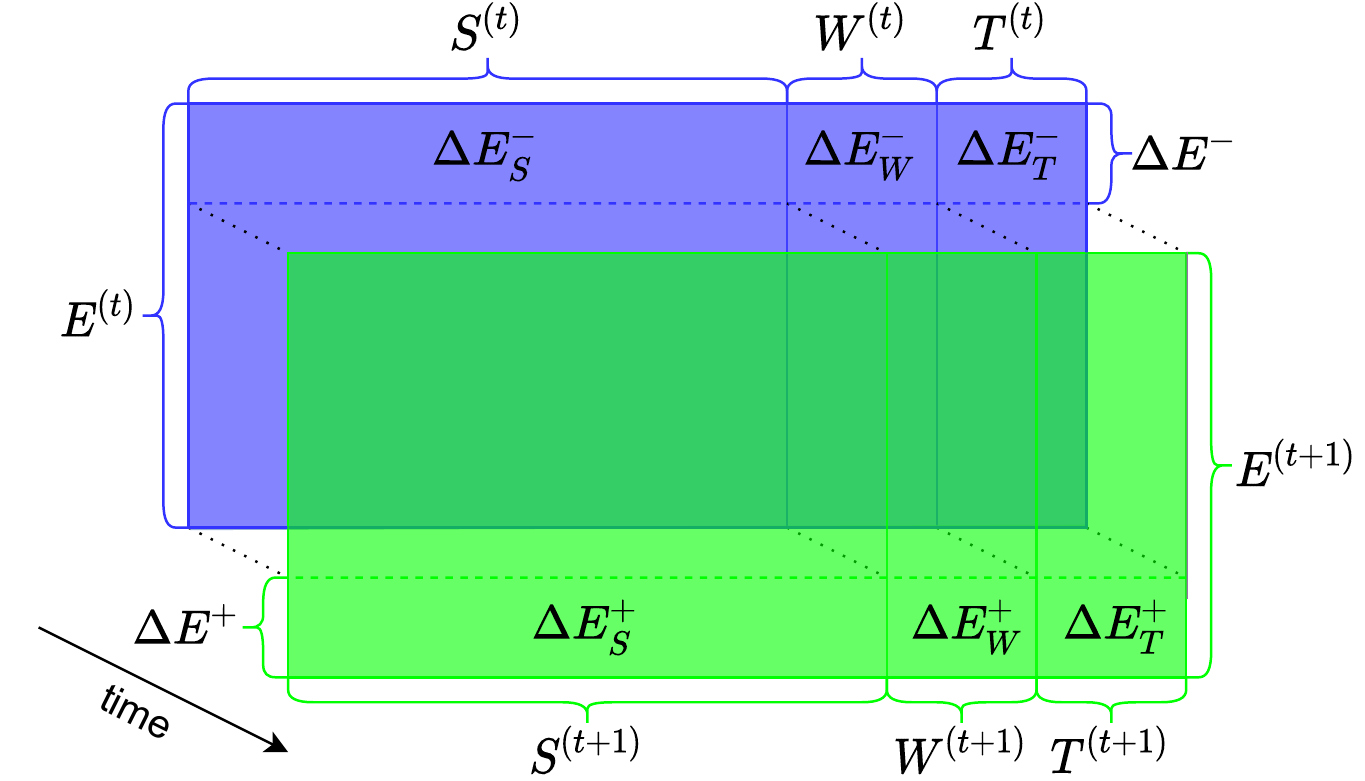}
	\caption[Visualization of the data partitioning into dynamic training, validation, and test sets]{Partitioning of the edges at time steps $t$ (blue) and $t+1$ (green). Considering the complete width, the sets of additions and deletions are shown on the left and the right side, respectively. Above and below, we can see the partitioning into dynamic training, validation, and test sets. 
	The non-overlapping boxes indicate the final split of additions and deletions into parts of training, validation, and test sets.}\label{fig:dynamic_train_valid_test}
\end{figure}

\subsection{Synthetic Dataset}

During the design of our approach, we decided to generate an artificial dynamic KG based on the widely used benchmark dataset FB15K.
We generated twenty snapshots by randomly sampling triples consisting of likewise sampled entities and relations.
This way, we obtain an unbiased dynamic KG independent of certain use cases with additions and deletions of entities, relations, and triples.
The exact generation method as well as detailed statistics about the change of this synthetic dynamic KG are given the supplementary material.
Note that this dataset was only used during the design stage. The evaluation in \cref{sec:evaluation_on_real_datasets} is based on real datasets.

\section{Our Approach to Update Embeddings}\label{sec:approach}
\rewritten
\done
As the change of a dynamic KG involves additions of new entities $\Delta V^{+}$ and relations $\Delta\mathcal{R}^{+}$, for which the outdated embedding does not contain any vector representations, we have to find an appropriate initialization for them.
This is the first part of our embedding updates.

\subsection{Initializing New Entities and Relations}
Usually, KG embeddings are initialized randomly.
In the case of dynamic KGs, we do not have to start from scratch but can leverage the current state of the embedding in order to find best fitting representations of added entities and relations (or elements for short).
To be more precise, for each new element $e$ we first need to determine the sets of informative triples $I(e)$, i.e., triples that only include the respective element as a non-embedded entry.
However, uninitialized entities can have (possibly uninitialized) relations between them.
In general, we do allow additions and deletions of whole subgraphs, which is likely to be a common case in practice, e.g., in interaction graphs.
Therefore, an appropriate initialization order is required.

Besides the set of informative triples for each new element $e$, we also determine their sets of uninformative triples $U(e)$, i.e., triples that include at least one additional non-embedded entry.
If we initialize an element, we decide to use its uninformative triples as informative triples for other added elements afterwards.
With this, an intuitive approach would be to initialize elements with many informative and few uninformative triples earlier than elements with only few informative and many uninformative ones.
We do so by using the number of informative triples divided by the number of uninformative triples as the priority of each element in a priority queue.
This way, we treat elements independently of the absolute numbers of informative and uninformative triples.
However, there are also other possible options for the priority, e.g., only the number of informative triples.
Finally, after each initialization, we update the sets and therefore the priorities for each element.

Since the informative triples for an entity $v$ differ in whether it is the head or the tail, we further split them into incoming $I(v)$ and outgoing ones $O(v)$.
While this initialization order will remain the same throughout this paper, we propose different methods for the actual initialization step in the following.
All of them rely on the sets $I(r)$ for a new relation as well as $I(v)$ and $O(v)$ for a new entity.

\subsubsection{Global Optima for Positive Triples}\label{sec:global_optima}
Regarding the scoring function, comparatively simple embedding models allow the direct computation of the best fitting initializations based on informative triples.
As an example, we consider \textsc{TransE}~\cite{TransE} in the following and provide a similar approach for \textsc{TransH}~\cite{TransH} in the supplementary material in \cref{sup-sec:global_optimum_transH}.

Since \textsc{TransE} models the existence of a triple $(h,r,t)$ in the KG by requiring the constraint $\vec{h}+\vec{r}\approx\vec{t}$ for the corresponding embeddings, it is easy to derive the desired embedding of one entry given the vector representations of the other two.
The original embedding model employs a squared error in order to enforce these constraints.
Therefore, the optimal initialization for a new relation $r$ can be computed as the arithmetic mean over the desired embeddings for the correct representation of each informative triple in $I(r)$:
\begin{equation}\label{transE_relations_positives}
	\vec{r}=\frac{1}{|I(r)|}\left(\sum_{(h,r,t)\in I(r)}\vec{t}-\vec{h}\right)
\end{equation}
For the vector initialization of a new entity $v$, we have to treat incoming $I(v)$ and outgoing  edges $O(v)$ differently in order to fulfill the constraints of the form $\vec{h}+\vec{r}\approx\vec{v}$ or $\vec{v}+\vec{r}\approx\vec{t}$, respectively.
The solution is given in \cref{transE_entities_positives}.
\begin{equation}\label{transE_entities_positives}
	\vec{v}=\frac{1}{|I(v)|+|O(v)|}\left[\left(\sum_{(h,r,v)\in I(v)}\vec{h}+\vec{r}\right)+\left(\sum_{(v,r,t)\in O(v)}\vec{t}-\vec{r}\right)\right]
\end{equation}
We validate the link prediction performance for this initialization (pos) on the synthetic dynamic KG of FB15K against a random initialization (ran) and simple averages over all entities or relations (ave) as baselines.
The results for the MR and Hits@10 metrics are shown in \cref{TransE_init}.

\begin{figure}
	\includegraphics[width=\textwidth]{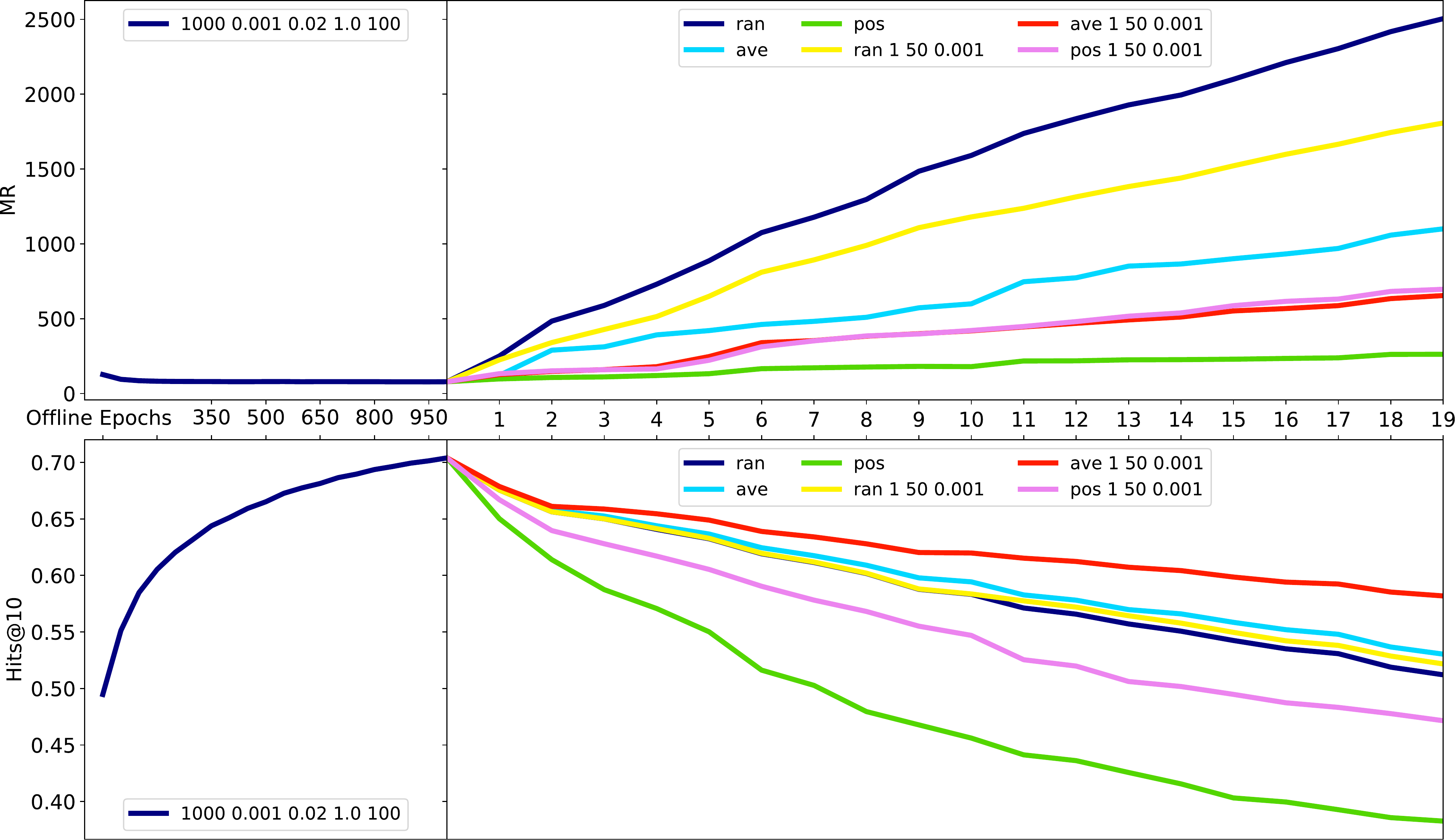}
	\caption[Link prediction performance in the MR (top) and Hits@10 metric (bottom) of all presented initialization approaches for \textsc{TransE}]{Link prediction performance in the MR (top) and Hits@10 metric (bottom) of all presented initialization approaches for \textsc{TransE}. The test scores during the training of the initial embedding are given on the left side, while the scores during the evolution of the dynamic KG are shown on the right. This general structure is used for all similar diagrams throughout this paper. We provide the detailed parameter settings for the initial embedding in the supplementary material.
    The method for settings with additional numeric parameters are explained in the following paragraphs.
} \label{TransE_init}
\end{figure}

Since the random initialization performs consistently worse than the one using averages, we only consider the latter baseline in the following.%
Our approach is able to keep the MR significantly lower than this baseline.
This is an expected behaviour, because the MR is sensitive to outliers introduced by an initialization with the average for the newly added entities and relations.

However, the comparison in terms of the Hits@10 metric indicate converse results.
Using our approach, we loose the initial prediction accuracy much faster than with the baseline initialization.
As indicated by the fast increase of the MR, the new entity and relation embeddings from the average initialization are so far away from most other vector representations that they do not affect their Hits@10 negatively.
In contrast to that, the initialization with the global optima for the positive evidence introduces new entity embeddings, which are likely to be mistaken for correct predictions.
Overall, this approach seems to distribute the loss in link prediction performance over all vector representations rather than concentrating it on the newly added ones, as it is the case for the baseline.

\subsubsection{Incorporating Negative Evidence}\label{sec:negative_evidence}
The training of the initial embedding incorporates negative samples. Hence, we experimented with different approaches to do so in the initialization as well, in order to improve the Hits@10 results.
While informative triples from the KG determine desired vector representations of new entities and relations, negative samples can indicate undesired positions in the embedding space.
To avoid wrong link predictions, new embedding vectors should be initialized ``far away'' from these positions.
However, such constraints cannot be enforced in a direct way.
Further discussion, an experiment on this, and experiments with an iterative approach, none of which could satisfactory solve the problem, can be found in the supplementary material in \cref{sup-sec:negative_evidence}.

Negative sampling introduces (soft) constraints preventing a globally optimal analytical solution to the initialization problem.
Therefore, the intuitive approach is to make use of numerical optimization, which we discuss in the next subsection.

\subsubsection{Moving Towards Local Optima}\label{sec:local_optima}
In order to apply gradient descent for the initialization, we have to start with some vector representations of the new entities and relations.
For this, we simply reuse our initial approach (pos) as well as the baseline (ave).
During the initialization of a certain element, we fix the embeddings of all other ones (indicated by underlined variables) to avoid spreading the error of our initial vector representation.
Therefore, this form of initialization can be understood as a pre-training for all added entities and relations.
For the initialization of a relation $r$, the desired number of negative samples per positive triple $n$, and an embedding model employing the pairwise ranking loss (similarly for the logistic loss), we minimize 
\begin{equation}\label{eq:train_init_rel}
	\sum_{(h,r,t)\in I(r)} \left[\mathrm{max}\left(0, \gamma -f_r(\underline{h},\underline{t})+\frac{1}{n}\sum_{(h',r,t')\in N_{(h,r,t)}}f_r(\underline{h'},\underline{t'})\right)\right]
\end{equation}
where $N_{(h,r,t)}\subseteq S'_{(h,r,t)}$ is a subset of the corrupted triples for $(h,r,t)$ from \cref{corruptedTriples} with $|N_{(h,r,t)}|=n$.
Similarly, an entity $v$ is initialized by minimizing the loss given in \cref{eq:train_init_ent}, where $I'_{(h,r,v)}$ and $O'_{(v,r,t)}$ are defined analogously to $N_{(h,r,t)}$ by only corrupting the head $h$ or the tail $t$, respectively.
\begin{align}\label{eq:train_init_ent}
	\begin{split}
		&\sum_{(h,r,v)\in I(v)} \left[\mathrm{max}\left(0, \gamma -f_{\underline{r}}(\underline{h},v)+\frac{1}{n}\sum_{(h',r,v)\in I'_{(h,r,v)}}f_{\underline{r}}(\underline{h'},v)\right)\right] \\
		+&\sum_{(v,r,t)\in O(v)} \left[\mathrm{max}\left(0, \gamma -f_{\underline{r}}(v,\underline{t})+\frac{1}{n}\sum_{(v,r,t')\in O'_{(v,r,t)}}f_{\underline{r}}(v,\underline{t'})\right)\right]
	\end{split}
\end{align}
Note that this general initialization can be used in combination with any specific embedding model, if we use the baseline average pre-initialization.

The validation results of our additional pre-training are also given in \cref{TransE_init}, indicated by the appended numbers of negative samples, epochs, and the learning rate.
We provide the technical details in the supplementary material.
While the training improves the Hits@10 results of our initial approach, it increases the MR significantly over time.
In contrast to that, the average pre-initialization benefits from the gradient descent epochs regarding both metrics.
Overall, this configuration achieved the best initialization results on our synthetic KG.

So far, we have only considered how to initialize embeddings for new entities and relations.
To complete our update procedure, we look into refreshing the old embeddings based on added and deleted triples.

\subsection{Refreshing Old Embeddings}\label{sec:updating_old_embeddings}
For the update of an old embedding according to the change of triples in the KG, we propose to bias the usual training procedure towards these additions and deletions.
Therefore, we introduce the concept of change specific epochs.

\subsubsection{Change Specific Epochs}\label{sec:change_specific_epochs}
The main difference between change specific epochs and the general ones used in the training of the initial embedding is the restriction to the added and deleted triples.
For the set of added edges $\Delta E_S^{+}$, we adopt the sampling of a corrupted triple from the usual training procedure.
However, it is not sufficient to only embed the new positive evidence. 
We have to explicitly consider the deletions as well.
Otherwise, removed triples could be mistaken for still being part of the KG.
Therefore, we first restrict the set of deleted triples $\Delta E_S^{-}$ to the subset $\{(h,r,t)\in\Delta E_S^{-} | r\notin\mathcal{R}^{-} \land h,t\notin V^{-}\}$, because removed edges induced by deletions of entities and relations cannot be predicted to be contained in the KG.
Similar to the sampling of corrupted triples, we sample a corrected triple for each removed edge $(h',r,t')$ from the set:
\begin{equation}\label{correctTriples}
	S^{(t+1)}_{(h',r,t')}=\{(h',r,t)\in S^{(t+1)}\}\cup\{(h,r,t')\in S^{(t+1)}\},
\end{equation}
If this set is empty for our negative triple, we instead directly sample from the training set $S^{(t+1)}$.
Full details of the procedure can be found in algorithm \ref{sup-alg:changespecs} in the supplementary material.

\subsubsection{Complete Update Procedure}
For a complete update, we first run our initialization for new entities and relations including the pre-training before refreshing the old embeddings.
Moreover, we carefully mix the introduced change specific epochs with general ones running over the entire training set.
We do this because otherwise the bias towards the change of the dynamic KG becomes too strong.

\section{Evaluation on Real Datasets}\label{sec:evaluation_on_real_datasets}
\rewritten
\done

For the evaluation of our approach, we have selected three real-world dynamic KGs covering the use cases from \cref{sec:use_cases}.
After a brief introduction of these datasets, we describe our final experimental setup and findings.

\subsection{Datasets}

In order to obtain a temporal sequence of snapshots, we apply the same procedure to all three datasets.
After sorting all triples by time stamp, we use a sliding window 
which always contains half of the dataset
and a stride 
of size $5\%$ 
to create twenty equal-sized snapshots.
As a result, the step from one snapshot to the next one involves the deletion of its oldest $10\%$ triples and the addition of as many new ones.
Moreover, the first and the final state of the dynamic KG are not overlapping.
Finally, each snapshot is partitioned into $90\%$ training, $5\%$ validation, and $5\%$ test data, avoiding test set leakage, as described in \cref{sec:dynamic_train_valid_test}.
We apply this procedure on three graphs containing additional time information.

\paragraph{NELL} For the use cases of incrementally generated knowledge bases, we consider the high-confidence triples\footnote{http://rtw.ml.cmu.edu/rtw/resources} extracted by the Never-Ending Language Learner (NELL)~\cite{NEL}.
We only use entities and relations occurring in at least five triples, because otherwise the KG would be too sparse for a meaningful embedding.

\paragraph{MathOverflow}The second dataset\footnote{https://snap.stanford.edu/data/sx-mathoverflow.html} contains three types of interactions between users from the MathOverflow forum (see the Stack Exchange Data Dump~\cite{Mathoverflow}).
This is an interaction graph described in the group of use cases above.

\paragraph{Higgs}In order to evaluate the scalability of our approach, we used the large-scale Higgs Twitter dataset\footnote{https://snap.stanford.edu/data/higgs-twitter.html}.
Its dynamic KG represents four types of interactions between Twitter users related to the discovery of the elusive Higgs boson.
The preprocessing details and the complete scalability results are in the supplement.

\subsection{Experimental Setup}
For the final evaluation, we run the complete update procedure with all six different embedding models on the datasets and draw observations from the behavior over time.
As a baseline for the link prediction performance, we recalculated an embedding for each snapshot from scratch (\emph{Recalc} in the figures).
This time, we only consider the Mean Reciprocal Rank (MRR) as a compromise between MR and Hits@10.

The goal of this paper is not to achieve optimal link prediction performance via an extensive hyperparameter search, but to maintain the embedding quality over the evolution of the KG.
Therefore, we adopt the general configurations used by Han et al.~\cite{openke} on the static FB15K dataset %
and refer to the work of Ali et al.~\cite{pykeen} for more dataset specific training settings.

In order to find a good combination of change specific and general epochs, we conducted a small grid search on our synthetic dynamic KG using \textsc{TransE} and \textsc{DistMult} as representatives for translational distance and semantic matching models, respectively.

\subsection{Results and Observations}

A first important observation is that the replacement of general epochs with change specific ones does only improve the embedding quality over time if we use a smaller learning rate for these epochs.
Moreover, in our best configuration, the ratio of change specific epochs ($20$) to general ones ($180$) is rather small.
These findings support our recommendation to use change specific epochs carefully.
We further observed that a good (general) learning rate during the updates is about one fifth of the learning rate used during the training of the initial embedding.
We provide the details about our parameter tuning together with a mapping from configurations (graph names) to hyperparameters in \cref{sup-sec:experimental_setup} of the supplementary material.
The results for the translational distance and semantic matching models on the dynamic KGs of NELL and MathOverflow are given in \cref{TD_NELL_MathOverflow_MRR,SM_NELL_MathOverflow_MRR}, respectively.

\paragraph{NELL} 
For the NELL dataset, our updates for \textsc{TransE} and \textsc{TransH} maintain the same embedding quality than simple recalculations.
The worse performance of \textsc{TransD} might indicate overfitting for entities with a small connectivity.
This claim is further supported by our observation of a substantially larger MR, whereas the Hits@10 score is similar for all three models.
Since our approach relies on the initial embedding, it is able to maintain its quality but not match the performance of recalculations using \textsc{TransD}.

Similar behaviors can be observed for the three semantic matching models.
While our approach is able to successfully maintain the performance of \textsc{DistMult} and \textsc{Analogy}, the recalculated embeddings using \textsc{RESCAL} outperform the updated ones significantly.%

\paragraph{MathOverflow} 
For the dynamic KG of MathOverflow, all three translational distance models achieve comparable scores.
The updates do not only keep the quality stable but follow the positive trend of the recalculated embeddings improving the MRR results over time.
However, for the semantic matching models the performance of the recalculated embeddings is improving faster and the updated ones are lagging behind.

\begin{figure}[p]
	\centering
	\centerline{\includegraphics[width=1.25\textwidth]{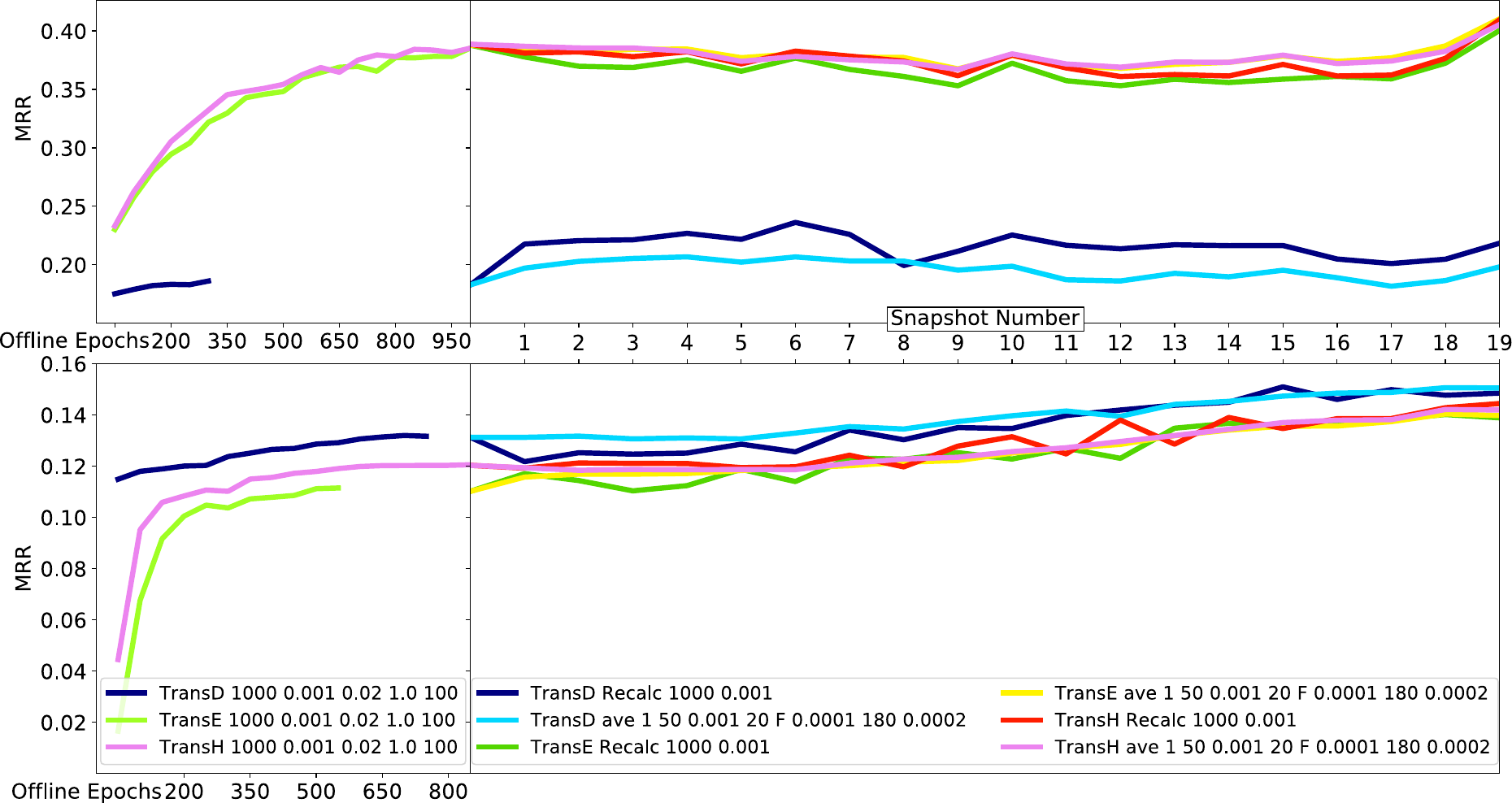}}
	\caption[Link prediction performance in the MRR metric for translational distance models on NELL (top) and MathOverflow (bottom)]{Link prediction performance in the MRR metric for translational distance models on NELL (top) and MathOverflow (bottom).	
For the initial embedding we use early stopping, hence not all methods use the same amount of epochs.
}\label{TD_NELL_MathOverflow_MRR}
\end{figure}
\begin{figure}[p]
	\centering
	\centerline{\includegraphics[width=1.25\textwidth]{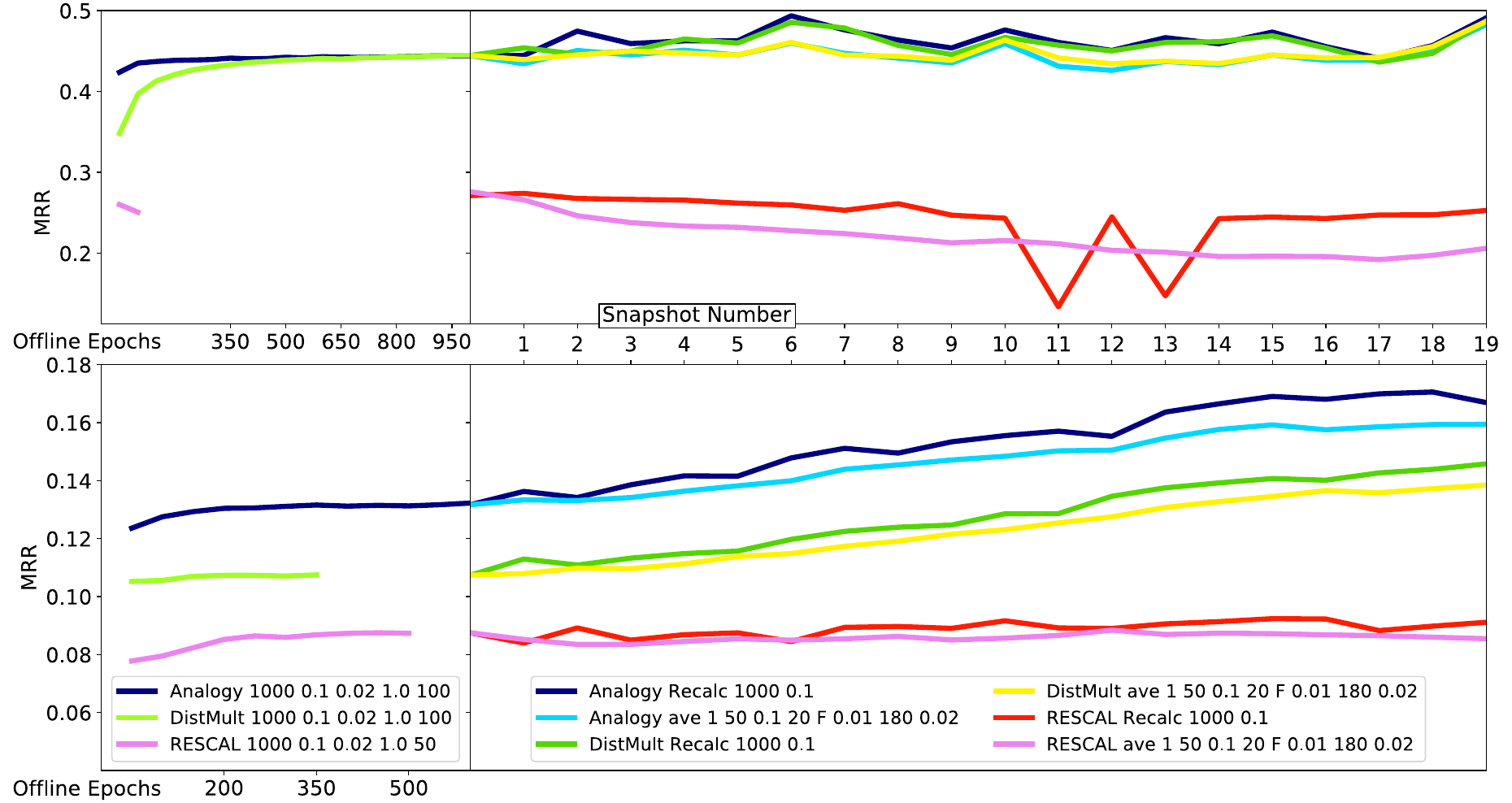}}
	\caption[Link prediction performance in the MRR metric for semantic matching models on NELL (top) and MathOverflow (bottom)]{Link prediction performance in the MRR metric for semantic matching models on NELL (top) and MathOverflow (bottom).
}\label{SM_NELL_MathOverflow_MRR}
\end{figure}

Overall, our update procedure is able to preserve the embedding quality over the evolution of real dynamic KGs, especially for translational distance models.

\todo[inline]{MC The scalability experiments could go to the supplement. These are the only ones that make use of the higgs dataset. so perhaps summarize here. }

\section{Conclusion and Outlook}
\rewritten
\forcameraready{CW Add standard acknowledgment sentence as requirement of using the RWTH Cluster in the Conclusion --> How to anonymize this?}
\todo[inline]{CW Add something about scalability although it is in the supplement? --- yes}

\todo[inline]{MC Mention that in a specific case, we could use dataset specific hyperparameter tuning to get most out of a specific dataset}

In this paper, we first extended the usual notion of static Knowledge Graphs to the much more realistic definition of dynamic Knowledge Graphs changing over time.
By allowing additions and deletions of entities, relations, and triples, our formal framework is general enough to cover many use cases in practice.
Based on these definitions, we proposed an update procedure to refresh outdated KG embeddings instead of recomputing them from scratch.
We first initialize added entities and relations and then mix the standard optimization strategy with epochs that solely train on new and removed triples.
We kept this approach general such that it can be used in combination with many translational distance and semantic matching embedding methods.
Our evaluation for six different methods on multiple real-world dynamic Knowledge Graphs showed that our approach is able to maintain the initial embedding quality over time.

We consider this work to be the first step towards enabling continuously updating embeddings and downstream models.
An initial indication that this is possible because of stability of the embeddings is part of the supplementary material, but substantial further investigation is needed.

\todo[inline]{MC This might need some further future directions.}

\forcameraready{Expand references if space permits}

\bibliographystyle{splncs04}
\bibliography{bibliography}

\end{document}


\maketitle

\begin{abstract}
This is the supplementary material for the paper `Updating Embeddings for Dynamic Knowledge Graphs'.
This supplement contains details on 
\begin{enumerate}
	\item The embedding methods used,
	\item the training and initialization algorithms for both static and dynamic knowledge graphs,
	\item a description on optimal positive initialization for TransH, the inclusion of negative evidence, and their results,
	\item the description of a more advanced iterative approach to incorporate positive and negative evidence.
	\item full detail on the synthetic dataset used for the preliminary experiments,
	\item full descriptions of the used datasets,
	\item full detail on the set of hyperparameters of the methods, 
	\item details on metrics for stability and their results,
	\item time complexity analysis of the algorithms, and
	\item results for an additional experiment demonstrating the scalability of the method.
\end{enumerate}

\end{abstract}

\tableofcontents

\section{Details on Embedding Models}\label{embedding_models}

\subsection{Translational Distance Models}

\subsubsection{TransE}
As the most representative translational distance model, \textsc{TransE}~\cite{TransE} learns one vector representation for each entity and one for each relationship.
Given a triple $(h,r,t)$ of head $h$, relation $r$ and tail $t$, \textsc{TransE} learns representations $\vec{h}, \vec{r}, \vec{t}\in\mathbb{R}^{d}$ with the constraints $\vec{h}+\vec{r}\approx\vec{t}$, if $(h,r,t)$ is in the KG, and $\vec{h}+\vec{r}$ should be far away from $\vec{t}$ otherwise.
Therefore, the scoring function of \textsc{TransE} is defined as
\begin{equation}
	f_{r}(h,t)= -||\vec{h}+\vec{r}-\vec{t}||_{1/2}, 
\end{equation}
where $||\vec{x}||_{1/2}$ denotes the $L_1$ or the $L_2$ norm of the vector $\vec{x}$.
It is a very straightforward model, as vertices are represented by points in the vector space with vectors instead of edges between them representing the relations.
However, in order to model the different relationships, every edge of a certain relation type in the KG is represented by the same vector yielding errors, which can be visualized as edges starting at the head entity but pointing not directly at the tail entity.
Training an embedding with \textsc{TransE} by minimizing the pairwise ranking loss from~\cref{main-pairwise_ranking_loss} in the main paper follows the intuition that this vector should point closer to the correct tail embedding than to another entity, which does not stand in this relation to the head.

A drawback of \textsc{TransE}'s simplicity is its lack of expressiveness, when it comes to reflexive, one-to-many, many-to-one and many-to-many relations~\cite{TransH}.
As an example for many-to-one relations, if we have the facts that Mark Zuckerberg, Andrew McCollum, Dustin Moskovitz, Eduardo Saverin and Chris Hughes are all founders of Facebook, their vector representations are constrained to be the same, since there is only one embedding for Facebook and one for the $founderOf$ relation.
Therefore, \textsc{TransE} has been extended by \textsc{TransH} to overcome those issues by improving the expressiveness while keeping its efficiency.

\subsubsection{TransH}
In order to improve \textsc{TransE}'s capacity regarding reflexive, 1-to-N, N-to-1 and N-to-N relationships, \textsc{TransH}~\cite{TransH} introduces relation-specific hyperplanes in the continuous vector space. 
While entities are still represented by a single vector, \textsc{TransH} models every relationship $r$ as a vector $\vec{r}$ on a hyperplane given by its normal vector $\vec{w}_r\in\mathbb{R}^d$.
Given a triple $(h,r,t)$, the embeddings of both entities are first projected onto the relation-specific hyperplane.
Hence, their projections can be calculated as follows:
\begin{equation}
	\vec{h}_\bot = \vec{h}-\vec{w}_r^\top\vec{h}\vec{w}_r
	\quad\mathrm{and}\quad
	\vec{t}_\bot = \vec{t}-\vec{w}_r^\top\vec{t}\vec{w}_r
\end{equation}
These projections are then required to fulfill the constraints of \textsc{TransE} leading to the following scoring function:
\begin{equation}
	f_r(h,t)=-||\vec{h}_\bot+\vec{r}-\vec{t}_\bot||_2^2
\end{equation}
Like \textsc{TransE}, \textsc{TransH} is trained by minimizing the pairwise ranking loss ~\cref{main-pairwise_ranking_loss} in the main paper.
However, in order to enforce that the embedding $\vec{r}$ of a relation $r\in\mathcal{R}$ is on its specific hyperplane, the soft constraint
\begin{equation}
	C\sum_{r\in\mathcal{R}}\left[\frac{(\vec{w}_r^\top\vec{r})^2}{||\vec{r}||_2^2}-\epsilon^2\right]_+
\end{equation}
is added to the loss function, where $C, \epsilon\in\mathbb{R}_{>0}$ are additional hyperparameters. 
As a result of introducing relation-specific hyperplanes and computing the projections of entity embeddings, \textsc{TransH} allows an entity to have different representations depending on the particular relationship. This characteristic helps in dealing with the flaws of \textsc{TransE}.
Going back to the example of a many-to-one relation in the last section about \textsc{TransE}, \textsc{TransH} only constrains the embeddings of the founders to have the same projection onto the hyperplane of the $founderOf$ relationship. Therefore, these vector representations can still differ from each other depending on the rest of the founders' relations.

In the chronological order of introduced translational distance models, \textsc{TransR}~\cite{TransR} was the next approach, which shares a similar idea with \textsc{TransH}, but introduces relation-specific spaces instead of hyperplanes. The intuition behind relation-specific spaces is that entities and relations are completely different objects and so they should be separated somehow. Because of its higher complexity, which makes it less scalable in terms of large KGs, we decided to skip this approach and include \textsc{TransD}~\cite{TransD}, which is a direct improvement of it.

\subsubsection{TransD}
\textsc{TransD} keeps the idea of relation-specific spaces by representing each entity and relation by two vectors, the first one capturing the meaning and the second one to construct projection matrices for pairs of entities and relations.
Given a triple $(h,r,t)$, \textsc{TransD} learns vectors $\vec{h}, \vec{h}_p, \vec{t}, \vec{t}_p\in\mathbb{R}^{d_V}$ and $\vec{r}, \vec{r}_p\in\mathbb{R}^{d_\mathcal{R}}$, where subscript $p$ marks the projection vectors and $d_V, d_\mathcal{R}\in\mathbb{N}$ are the dimensions of the entity or relation space, respectively.
With the help of the projection vectors, two mapping matrices $\vec{M}_{rh}, \vec{M}_{rt}\in\mathbb{R}^{d_{\mathcal{R}}\times d_{V}}$ can be defined as
\begin{equation}
	\vec{M}_{rh}=\vec{r}_p\vec{h}_p^\top+\vec{I}^{d_\mathcal{R}\times d_V}
	\quad\mathrm{and}\quad
	\vec{M}_{rt}=\vec{r}_p\vec{t}_p^\top+\vec{I}^{d_\mathcal{R}\times d_V}, 
\end{equation}
where $\vec{I}^{d_\mathcal{R}\times d_V}\in\mathbb{R}^{d_\mathcal{R}\times d_V}$ is the identity matrix, i.e. $\vec{I}^{d\times d}$ with $d=\mathrm{min}(d_{\mathcal{R}},d_{V})$ extended with zeros in the direction of the larger dimension.
Therefore, each pair of an entity and a relation has its own specific matrix, which is more reasonable than projecting each entity the same way as for \textsc{TransH} considering a certain relation.
For the example of $(\texttt{mark\_zuckerberg, founderOf, facebook})$, the head and the tail entity contain different types and attributes like being a human or a company, which could be more or less important for the $founderOf$ relation.
With the mapping matrices, the entity embeddings are projected as follows
\begin{equation}
	\vec{h}_\bot = \vec{M}_{rh}\vec{h}
	\quad\mathrm{and}\quad
	\vec{t}_\bot = \vec{M}_{rt}\vec{t}
\end{equation}
and normalized afterwards.
Finally, the scoring function is defined in the same manner as for \textsc{TransH}.
As for both other translational distance models, the KG embedding is trained by minimizing the pairwise ranking loss.

\subsection{Semantic Matching Models}

\subsubsection{DistMult}
\textsc{DistMult}~\cite{DistMult} is arguably the simplest semantic matching model.
Like \textsc{TransE}, \textsc{DistMult} represents each entity and relation by a single vector.
For a triple $(h,r,t)$, the scoring function is defined as
\begin{equation}
	f_r(h.t)=\vec{h}^\top\mathrm{diag}(\vec{r}) \vec{t} = \sum_{i=0}^{d-1}[\vec{r}]_{i}\cdot [\vec{h}]_{i}\cdot [\vec{t}]_{i}, 
\end{equation}
where $[\vec{x}]_{i}$ denotes the $i$-th entry of the vector $\vec{x}$.
The intuition is that the components of the relation embedding $\vec{r}$ describe, whether the corresponding entries in the entity embeddings $\vec{h}$ and $\vec{t}$ should match or not.
If $[\vec{r}]_{i}$ is positive, $[\vec{h}]_{i}$ and $[\vec{t}]_{i}$ should be quite similar such that the product of the entities' components and therefore the product of all three components is positive, whereas a negative entry in the relation representation constrains the entities to have different values (i.e. different signs) for that latent component.
However, since $\vec{h}^\top\mathrm{diag}(\vec{r}) \vec{t}=\vec{t}^\top\mathrm{diag}(\vec{r}) \vec{h}\ \forall\vec{h},\vec{r},\vec{t}\in\mathbb{R}^{d}$, \textsc{DistMult} models all relations as symmetric ones, which is an obvious drawback in the case of KGs.
\textsc{DistMult} is trained by optimizing the logistic loss.

\subsubsection{RESCAL}
\textsc{RESCAL}~\cite{RESCALslides} %
is a generalization of \textsc{DistMult} without the restriction of a diagonal matrix for the relations.
Therefore, each entity $v$ is again represented by a vector $\vec{v}\in\mathbb{R}^d$, whereas a relation $r$ is associated with a matrix $\vec{M}_{r}\in\mathbb{R}^{d\times d}$.
The scoring function for a triple $(h,r,t)$, which is plugged into the pairwise-ranking loss in the offline method, is then defined equally to the one of \textsc{DistMult}:
\begin{equation}
	f_r(h,t) = \vec{h}^\top\vec{M}_{r}\vec{t}=\sum_{i=0}^{d-1}\sum_{j=0}^{d-1}[\vec{M}_{r}]_{ij}\cdot [\vec{h}]_{i} \cdot [\vec{t}]_{j}
\end{equation}
On the one hand, \textsc{RESCAL} allows the modeling of non-symmetric relations, but on the other hand, it introduces $\mathcal{O}(d^2)$ parameters to learn for a single relation, whereas all previously described embedding models only require $\mathcal{O}(d)$ parameters.

\subsubsection{Analogy}
\textsc{Analogy}~\cite{Analogy} is an extension of \textsc{RESCAL} with the purpose to model analogical properties of entities and relations~\cite{Survey}.
A simple analogy is for example ``\texttt{mark\_zuckerberg} is to \texttt{facebook} as \texttt{bill\_gates} is to \texttt{microsoft}''.
If there are the same paths between the vertices for Mark Zuckerberg and facebook as between the ones for Bill Gates and Microsoft in a KG, it is likely to say that Mark Zuckerberg has a similar relation to facebook as Bill Gates to Microsoft.
Moreover, analogical structures rely on the compositional equivalence of relations, i.e. the exact order of relations on a certain path in the KG is irrelevant~\cite{Analogy}.
As relations are still represented by linear maps, this additional property requires some further constraints for the matrices.
While the scoring function is the same as for \textsc{RESCAL}
\begin{equation}
	f_r(h,t)=\vec{h}^\top\vec{M}_r\vec{t}
\end{equation}
with $\vec{h},\vec{t}\in\mathbb{R}^{d}$ and $\vec{M}_r\in\mathbb{R}^{d\times d}$, \textsc{Analogy} restricts the linear maps of relations to a normal and commutative family of matrices.
The normality and mutually commutativity are defined as follows:
\begin{equation}
	\begin{split}
		\mathrm{normality}: \vec{M}_r\vec{M}_r^\top &= \vec{M}_r^\top\vec{M}_r, \forall r\in\mathcal{R} \\
		\mathrm{commutativity}: \vec{M}_r\vec{M}_{r'} &= \vec{M}_{r'}\vec{M}_r, \forall r,r'\in\mathcal{R}
	\end{split}
\end{equation}
It has been shown that all normal matrices of a commutative family can be simultaneously block-diagonalized such that the resulting almost-diagonal matrices enable a low complexity of $\mathcal{O}(d)$ to compute the gradients for the optimization of the logistic loss.
Therefore, \textsc{Analogy} is computational cheaper than \textsc{RESCAL}.
Furthermore, \textsc{Analogy} subsumes \textsc{DistMult} as well as further embedding models \textsc{ComplEx}~\cite{ComplEx} and \textsc{HolE}~\cite{HolE} as special cases, which is the reason why we leave out the two latter ones in this paper.
\textsc{DistMult} is a restriction of \textsc{Analogy} to diagonal matrices, which are both normal and mutually commutative~\cite{Analogy}.
\textsc{ComplEx} employs a complex embedding space $\mathbb{C}^{d}$ and can be represented by \textsc{Analogy} with an embedding dimension $2d$ and a restriction of the relation matrices to a certain block-diagonal subgroup.
\textsc{HolE} incorporates circulant matrices for its relation representations, which again are normal and mutually commutative.

\subsection{Learning TDSM Offline}

The detailed training algorithm for offline models is given in algorithm \ref{TDSM_offline}.

\begin{algorithm}[!h]
	\caption{Learning TDSM (offline)} \label{TDSM_offline}
	\begin{algorithmic}[1]
		\REQUIRE Training set $S$, validation set $W$, entities $V$, relation set $\mathcal{R}$, margin $\gamma$, number of epochs $num\_epoch$, number of batches $num\_batch$, validation steps $valid\_steps$
		\STATE \textbf{initialize} $\vec{r} \leftarrow$ uniform($-\sqrt{\frac{6}{|\mathcal{R}|+d}}$, $\sqrt{\frac{6}{|\mathcal{R}|+d}}$) for each $r\in\mathcal{R}$
		\STATE \textcolor{white}{\textbf{initialize}} $\vec{v} \leftarrow$ uniform($-\sqrt{\frac{6}{|V|+d}}$, $\sqrt{\frac{6}{|V|+d}}$) for each entity $v\in V$
		\FORALL[$num\_epoch$ training steps]{$epoch=1, ..., num\_epoch$}
		\STATE $S \leftarrow$ shuffle($S$) \label{start_general}
		\STATE $batches \leftarrow$ divide($S, num\_batch$) \COMMENT{divide into $num\_batch$ batches}
		\FORALL{$S_{batch}\in batches$}
		\STATE $T_{batch}\leftarrow\emptyset$  \COMMENT{initialize the set of pairs of triples}
		\FORALL{$(h,r,t)\in S_{batch}$}
		\STATE $(h',r,t')\leftarrow$ sample($S_{(h,r,t)}^{\prime}$) \COMMENT{sample a corrupted triple (see \cref{main-corruptedTriples} in the main paper)}
		\STATE $T_{batch}\leftarrow T_{batch}\cup \{((h,r,t), (h',r,t'))\}$
		\ENDFOR
		\STATE Update embeddings w.r.t. \[\sum_{((h,r,t), (h',r,t'))\in T_{batch}}\nabla\left[\mathrm{log}(1+\mathrm{exp}(-f_r(h,t)))+\mathrm{log}(1+\mathrm{exp}(f_r(h',t')))\right]\] or \[\sum_{((h,r,t), (h',r,t'))\in T_{batch}}\nabla\left[\gamma - f_{r}(h,t) + f_{r}(h',t')\right]_+\]\label{TDSM_offline_optimization}
		\ENDFOR
		\IF{$valid\_steps \mid epoch$}
		\STATE validate($\Theta, W$) \COMMENT{validate every $valid\_steps$ epochs}
		\ENDIF
		\ENDFOR
	\end{algorithmic}
\end{algorithm}

\section{Maintaining Embeddings of a Dynamic KG}\label{chap:Evaluation_Tasks_and_Synthetic_Dynamic_Knowledge_Graphs}
In this section, we introduce further metrics and provide additional details on our evaluation setup.

\subsection{Triple Classification}
Another evaluation task to measure the quality of an embedding is triple classification, which is the binary classification task of deciding, whether a given triple $(h,r,t)$ is correct or not~\cite{TransD}. Since the validation and test sets do not contain any negative triples, they have to be constructed.
Therefore, the same setting as in~\cite{triple_classification} is used, but only the tail entity is replaced for each triple.
To judge, whether the given triple $(h,r,t)$ is correct or not, its score $f_r(h,t)$ is again calculated and the triple is considered to be correct, if and only if its score is larger than a certain relation-specific threshold $\delta_r$~\cite{TransD}.
As the only metric, we employ the average classification accuracy, i.e. the proportion of correct classified triples.
The thresholds $\delta_r\ \forall r\in\mathcal{R}$ are obtained by maximizing the classification accuracy for each relation on the validation set.

\subsection{Stability Metric}\label{sec:stability_metric}
Therefore, we define the Normalized Mean Change (NMC) for the entity representations (analogously for relations) of two consecutive embeddings $\Theta^{(t)}$ and $\Theta^{(t+1)}$ of a dynamic KG $\mathcal{G}$ as
\begin{equation}\label{NMC}
	NMC(t+1)=\frac{1}{dist_\varnothing}\sum_{v\in V_{sh}}\frac{\mathrm{dist}(\vec{v}^{(t+1)}, \vec{v}^{(t)})}{\sum_{w\in V_{sh}}\mathrm{dist}(\vec{v}^{(t+1)}, \vec{w}^{(t+1)})}
\end{equation}
where $V_{sh}=V^{(t+1)}\cap V^{(t)}$ is the set of shared entities at both time steps, $\vec{v}^{(t)}$ the embedding of entity $v$ at time step $t$ and \[dist_\varnothing=\frac{1}{|V_{sh}|}\sum_{(v, w)\in V_{sh}\times V_{sh}}\mathrm{dist}(\vec{v}^{(t+1)}, \vec{w}^{(t+1)})\] the average pairwise distance between shared entities with respect to the distance metric $\mathrm{dist}:\mathbb{R}^{d}\times \mathbb{R}^{d}\rightarrow\mathbb{R}$. 

The NMC describes an average relative distance between the vector representations before and after an update of the embedding.
Since these distances can vary significantly due to local clusters of semantically similar entities inside a KG embedding, we introduce a local and a global normalization factor.
For this, we first normalize the distances for each individual entity by the mean distance to all other entities after the update, before multiplying the global normalization factor $dist_\varnothing$ to enable comparisons between different embeddings.
\Cref{NMC} is already in reduced form, as the fractions $\frac{1}{|V_{sh}|}$ in the denominator of each summand and before the outer sum cancel each other out.
While the metric can be used in combination with any distance measure, we employ the standard Euclidean distance.

\subsection{Time Complexity Analysis}
\label{sec:time_complexity_analysis}
In order to check, whether our approach fulfills the third property, which we aim for, we need to analyze the time complexity.
Since the whole approach combines several algorithms, we do this step-wise and summarize the final time complexity of the online method in the end.
However, to provide comparability to the complexity of the offline method, the analysis has to be independent of the actual embedding method.
Therefore, we need to count a suitable type of operations.
As we ignore the intermediate validation steps as well as the sampling of corrupted triples, there are basically two operations, which determine the runtime of our system: evaluations of the scoring function and the backpropagations.
While the evaluation of the scoring function for a certain triple and a fixed embedding model always consumes the same time, the time for a backpropagation further depends on the number of parameters (or the size of the gradients), which is proportional to the number of entities and relations, whose vector representations are trained (except for relations in \textsc{RESCAL}).
We use the term gradient step for the update of one vector representation and the term backpropagation for a combination of many gradient steps in a parallel fashion.
Hence, we provide two worst-case complexities separately: one for the evaluations of the scoring functions and the other one for the gradient steps.
Besides the complexity, we also compare the runtime of our approach to simple recalculations for a large-scale dynamic KG in the extensive evaluation in \cref{sec:runtime}.

\subsection{Synthetic Dynamic Knowledge Graphs}
In order to validate our approaches and draw conclusions for the next improvements, we decided to create a synthetic dynamic KG for the purpose of a general and unbiased data set.
The idea is to have a general direction about the performance of our approaches independent of the particular use case.
Therefore, we use a static KG as the one of FB15K, which has been used as the benchmark for link prediction for all covered embedding models, and turn it into a dynamic one.
We generate twenty snapshots by sampling $99.5\%$ of all entities and relations and taking $95\%$ of all triples only consisting of these entities and relations for each single snapshot.
As a result, we obtain a dynamic KG with additions and deletions of entities, relations, subgraphs consisting of the new entities and relations as well as triples without a new entity or relation.
The detailed statistics about the differences between consecutive snapshots are given in table~\ref{FB15K_20_0.995rel_0.995ent_0.95}.
On average, every snapshot contains $582,995$ triples with $1,336$ relations and $14,874$ entities separated into $81.6\%$ training, $8.4\%$ validation and $10.0\%$ test data, as this is the distribution of the original FB15K benchmark.

\begin{table}[!h]
	\caption[Statistics of FB15K $20\_0.995\mathrm{rel}\_0.995\mathrm{ent}\_0.95$]{The cardinality of changes between two consecutive snapshots of the synthetic dynamic KG $20\_0.995\mathrm{rel}\_0.995\mathrm{ent}\_0.95$ on FB15K} \label{FB15K_20_0.995rel_0.995ent_0.95}
	\resizebox{\textwidth}{!}{
		\begin{tabular}{|c||cc|cc|cc|cc|cc|}
			\hline
			\textbf{Step} & $|\Delta E^{+}_{S}|$ & $|\Delta E^{-}_{S}|$ & $|\Delta E^{+}_{W}|$ & $|\Delta E^{-}_{W}|$ & $|\Delta E^{+}_{T}|$ & $|\Delta E^{-}_{T}|$ & $|\Delta\mathcal{R}^{+}|$ & $|\Delta\mathcal{R}^{-}|$ & $|\Delta V^{+}|$ & $|\Delta V^{-}|$ \\
			\hline
			\hline
			$0-1$ & $29{,}207$ & $34{,}805$ & $3{,}132$ & $3{,}580$ & $3{,}605$ & $4{,}207$ & $9$ & $7$ & $75$ & $76$ \\
			$1-2$ & $35{,}155$ & $26{,}969$ & $3{,}611$ & $2{,}775$ & $4{,}203$ & $3{,}350$ & $7$ & $9$ & $76$ & $76$ \\
			$2-3$ & $27{,}169$ & $29{,}400$ & $2{,}808$ & $2{,}983$ & $3{,}387$ & $3{,}621$ & $8$ & $6$ & $75$ & $76$ \\
			$3-4$ & $29{,}217$ & $29{,}271$ & $2{,}980$ & $2{,}996$ & $3{,}608$ & $3{,}710$ & $7$ & $8$ & $76$ & $75$ \\
			$4-5$ & $29{,}261$ & $31{,}725$ & $3{,}001$ & $3{,}266$ & $3{,}711$ & $3{,}996$ & $8$ & $7$ & $76$ & $76$ \\
			$5-6$ & $31{,}898$ & $27{,}680$ & $3{,}253$ & $2{,}859$ & $4{,}007$ & $3{,}411$ & $7$ & $9$ & $76$ & $76$ \\
			$6-7$ & $27{,}801$ & $28{,}714$ & $2{,}864$ & $2{,}969$ & $3{,}438$ & $3{,}545$ & $9$ & $8$ & $75$ & $75$ \\
			$7-8$ & $28{,}133$ & $32{,}230$ & $2{,}930$ & $3{,}311$ & $3{,}462$ & $3{,}846$ & $7$ & $8$ & $76$ & $79$ \\
			$8-9$ & $32{,}591$ & $27{,}528$ & $3{,}358$ & $2{,}758$ & $3{,}921$ & $3{,}299$ & $9$ & $10$ & $79$ & $77$ \\
			$9-10$ & $27{,}525$ & $32{,}000$ & $2{,}752$ & $3{,}195$ & $3{,}290$ & $3{,}857$ & $10$ & $7$ & $77$ & $77$ \\
			$10-11$ & $31{,}926$ & $27{,}350$ & $3{,}188$ & $2{,}834$ & $3{,}856$ & $3{,}357$ & $6$ & $8$ & $75$ & $79$ \\
			$11-12$ & $27{,}680$ & $29{,}573$ & $2{,}885$ & $3{,}084$ & $3{,}385$ & $3{,}516$ & $9$ & $8$ & $80$ & $76$ \\
			$12-13$ & $29{,}473$ & $27{,}819$ & $3{,}035$ & $2{,}797$ & $3{,}527$ & $3{,}403$ & $8$ & $9$ & $76$ & $74$ \\
			$13-14$ & $27{,}816$ & $28{,}070$ & $2{,}823$ & $2{,}926$ & $3{,}386$ & $3{,}406$ & $9$ & $7$ & $74$ & $77$ \\
			$14-15$ & $28{,}071$ & $29{,}570$ & $2{,}913$ & $3{,}044$ & $3{,}434$ & $3{,}593$ & $7$ & $7$ & $77$ & $76$ \\ 
			$15-16$ & $29{,}513$ & $27{,}403$ & $3{,}047$ & $2{,}745$ & $3{,}551$ & $3{,}369$ & $7$ & $13$ & $76$ & $74$ \\
			$16-17$ & $27{,}387$ & $30{,}110$ & $2{,}740$ & $3{,}203$ & $3{,}375$ & $3{,}754$ & $13$ & $7$ & $74$ & $74$ \\
			$17-18$ & $30{,}034$ & $29{,}194$ & $3{,}181$ & $3{,}015$ & $3{,}731$ & $3{,}617$ & $7$ & $12$ & $75$ & $76$ \\
			$18-19$ & $29{,}282$ & $26{,}526$ & $3{,}043$ & $2{,}769$ & $3{,}646$ & $3{,}198$ & $12$ & $9$ & $75$ & $77$ \\
			\hline
		\end{tabular}
	}
\end{table}

\newpage
\section{Our Approach to Update Embeddings}
In this section, we provide further details about our approach as well as more concepts for the initialization of new entity and relation embeddings.
\subsection{Initializing New Entities and Relations}
The pseudocode for our proposed initialization order is given in \cref{TDSM_init}.
\begin{algorithm}
	\caption{Initialization Order for TDSM} \label{TDSM_init}
	\begin{algorithmic}[1]
		\REQUIRE Training set $S^{(t+1)}\subset E^{(t+1)}$ at time $t+1$ and the changes $\Delta V^{+}$ and $\Delta \mathcal{R}^{+}$
		\STATE $Q\leftarrow\emptyset$\COMMENT{initialize the priority queue $Q$}
		\FORALL[insert all new relations into $Q$]{$r\in\Delta\mathcal{R}^{+}$}
		\STATE $I(r)\leftarrow\{(h,r,t)\in S^{(t+1)} | h,t\notin\Delta V^{+}\}$%
		\STATE $U(r)\leftarrow\{(h,r,t)\in S^{(t+1)} | h\in\Delta V^{+} \lor t\in\Delta V^{+}\}$%
		\STATE $\mathrm{insert}(Q,(r,\frac{|I(r)|}{|U(r)|+\epsilon}))$%
		\ENDFOR
		\FORALL[insert all new entities into $Q$]{$v\in\Delta V^{+}$}
		\STATE $I(v)\leftarrow \{(h,r,v)\in S^{(t+1)} | h\notin\Delta V^{+} \land r\notin\Delta\mathcal{R}^{+}\}$%
		\STATE $O(v)\leftarrow \{(v,r,t)\in S^{(t+1)} | r\notin\Delta\mathcal{R}^{+} \land t\notin\Delta V^{+}\}$%
		\STATE $U(v)\leftarrow \{(h,r,t)\in S^{(t+1)} | h=v \lor t=v\}\setminus(I(v)\cup O(v))$
		\STATE $\mathrm{insert}(Q,(v,\frac{|I(v)|+|O(v)|}{|U(v)|+\epsilon}))$%
		\ENDFOR
		\WHILE[initialize each element in $Q$]{$Q\neq\emptyset$}
		\STATE $x\leftarrow\mathrm{extractMax}(Q)$\COMMENT{take item with highest priority}\label{extractMin}
		\IF[$x$ is a relation]{$x\in\Delta\mathcal{R}^{+}$}
		\IF{$I(x)\neq\emptyset$}
		\STATE initRelation($x$, $I(x)$)\COMMENT{initialize $\vec{x}$}\label{TDSM_init_relation}
		\FORALL[update triple sets of entities in $Q$]{$v\in Q\cap\Delta V^{+}$}
		\STATE $I(v)\leftarrow I(v)\cup\{(h,x,v)\in S^{(t+1)} | h\notin Q\}$
		\STATE $O(v)\leftarrow O(v)\cup\{(v,x,t)\in S^{(t+1)} | t\notin Q\}$
		\ENDFOR
		\ENDIF
		\FORALL[update priority of entities in $Q$]{$v\in Q\cap\Delta V^{+}$}
		\STATE $U(v)\leftarrow U(v)\setminus (\{(h,x,v)\in S^{(t+1)} | h\notin Q\}\cup\{(v,x,t)\in S^{(t+1)} | t\notin Q\})$
		\STATE $\mathrm{updatePrio}(Q,(v,\frac{|I(v)|+|O(v)|}{|U(v)|+\epsilon}))$%
		\ENDFOR
		\ELSE[$x$ is an entity]
		\IF{$I(x)\cup O(x)\neq\emptyset$}
		\STATE initEntity($x$, $I(x)$, $O(x)$)\COMMENT{initialize $\vec{x}$}\label{TDSM_init_entity}
		\FORALL[update triple sets of entities in $Q$]{$v\in Q\cap\Delta V^{+}$}
		\STATE $I(v)\leftarrow I(v)\cup\{(x,r,v)\in S^{(t+1)} | r\notin Q\}$%
		\STATE $O(v)\leftarrow O(v)\cup\{(v,r,x)\in S^{(t+1)} | r\notin Q\}$%
		\ENDFOR
		\FORALL[update triple sets of relations in $Q$]{$r\in Q\cap\Delta\mathcal{R}^{+}$}
		\STATE $I(r)\leftarrow I(r)\cup\{(h,r,x)\in S^{(t+1)} | x\notin Q\}\cup\{(x,r,t)\in S^{(t+1)} | t\notin Q\}$
		\ENDFOR
		\ENDIF
		\FORALL[update priority of entities in $Q$]{$v\in Q\cap\Delta V^{+}$}
		\STATE $U(v)\leftarrow U(v)\setminus (\{(x,r,v)\in S^{(t+1)} | r\notin Q\}\cup\{(v,r,x)\in S^{(t+1)} | r\notin Q\})$
		\STATE $\mathrm{updatePrio}(Q,(v,\frac{|I(v)|+|O(v)|}{|U(v)|+\epsilon}))$%
		\ENDFOR
		\FORALL[update priority of relations in $Q$]{$r\in Q\cap\Delta\mathcal{R}^{+}$}
		\STATE $U(r)\leftarrow U(r)\setminus (\{(h,r,x)\in S^{(t+1)} | h\notin Q\}\cup\{(x,r,t)\in S^{(t+1)} | t\notin Q\})$
		\STATE $\mathrm{updatePrio}(Q,(r,\frac{|I(r)|}{|U(r)|+\epsilon}))$%
		\ENDFOR
		\ENDIF
		\ENDWHILE
	\end{algorithmic}
\end{algorithm}

\subsubsection{Global Optima for Positive Triples}

\paragraph{TransH} \label{sec:global_optimum_transH}
As for \textsc{TransE}, there is also a direct way to calculate the global optimum for positive triples as the initial entity embedding for \textsc{TransH}.
However, since every relation has two assigned vectors now, which are cyclically dependent, there is no direct way for relations.
Therefore, we simply initialize relations with the average over all relation embeddings.
The initialization of an entity $v$ can be explained by means of \cref{TransH_entities_positives}, which shows the schematic geometric concept for a three-dimensional embedding space $\mathbb{R}^3$.
\begin{figure}[!b]
	\centering
	\includegraphics[width=0.8\textwidth]{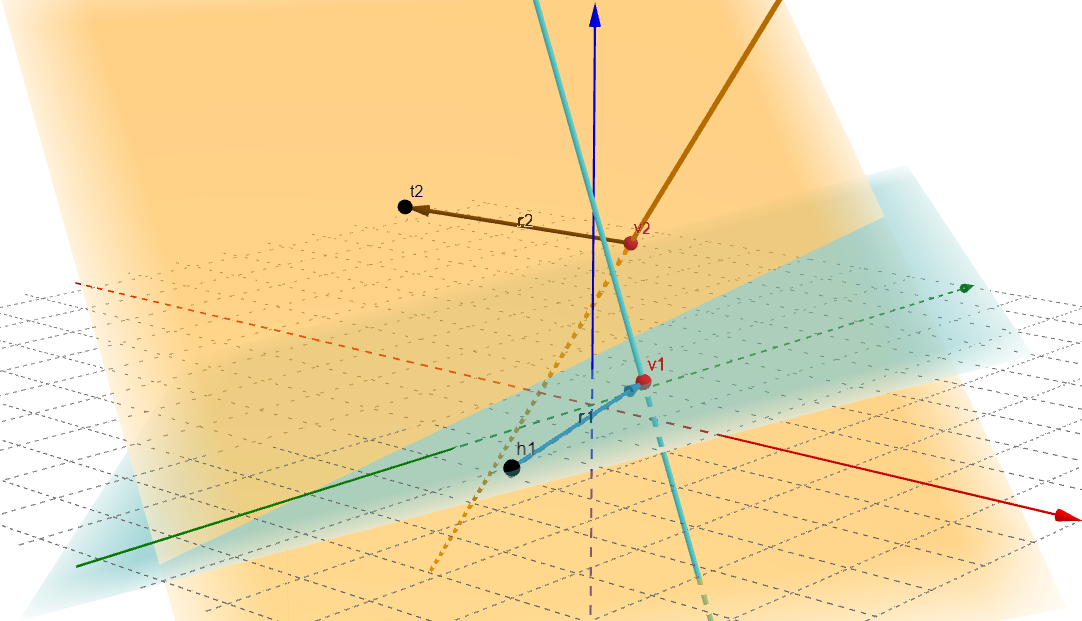}
	\caption[Sketch of entity initialization for \textsc{TransH}]{Sketch of the initialization of a new entity $v$ with the triples $(h,r_1,v)$ and $(v,r_2,t)$ for \textsc{TransH}. The blue hyperplane with its distance vector belongs to the relation $r_1$ and the yellow one to $r_2$. The projection of an entity representation $\vec{e}$ onto the relation-specific hyperplane of $r_i$ is denoted by $\vec{e_i}$. The embedding $\vec{v}$ is then initialized as the closest point to the orthogonal lines through $\vec{v_1}$ and $\vec{v_2}$.} \label{TransH_entities_positives}
\end{figure}
For the example of two triples $(h, r_1, v)$ and $(v, r_2, t)$ with already embedded relations $r_1$ and $r_2$ and entities $h$ and $t$, we first compute the projections of the embedded entities onto the relation-specific hyperplanes. The projection of $\vec{h}$ onto the hyperplane of $r_1$ is denoted in the figure by $\vec{h_1}$ and of $\vec{t}$ onto the hyperplane of $r_2$ by $\vec{t_2}$.
Given the embeddings of the relations on their hyperplanes, the optimal projections $\vec{v_1}$ and $\vec{v_2}$ of $\vec{v}$ can now be computed similarly to the initialization for \textsc{TransE} by adding the relation embedding on the projection of the head or subtracting it from the tail projection, respectively.
In general, for the sets $I(v)=\{(h_1, r_1, v), ..., (h_n, r_n, v)\}$ and $O(v)=\{(v, r_1', t_1), ..., (v, r_m', t_m)\}$ from alg.~\ref{TDSM_init}, we compute the optimal projections of $\vec{v}$ as follows:
\begin{equation}
	\begin{split}
		\vec{v}_i^\bot &= \vec{h_i}-\vec{w}_{r_i}^\top\vec{h_i}\vec{w}_{r_i}+\vec{r_i}\ \forall\ 1\leq i\leq n \\
		\vec{v}_{i+n}^\bot &= \vec{t_i}-\vec{w}_{r'_i}^\top\vec{t_i}\vec{w}_{r'_i}-\vec{r'_i}\ \forall\ 1\leq i\leq m
	\end{split} 
\end{equation}
The problem of finding the vector, whose projections on the hyperplanes are the closest to the optimal projections, boils down to finding the closest point to the lines defined by the optimal projections and the normal vectors of the hyperplanes:
\begin{equation}
	\begin{split}
		\vec{w}_i &= \vec{w}_{r_i}\ \forall\ 1\leq i\leq n \\
		\vec{w}_{i+n} &= \vec{w}_{r'_i}\ \forall\ 1\leq i\leq m
	\end{split}
\end{equation}
In \cref{TransH_entities_positives}, these lines are the orange and the blue one.
Without loss of generality, we can assume that the normal vectors of the hyperplanes are normalized.
Then the closest vector to these lines can be calculated by means of least-squares~\cite{line_intersection} as follows:
\begin{equation}
	\vec{v}=\left(\sum_{i=1}^{m+n}\vec{I}^{d\times d}-\vec{w}_i\vec{w}_i^\top\right)^{-1}\left(\sum_{i=1}^{m+n}(\vec{I}^{d\times d}-\vec{w}_i\vec{w}_i^\top)\vec{v}_i^\bot\right)
\end{equation}
This closest point is the global optimum in regards of \textsc{TransH}'s constraint $\vec{h}_\bot+\vec{r}\approx\vec{t}_\bot$, if $(h,r,t)$ holds, and is therefore the initialization value for the entity $v$.

\subsubsection{Incorporating Negative Evidence}\label{sec:negative_evidence}
By including negative samples in the initialization as well, we can calculate the ``positions'' in the embedding space, from which the new vector should be far away, similarly to the initialization with positive triples before. Formally, for each incoming triple $(h,r,v)\in I(v)$ and outgoing triple from $(v, r, t)\in O(v)$, we sample a certain number of corrupted triples $(h',r,v)\in S'_{(h,r,v)}$ and $(v,r,t')\in S'_{(v,r,t)}$ by replacing the head or the tail, respectively.
For each of these negative triples we can now calculate the tail or the head representations again by adding the relation embedding to the one of the head or subtracting it from the one of the tail.
However, the question is how to constrain the new vector representations to be far way from these ``worst positions''.
Therefore, we discuss both, a direct approach and an iterative one.

\paragraph{Direct Approach}
An intuitive and direct way is to include the negative of these vectors into the calculation of the global optimum.
This initialization mode is called ``neg'' from now on.
The results in comparison to the average initialization and the global optima for positive triples are given i.a. in \cref{TransE_FB15K_preInit_all_MR,TransE_FB15K_preInit_all_Hits@10} for different numbers of negative samples per positive one (given by the number after ``neg'').
While the Hits@10 proportion is decreasing more slowly over time than without any negative triples, the MR is increasing much faster.
For both metrics, this approach with one negative sample yields almost the same results as the average initialization.
As the Hits@10 proportion for the approach without any negative samples is decreasing drastically, we can observe that the ``worst positions'' calculated by means of sampled negative triples are close to the global optimum for the positive ones.
Hence, if we combine the negatives of these ``worst positions'' with the ``best positions'' for the positive triples, they are likely to eliminate each other resulting in an average like initialization.
However, if we increase the number of negative triples per positive one, this balance is no longer given and we can observe a massive performance loss for both metrics.
The new vector representations seem to be far away from the global optima in regards of their positive triples, because the negative samples pull them away.
Moreover, incorporating more than two negative samples per positive one affects the rank of other triples even more negatively than the original approach without any negative triples.
If the vectors for the ``worst positions'' have a small norm, including the negative of them into the arithmetic mean constrains the final embedding to be even closer to them instead of the desired opposite.
All in all, including the negatives of these vectors into the calculation of the global optimum seems to be a very hard constraint, because the new vector representations just have to be far away from the ``worst positions'' and not necessarily close to the negatives of them.

\begin{figure}
	\includegraphics[clip, trim=5cm 0.5cm 5.8cm 0.5cm, width=\textwidth]{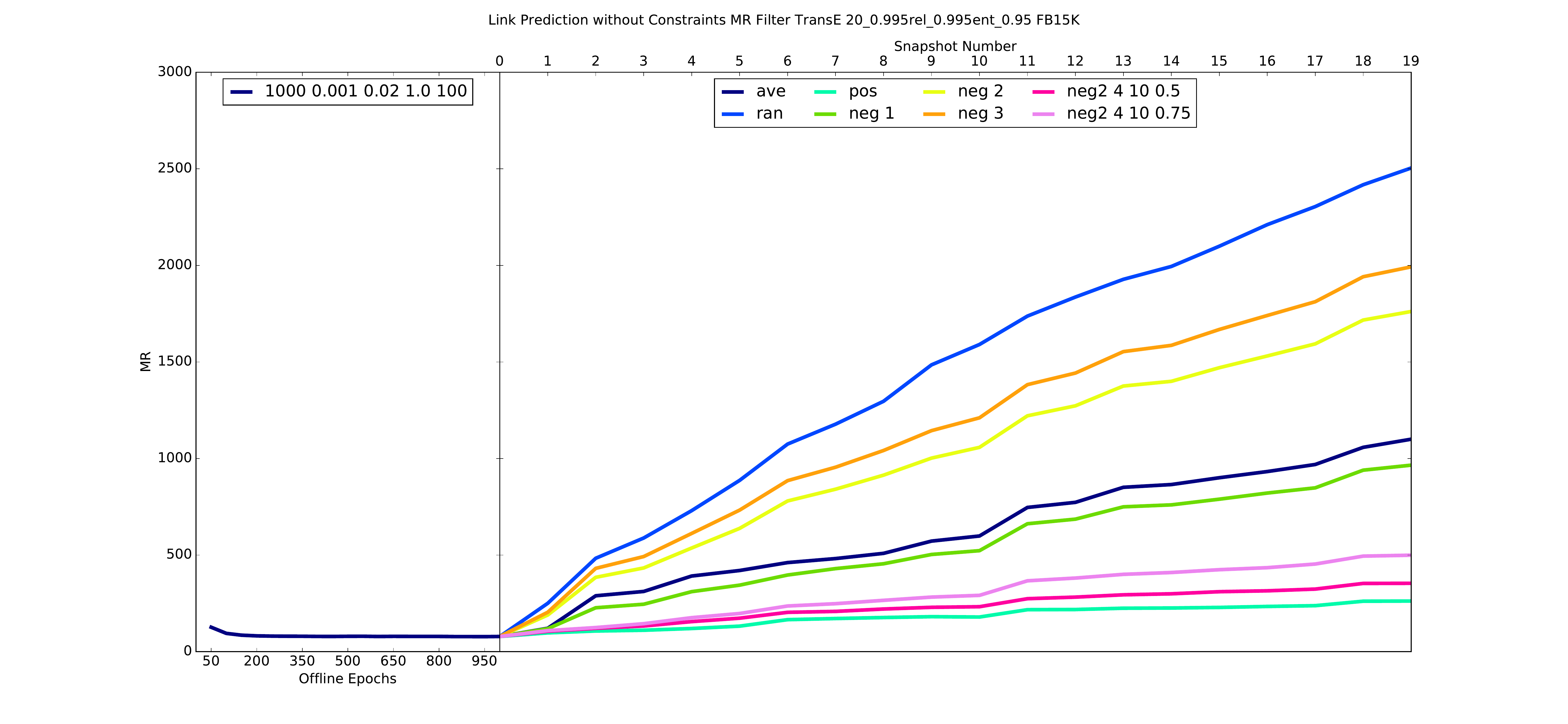}
	\caption[Link prediction performance in the MR metric of all pre-initialization techniques for \textsc{TransE}]{Link prediction performance in the MR metric of the direct (neg) and incremental (neg2) approaches of incorporating negative evidence in comparison to average (ave), random (ran) and global optima (pos) initialization for \textsc{TransE}}%
	\label{TransE_FB15K_preInit_all_MR}
\end{figure}
\begin{figure}
	\includegraphics[clip, trim=5cm 0.5cm 5.8cm 0.5cm, width=\textwidth]{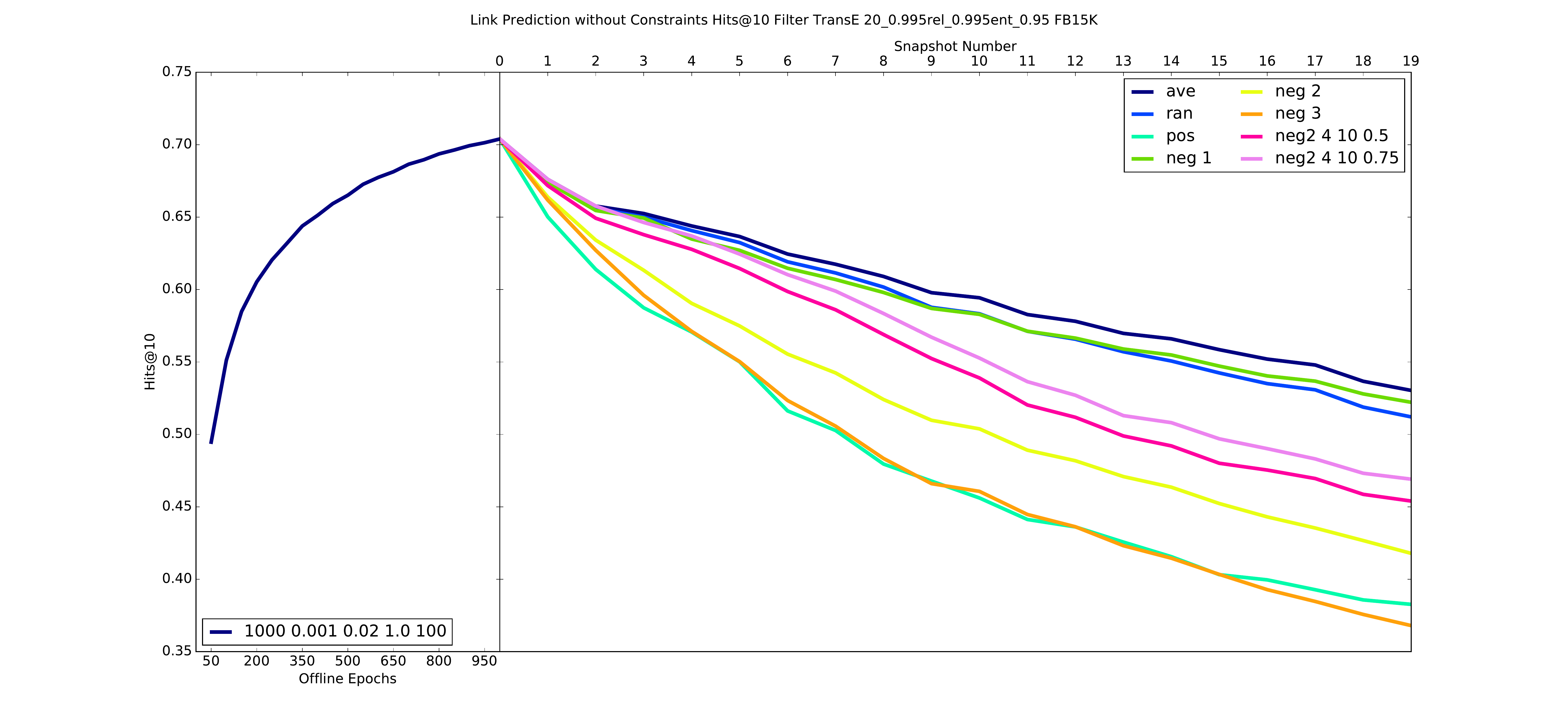}
	\caption[Link prediction performance in the Hits@10 metric of all pre-initialization techniques for \textsc{TransE}]{Link prediction performance in the Hits@10 metric of the direct (neg) and incremental (neg2) approaches of incorporating negative evidence in comparison to average (ave), random (ran) and global optima (pos) initialization for \textsc{TransE}}%
	\label{TransE_FB15K_preInit_all_Hits@10}
\end{figure}

\paragraph{Iterative Approach}
The iterative approach for \textsc{TransE} can be explained on basis of a simple abstraction in a two-dimensional space given in \cref{fig:negatives2_sketch}.
\begin{figure}
	\centering
	\begin{minipage}[t]{.48\textwidth}
		\centering
		\includegraphics[width=0.9\linewidth]{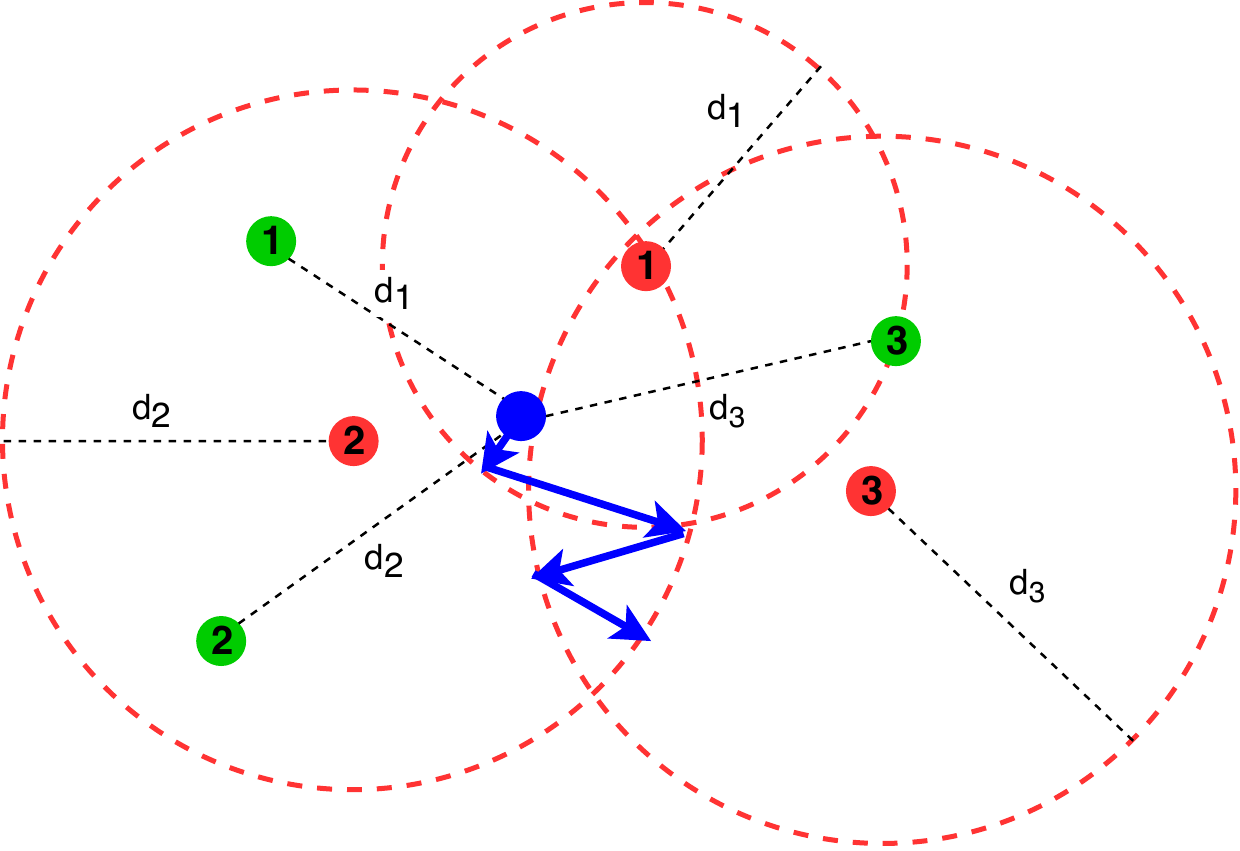}
		\captionof{figure}{Sketch of the general idea for the iterative approach of incorporating negative evidence in a two-dimensional embedding space}
		\label{fig:negatives2_sketch}
	\end{minipage}
	\hfill
	\begin{minipage}[t]{.48\textwidth}
		\centering
		\includegraphics[width=0.9\linewidth]{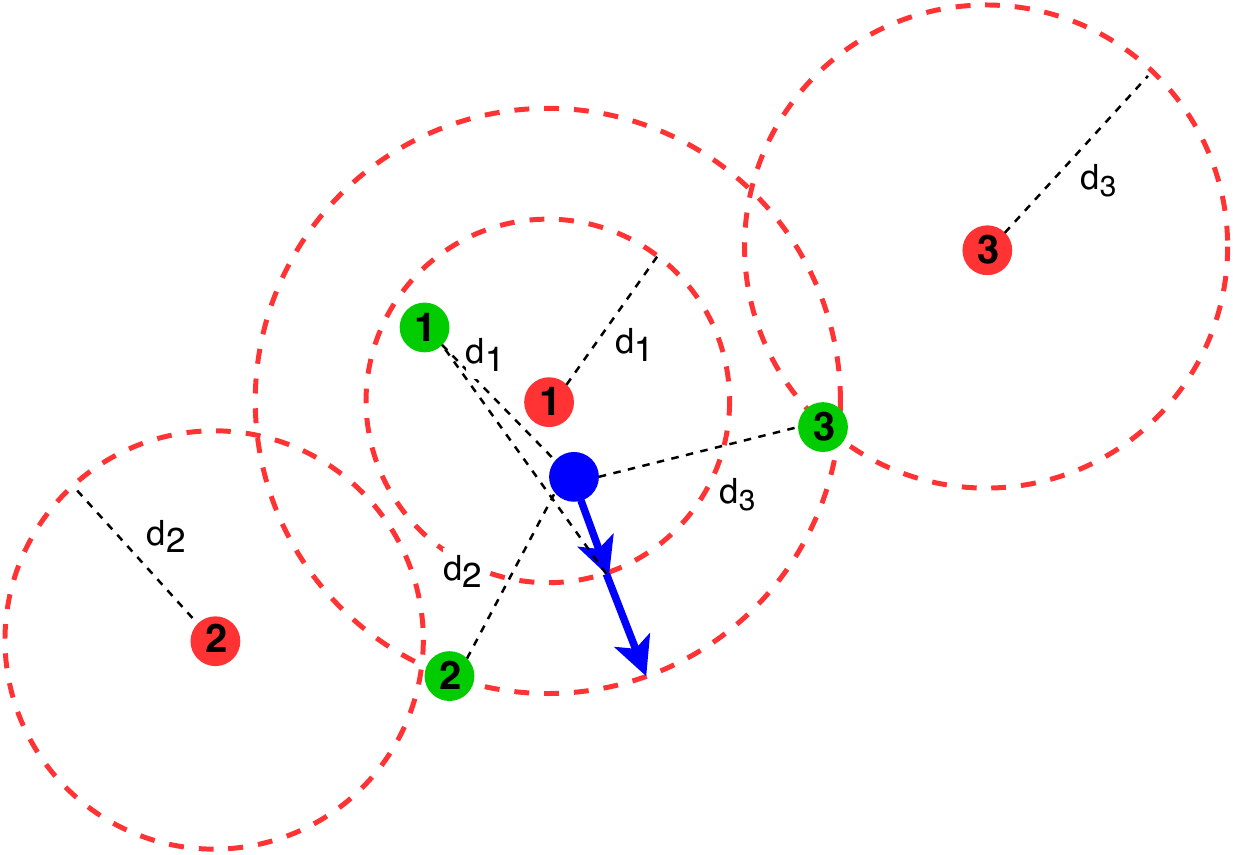}
		\captionof{figure}{Vicious cycle resulting in outliers because of an increasing radius}
		\label{fig:negatives2_outliers}
	\end{minipage}
\end{figure}
We do not consider positive triples and negative samples anymore but ``best'' (green) and ``worst'' (red) positions in the vector space, which are calculated as before.
Each red dot belongs to a green one (indicated by the same number), because the negative sample responsible for this certain worst position has been constructed from the positive triple responsible for the particular best position.
There is still the possibility to sample multiple negative triples per positive one.
At first, we initialize the entity or relation embedding (blue) as the nearest point to all best positions from \cref{main-sec:global_optima} in the main paper.
Now, the intention is that this vector representation should be closer to every best position than to their corresponding worst ones.
Therefore, we construct circles around the worst positions with the distance between the particular embedding to their corresponding best positions as the radii.
At each iteration, we consider one circle.
If the embedding is %
inside the circle, the intuitive solution would be to move it out of it with minimal effort (i.e. distance).
However, the radii have to be updated continuously, because the distances between the best positions and the embedding change after each step.
These updates provide the risk of a vicious cycle as given in \cref{fig:negatives2_outliers}, if the embedding is moved out of circles in the direction with minimal effort.
If the worst position is between its corresponding best position and the embedding, the embedding is constantly moved further away while staying in the updated radius all the time.
In experiments, the resulting outliers drifted so far away that the average length of all new entity embeddings in the $L_2$ norm increased above $80{,}000$.

Another solution is to move the embedding a step towards the corresponding best position. Hence, after one step the radius of the current circle around the worst position is smaller and the embedding is likely to escape it while keeping the step size small. The complete algorithm for the initialization of relations in \textsc{TransE} is given in alg.~\ref{alg:negatives2_iterative}.
\begin{algorithm}
	\caption{Iterative relation initialization with negative evidence for \textsc{TransE}}%
	\label{alg:negatives2_iterative}
	\begin{algorithmic}[1]
		\REQUIRE Relation $r\in\Delta\mathcal{R}^{+}$ with instances $I(r)$
		\STATE calculate start vector accordingly to \cref{main-transE_relations_positives} in the main paper.
		\STATE $epoch \leftarrow 0$
		\STATE $stable \leftarrow \mathrm{False}$
		\WHILE{$epoch < max\_epochs$ and not $stable$}
		\STATE $stable \leftarrow \mathrm{True}$
		\FORALL{$(h,r,t)\in I(r)$}
		\FORALL{$i=1, ..., num\_negs$}
		\STATE $(h',r,t') \leftarrow \mathrm{sample}(S'_{(h,r,t)})$ \COMMENT{sample a negative triple (see eq.~\cref{main-corruptedTriples} in the main paper)}
		\STATE $\vec{w} \leftarrow \vec{t'}-\vec{h'}$ \COMMENT{worst position $\vec{w}$}
		\STATE $\vec{d} \leftarrow \vec{t}-\vec{h}-\vec{r}$ \COMMENT{direction $\vec{d}$ for step}
		\IF{$||\vec{w}-\vec{r}||_{1/2} < ||\vec{d}||_{1/2}$}
		\STATE $\vec{r} \leftarrow \vec{r} + step\_ size \cdot \vec{d}$ \COMMENT{jump towards best position}
		\STATE $stable \leftarrow \mathrm{False}$
		\ENDIF
		\ENDFOR
		\ENDFOR
		\STATE $epoch \leftarrow epoch + 1$
		\ENDWHILE
	\end{algorithmic}
\end{algorithm}
One process step of all circles is called an epoch.
We introduce a parameter for the step size, which is multiplied with the direction vector for the particular step.
The algorithm terminates after the maximum number of epochs or if the particular embedding is not in any circle for a complete epoch, because the initialization vector is considered to be stable in this case.
This algorithm cannot just be extended to the initialization of entities for \textsc{TransE} but also for \textsc{TransH}, where the best and worst positions become lines instead of points and the circles around worst positions are cylinders, if we look at a three-dimensional abstraction of the embedding space.

The experimental results for \textsc{TransE} are also given in \cref{TransE_FB15K_preInit_all_MR,TransE_FB15K_preInit_all_Hits@10} for the two best configurations with a maximum number of ten epochs, a step size of $0.5$ or $0.75$ and four negative samples per positive one.
Both setups show a huge improvement regarding the Hits@10 in comparison to the global optima without negative evidence while increasing the MR to a much smaller extend than the average initialization.
The maximum number of epochs does not have a noteworthy impact on the performance.
Increasing the step size improves the Hits@10 result but raises the MR, because the vector is moved further away from the circles around the worst positions but also further away from the average of the best positions.
The same effect goes for the number of negative samples, as a larger number of circles implies more movement of the vector.

Another solution for the question of the direction, in which the new entity or relation embedding has to be moved, is the gradient.
By using gradient descent, we can find local optima considering positive as well as negative triples.

\subsubsection{Moving Towards Local Optima}
If there are three additional numbers $initNegs$, $initTimes$ and $initLR$ after the pre-initialization mode, the initialized embedding is trained with $initNegs$ negative samples per positive triple, at most $initTimes$ epochs and $initLR$ as the learning rate.
Furthermore, we stop early, if the loss function is zero, track back, if there was an epoch with a lower value of the loss function than in the end, and decrease the learning rate quite fast by multiplying it with $0.5$, if the loss function is not decreasing by $1\%$ for $5$ consecutive epochs.
The best setup for \textsc{TransE} on our validation data set includes one negative sample per positive one, a maximum of $50$ epochs and $0.001$ as the learning rate.

\paragraph*{Time Complexity}
\label{sec:time_complexity_local_optmimum}
We conclude this part by analyzing the complexity of our initialization.
Since the pre-initialization of taking the average over all entities and relations achieved the best performance on our validation data set, we do not consider the pre-initialization for our analysis.
\begin{mytheo}\label{theo:init_time_complex}
	The initialization of new entities and relations (without the pre-initialization) requires at most $\mathcal{O}(initTimes\cdot |\Delta E^{+}_{S}|\cdot(1+initNegs))$ 
	evaluations of the scoring function.
\end{mytheo}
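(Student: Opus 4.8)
The plan is to localize where the scoring function is actually evaluated, and then bound separately the number of refinement epochs, the number of positive triples processed per epoch, and the per‑triple cost. Since we assume the averaging pre‑initialization and, by hypothesis, exclude any pre‑initialization step from the count, the only place the scoring function $f_r$ is touched is the local‑optimum refinement described in the ``Moving Towards Local Optima'' paragraph, which runs the same inner loop as the offline training in \cref{TDSM_offline}: for every positive triple in the current loss, $f_r$ is evaluated once on the positive triple and once on each of its $initNegs$ sampled corruptions, i.e.\ $1+initNegs$ evaluations per positive triple per epoch. (Intermediate validation is ignored, following the conventions fixed in \cref{sec:time_complexity_analysis}.)

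First I would bound the epoch count. The refinement runs this loop for at most $initTimes$ epochs; the early stop on zero loss, the trackback to the best epoch, and the learning‑rate decay can only decrease the realized number of epochs, so $initTimes$ is a valid upper bound — whether the new embeddings are refined jointly or one element at a time. Second I would bound the positive triples touched in a single epoch, summed over all refinement runs. For a new entity $v$ these are its incident training triples $I(v)\cup O(v)$ from \cref{TDSM_init}, and for a new relation $r$ they are $I(r)$; as these sets grow during \cref{TDSM_init} they only ever accumulate triples incident to $v$ (resp.\ $r$). Any triple incident to a newly added entity or relation is itself a newly added training triple, hence lies in $\Delta E^{+}_{S}$; conversely a single triple $(h,r,t)$ is incident to at most two entities and one relation, so it is reused in at most three of the per‑element runs. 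Thus the total number of positive‑triple processings in one epoch is at most $3|\Delta E^{+}_{S}|$. Multiplying the three bounds yields at most $3\cdot initTimes\cdot |\Delta E^{+}_{S}|\cdot(1+initNegs)=\mathcal{O}\!\left(initTimes\cdot |\Delta E^{+}_{S}|\cdot(1+initNegs)\right)$ scoring‑function evaluations, as claimed.

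The main obstacle is the bookkeeping in the second step: one must check carefully that $I(\cdot)$ and $O(\cdot)$ never pick up triples outside $\Delta E^{+}_{S}$ even as \cref{TDSM_init} incrementally augments them, and that the multiplicity with which a fixed triple of $\Delta E^{+}_{S}$ is reused across the different per‑element refinement runs is a universal constant (here $3$), so that it is absorbed into the $\mathcal{O}$‑notation. A secondary point to get right is that the negative‑evidence pre‑initialization variants (the ``neg'' direct approach and \cref{alg:negatives2_iterative}) are genuinely excluded by the theorem's hypothesis, since otherwise their norm and least‑squares computations would have to be accounted for as well.
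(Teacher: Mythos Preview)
Your proposal is correct and follows essentially the same argument as the paper: bound the per-element refinement by $initTimes$ epochs, observe that the positive triples $I(r)$ and $I(v)\cup O(v)$ used for any new relation or entity lie in $\Delta E^{+}_{S}$, note that a fixed triple of $\Delta E^{+}_{S}$ is reused in at most three per-element runs (two entities and one relation), and multiply by the $1+initNegs$ evaluations per positive triple. Your additional remarks about the incremental growth of $I(\cdot),O(\cdot)$ in \cref{TDSM_init} and the exclusion of the negative-evidence pre-initialization variants are careful elaborations, but the underlying structure matches the paper's proof.
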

\begin{proof}
	In the worst case the initialization of each entity and relation never stops early.
	Hence, we end up with $initTimes\cdot(|\Delta V^{+}|+|\Delta \mathcal{R}^{+}|)$ epochs for the whole initialization.
	In each of these epochs, the scoring function is evaluated for the triples $I(r)$ in the case of a relation $r$ and for the triples $I(v)\cup O(v)$ in the case of an entity $v$.
	However, since these are new entities and relations, we have $I(r)\subseteq\Delta E^{+}_{S}\ \forall r\in\Delta\mathcal{R}^{+}$ and $I(v)\cup O(v)\subseteq\Delta E^{+}_{S}\ \forall v\in\Delta V^{+}$.
	In the worst case all new triples of the training set consist of at least one new entity or relation.
	Then all triples of $\Delta E^{+}_{S}$ are considered in the initialization.
	Since a triple only consists of three entries, it can only be considered for the initialization of at most one new relation and two new entities.
	As a result, we can omit the number of added entities and relations and give $\mathcal{O}(initTimes\cdot |\Delta E^{+}_{S}|)$ as an upper approximation for the number of evaluations of the scoring function for positive triples.
	Finally, for each evaluation of the scoring function so far, we sample another $initNegs$ corrupted triples and evaluate the scoring function on them.
	This results in $\mathcal{O}(initTimes \cdot (|\Delta E^{+}_{S}| + |\Delta E^{+}_{S}| \cdot initNegs))$ evaluations, where $|\Delta E^{+}_{S}|$ can be factored out.
\end{proof}
Since the different pre-initialization modes do not employ gradient descent, the number of gradient steps for each entity and relation only depends on the KG and the parameters for the initialization.

\begin{mytheo}
	The initialization of new entities and relations requires at most $\mathcal{O}(initTimes\cdot(|\Delta V^{+}|+|\Delta\mathcal{R}^{+}|))$ gradient steps.
\end{mytheo}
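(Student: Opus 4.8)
The plan is to reuse the worst-case epoch count already obtained in the proof of \cref{theo:init_time_complex}, and then translate epochs into gradient steps. As observed in the paragraph preceding the statement, none of the pre-initialization modes -- including the iterative negative-evidence procedure of \cref{alg:negatives2_iterative}, whose moves are geometric jumps rather than backpropagations -- performs any gradient descent. Consequently every gradient step incurred by the initialization originates in the ``moving towards local optima'' phase, and it suffices to bound the gradient steps of that phase.

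First I would recall, exactly as in the previous proof, that in the worst case no entity and no relation stops early: the loss-driven refinements (halting once the loss is zero, tracking back to the best epoch, and decaying the learning rate after stagnation) can only keep or shrink the number of epochs actually executed, never raise it beyond $initTimes$. Summing over the newly inserted objects, the phase therefore runs at most $initTimes\cdot(|\Delta V^{+}|+|\Delta\mathcal{R}^{+}|)$ epochs in total.

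Next I would show that one such epoch contributes only a constant number of gradient steps in the paper's sense, i.e.\ updates of a single vector representation. While a new entity $v$ (resp.\ relation $r$) is being initialized, only its own representation is trainable and all previously embedded vectors are frozen; hence a batch-gradient epoch over its incident training triples $I(v)\cup O(v)$ (resp.\ $I(r)$) touches just that one object and amounts to a single gradient step -- or a small constant number of them for models such as \textsc{TransD} that attach two vectors to an object, which the $\mathcal{O}$-notation absorbs. Multiplying the per-object constant by the epoch bound yields $\mathcal{O}(initTimes\cdot(|\Delta V^{+}|+|\Delta\mathcal{R}^{+}|))$ gradient steps, as claimed.

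The crux is this last step: one must pin down the accounting convention that the local-optima phase trains a single object at a time with everything else held fixed and does not further partition that object's incident triples into several mini-batches within an epoch. With that convention the argument is immediate; relaxing it would only introduce a constant (the number of batches per epoch) that the asymptotic bound swallows anyway.
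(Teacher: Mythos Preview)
Your proposal is correct and follows the same approach as the paper: reuse the worst-case epoch count $initTimes\cdot(|\Delta V^{+}|+|\Delta\mathcal{R}^{+}|)$ from the previous theorem, then observe that each epoch updates only the single fixed object being initialized (all other embeddings frozen), yielding one gradient step per epoch. The paper's proof is a terse two-sentence version of exactly this argument; your additional remarks on \textsc{TransD}'s paired vectors and on the mini-batch convention are sound elaborations but not needed for the bound.
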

\begin{proof}
	As described in the proof of theorem~\ref{theo:init_time_complex} there are at most $initTimes\cdot(|\Delta V^{+}|+|\Delta \mathcal{R}^{+}|)$ epochs for the whole initialization.
	Each epoch includes one gradient step for only one entity or relation representation, as the other embeddings are fixed.
\end{proof}
If we would not fix the embeddings of other entities and relations, the number of gradient steps could be $|V^{(t+1)}|+|\mathcal{R}^{(t+1)}|$ times larger making the initialization impractical for large-scale KGs. 

\subsection{Refreshing Old Embeddings}
\subsubsection{Change Specific Epochs}
\begin{algorithm}
	\caption{Change Specific Epoch} \label{alg:changespecs}
	\begin{algorithmic}[1]
		\STATE $T_{batch}\leftarrow\emptyset$ \COMMENT{initialize the set of pairs of triples}
		\FORALL{$(h',r,t')\in\Delta E_{S}^{-}$}\label{firstFor}
		\STATE $(h,r,t)\leftarrow$ sample($S^{(t+1)}_{(h',r,t')}$) \COMMENT{sample a correct triple (see \cref{main-correctTriples} in the main paper)} 
		\STATE $T_{batch}\leftarrow T_{batch}\cup\{((h,r,t), (h',r,t'))\}$
		\ENDFOR
		\FORALL{$(h,r,t)\in\Delta E_{S}^{+}$} \label{secFor}
		\STATE $(h',r,t')\leftarrow$ sample($S^{\prime (t+1)}_{(h,r,t)}$) \COMMENT{sample a corrupted triple (see \cref{main-corruptedTriples} in the main paper)}
		\STATE $T_{batch}\leftarrow T_{batch}\cup\{((h,r,t), (h',r,t'))\}$
		\ENDFOR \label{thirdEndFor}
		\STATE Update embeddings w.r.t. \[\sum_{((h,r,t), (h',r,t'))\in T_{batch}}\nabla\left[\mathrm{log}(1+\mathrm{exp}(-f_r(h,t)))+\mathrm{log}(1+\mathrm{exp}(f_r(h',t')))\right]\] or \[\sum_{((h,r,t), (h',r,t'))\in T_{batch}}\nabla\left[\mathrm{max}(0, \gamma - f_{r}(h,t) + f_{r}(h',t'))\right]\] \label{learn_update_online}
	\end{algorithmic}
\end{algorithm}
Besides the usual configuration as described in the paper, we provide the option to reduce the set of added triples by ignoring all triples with an added entity or relation, because we do not want to bias the embedding too much towards new triples, if the initialization of entities and relations is already well fitting.
If this option is turned on, we consider the subset $\{(h,r,t)\in\Delta E_S^{+} | r\notin\Delta\mathcal{R}^{+} \land h,t\notin\Delta V^{+}\}$  instead of 
$\Delta E^{+}_{S}$.

\paragraph*{Time Complexity}
Like for the initialization of new entities and relations, we briefly discuss the computational complexity of the change specific epochs.
While the general epochs include multiple backpropagations considering a certain batch of the complete KG at each step, the change specific epochs are comparable to only one of these steps.
\begin{mytheo}
	A change specific epoch requires at most $\mathcal{O}(|\Delta E^{+}_{S}|+|\Delta E^{-}_{S}|)$ evaluations of the scoring function.
\end{mytheo}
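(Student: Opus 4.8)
The plan is to bound the size of the multiset $T_{batch}$ constructed in Algorithm~\ref{alg:changespecs} and then observe that each element of $T_{batch}$ contributes only a constant number of scoring-function evaluations in the final update step. First I would examine the two loops: the loop beginning on line~\ref{firstFor} iterates once over $\Delta E^{-}_{S}$ and adds exactly one pair of triples to $T_{batch}$ per iteration, while the loop beginning on line~\ref{secFor} iterates once over $\Delta E^{+}_{S}$ and likewise adds one pair per iteration. (If the reduction option described above is enabled, the second loop ranges only over the subset $\{(h,r,t)\in\Delta E_S^{+} \mid r\notin\Delta\mathcal{R}^{+} \land h,t\notin\Delta V^{+}\}$ of $\Delta E^{+}_{S}$, which can only make $T_{batch}$ smaller.) Following the accounting convention fixed in \cref{sec:time_complexity_analysis}, we do not count the sampling of a correct or corrupted triple, so these loops on their own contribute no scoring-function evaluations; they merely establish $|T_{batch}| \le |\Delta E^{+}_{S}| + |\Delta E^{-}_{S}|$.

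Next I would turn to the aggregate update on line~\ref{learn_update_online}. For every pair $((h,r,t),(h',r,t'))\in T_{batch}$ the corresponding summand is the gradient of a loss term (logistic or margin) that depends on the embedding only through the two numbers $f_r(h,t)$ and $f_r(h',t')$; hence evaluating this term requires exactly two evaluations of the scoring function, a constant that is independent of both the embedding model and the KG. Summing over $T_{batch}$ therefore yields at most $2\bigl(|\Delta E^{+}_{S}| + |\Delta E^{-}_{S}|\bigr)$ scoring-function evaluations for the whole change specific epoch, i.e. $\mathcal{O}(|\Delta E^{+}_{S}| + |\Delta E^{-}_{S}|)$.

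There is essentially no hard part here: the claim reduces to a direct counting argument over the two loops and the single final batch update. The only points that require care are (i) being explicit that the sampling steps are excluded by the accounting convention established earlier, (ii) noting that the optional restriction of $\Delta E^{+}_{S}$ only tightens the bound and so does not affect the worst case, and (iii) emphasizing that the factor $2$ is model-independent, which is exactly what makes this bound comparable with the per-batch cost incurred inside Algorithm~\ref{TDSM_offline} of the offline method.
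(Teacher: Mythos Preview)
Your proposal is correct and follows essentially the same counting argument as the paper: the paper's proof simply observes that the scoring function is evaluated for all triples inside $T_{batch}$, which are exactly $2\cdot(|\Delta E^{+}_{S}|+|\Delta E^{-}_{S}|)$ since only one corrected or corrupted triple is sampled per deleted or added one. Your version is more detailed (explicitly separating the loop analysis from the update step and noting the optional restriction and sampling convention), but the core idea is identical.
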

\begin{proof}
	The scoring function is evaluated for all triples insight the $T_{batch}$ (as part of a tuple), which are exactly $2\cdot(|\Delta E^{+}_{S}|+|\Delta E^{-}_{S}|)$, since we sample only one corrected or corrupted triple for each deleted or added one, respectively.
\end{proof}  
\begin{mytheo}
	A change specific epoch requires at most $\mathcal{O}(|V^{(t+1)}|+|\mathcal{R}^{(t+1)}|)$ gradient steps.
\end{mytheo}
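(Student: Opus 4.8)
The plan is to bound the number of distinct vector representations that a single change-specific epoch could possibly touch, and then observe that this is at most the total number of entity and relation embeddings in the graph at time $t+1$. Recall from Algorithm~\ref{alg:changespecs} that a change-specific epoch assembles one batch $T_{batch}$ (consisting of pairs of triples built from $\Delta E^{-}_{S}$ and $\Delta E^{+}_{S}$) and then performs a \emph{single} backpropagation on the sum of gradients over that batch. By the terminology fixed in \cref{sec:time_complexity_analysis}, one backpropagation is a parallel combination of gradient steps, one gradient step per vector representation whose parameters actually receive an update.

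First I would identify which vectors can receive an update. Every triple appearing in $T_{batch}$ is of the form $(h,r,t)$ with $h,t\in V^{(t+1)}$ and $r\in\mathcal{R}^{(t+1)}$, since the batch is built from triples of $S^{(t+1)}$ and from corrupted/corrected triples whose entries are drawn from the same vocabulary. Hence the gradient of the batch loss is supported on a subset of the embeddings $\{\vec{v} : v\in V^{(t+1)}\}\cup\{\vec{r} : r\in\mathcal{R}^{(t+1)}\}$ (for \textsc{RESCAL}, "relation embedding" means the relation matrix $\vec{M}_r$, but there is still exactly one such object per relation, so the count is unchanged). In the worst case every such embedding is hit, giving at most $|V^{(t+1)}|+|\mathcal{R}^{(t+1)}|$ gradient steps.

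Second, I would note that this is indeed the relevant count because a change-specific epoch is, by construction, exactly one backpropagation step (unlike a general epoch of Algorithm~\ref{TDSM_offline}, which iterates a backpropagation over $num\_batch$ batches); the "paragraph*{Time Complexity}" remark preceding the statement already makes this comparison explicit, so no further bookkeeping over multiple steps is needed. Combining the two observations yields the claimed bound $\mathcal{O}(|V^{(t+1)}|+|\mathcal{R}^{(t+1)}|)$.

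The main obstacle — really the only subtlety — is making precise the claim that the batch gradient touches no embedding outside the time-$(t+1)$ vocabulary, i.e. that the corrupted triples sampled via \cref{main-corruptedTriples} and the corrected triples sampled via \cref{main-correctTriples} never introduce an entity or relation absent from $V^{(t+1)}$ or $\mathcal{R}^{(t+1)}$; this follows immediately from the definition of those sampling procedures (they replace a head or tail with another element of $V^{(t+1)}$ and keep the relation fixed), but it is the one place where one must appeal to how the samples are formed rather than to pure counting. Everything else is the trivial observation that "one backpropagation $=$ at most one gradient step per parameter block."
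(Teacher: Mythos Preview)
Your proposal is correct and follows essentially the same line as the paper's own proof: a change specific epoch performs a single backpropagation, and in the worst case every entity and relation in $V^{(t+1)}\cup\mathcal{R}^{(t+1)}$ receives one gradient step. Your version is more explicit about why the sampled corrected and corrupted triples stay inside the time-$(t+1)$ vocabulary, which the paper leaves implicit, but the underlying argument is the same.
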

\begin{proof}
	A change specific epoch includes only one backpropagation in the end.
	In the worst case, every entity and relation in the KG is part of an added or deleted triple.
	As a result, each of their vector representations has to be updated by one gradient step.
\end{proof}

\subsubsection{Online Method}
To update the resulting embedding of the offline method, the online method, given in alg.~\ref{TDSM_online}, is executed after each time step, in which the dynamic KG has changed.
\begin{algorithm}
	\caption{Updating TDSM (online)} \label{TDSM_online}
	\begin{algorithmic}[1]
		\REQUIRE Training set $S^{(t+1)}$ and validation set $W^{(t+1)}$ at time $t+1$, the changes $\Delta V^{+}$, $\Delta E_{S}^{+}$, $\Delta E_{S}^{-}$, $\Delta \mathcal{R}^{+}$, margin $\gamma$, number of general epochs $geNum$, number of change specific epochs $csNum$, validation steps $valid\_steps$
		\STATE initialization($\Delta V^{+}, \Delta\mathcal{R}^{+}$) \label{initialize_online}\COMMENT{initialize new objects with alg.~\ref{TDSM_init}}
		\STATE $type \leftarrow \mathrm{uniform\_mix}(geNum, csNum)$
		\FOR{$epoch=1, ..., geNum+csNum$}
		\IF{$type[epoch]=general$}
		\STATE perform one general epoch \COMMENT{see lines~\ref{start_general} to~\ref{TDSM_offline_optimization} in alg.~\ref{TDSM_offline}}
		\ELSIF{$type[epoch]=changespec$}
		\STATE perform one change specific epoch \COMMENT{see alg.~\ref{alg:changespecs}}
		\ENDIF
		\IF{$valid\_steps \mid epoch$}
		\STATE validate$(\Theta^{(t+1)}, W^{(t+1)})$ \COMMENT{validate every $valid\_steps$ epochs}
		\ENDIF
		\ENDFOR
	\end{algorithmic}
\end{algorithm}
Instead of one parameter for the maximum number of epochs as in the offline method, we now have two parameters determining the number of general and the number of change specific epochs.
Before starting the actual training procedure, 
we first determine the type (i.e. general or change specific epoch) for each epoch number.
Intuitively, mixing up both types of epochs should lead to a smoother change of the embedding.
Furthermore, the online method includes two learning rates, one for the change specific and the other one for the general epochs.
Hence, it is possible to set the learning rate for the change specific epochs accordingly to the change of the dynamic KG without touching the general learning rate.
As in the offline method, the current state of the embedding is validated on the validation set $W^{(t+1)}$ to allow early stopping, if the link prediction performance is not increasing anymore.

\paragraph*{Time Complexity}
\label{sec:time_complexity_online_method}
Before we get to the realization and implementation, we give a final analysis of the time complexity in terms of evaluations of the scoring function and the gradient steps.
\begin{mytheo}
	The online method requires at most $\mathcal{O}(initTimes\cdot|\Delta E^{+}_{S}|\cdot (1+initNegs)+csNum\cdot (|\Delta E^{+}_{S}|+|\Delta E^{-}_{S}|)+geNum\cdot |S^{(t+1)}|)$ evaluations of the scoring function.
\end{mytheo}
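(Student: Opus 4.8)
The plan is to decompose the online method into its three constituent phases and bound the scoring-function evaluations of each phase separately, then add the bounds. The three phases are: (1) the initialization of new entities and relations invoked on line~\ref{initialize_online} of alg.~\ref{TDSM_online}, (2) the change specific epochs, and (3) the general epochs. Since \texttt{uniform\_mix} simply partitions the $geNum+csNum$ epochs into exactly $csNum$ change specific ones and $geNum$ general ones, the loop contributes precisely $csNum$ executions of a change specific epoch and $geNum$ executions of a general epoch, so the total is the sum of the three phase costs, each taken in the worst case, and I would note that intermediate validation steps are ignored in this analysis as stated in \cref{sec:time_complexity_analysis}.

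For phase~(1), I would invoke \cref{theo:init_time_complex} directly: the initialization (without pre-initialization, which is what the best configuration uses and which employs no scoring-function evaluations anyway) costs at most $\mathcal{O}(initTimes\cdot|\Delta E^{+}_{S}|\cdot(1+initNegs))$ evaluations. For phase~(2), each change specific epoch costs at most $\mathcal{O}(|\Delta E^{+}_{S}|+|\Delta E^{-}_{S}|)$ evaluations by the earlier theorem on change specific epochs, and there are $csNum$ of them, giving $\mathcal{O}(csNum\cdot(|\Delta E^{+}_{S}|+|\Delta E^{-}_{S}|))$. For phase~(3), each general epoch follows lines~\ref{start_general}--\ref{TDSM_offline_optimization} of alg.~\ref{TDSM_offline}: it shuffles $S^{(t+1)}$, divides it into batches, and for every triple in every batch samples one corrupted triple and evaluates the scoring function on both the positive and the corrupted triple in the update step. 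That is $2\cdot|S^{(t+1)}|$ evaluations per general epoch, i.e. $\mathcal{O}(|S^{(t+1)}|)$, so $geNum$ general epochs cost $\mathcal{O}(geNum\cdot|S^{(t+1)}|)$. Summing the three bounds yields exactly the claimed expression.

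The main obstacle — really the only non-mechanical point — is justifying the per-general-epoch count of $\mathcal{O}(|S^{(t+1)}|)$: one has to observe that summing $2$ evaluations over all triples across all $num\_batch$ batches telescopes back to $2\cdot|S^{(t+1)}|$ regardless of how the batching is done, and that the sampling of corrupted triples (which we explicitly exclude from the operation count, as in \cref{sec:time_complexity_analysis}) does not add scoring-function evaluations. Everything else is a direct citation of the three previously established theorems plus the trivial fact that \texttt{uniform\_mix} preserves the counts $geNum$ and $csNum$. I would therefore keep the proof short, structured as "phase~1 by \cref{theo:init_time_complex}; phase~2 by the change specific epoch theorem times $csNum$; phase~3 by inspecting alg.~\ref{TDSM_offline} lines~\ref{start_general}--\ref{TDSM_offline_optimization} times $geNum$; sum."
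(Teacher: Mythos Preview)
Your proposal is correct and follows essentially the same approach as the paper's proof: decompose into initialization, change specific epochs, and general epochs, cite the already-proved bounds for the first two, and observe that each general epoch evaluates the scoring function on every training triple (and one corrupted partner). Your write-up is simply more explicit about the bookkeeping (the role of \texttt{uniform\_mix}, the factor~$2$ from the corrupted triples, the batching not affecting the total count), but the structure and the key observations are identical to the paper's brief argument.
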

\begin{proof}
	We already proved the intermediate results for the initialization and the change specific epochs.
	There are $csNum$ change specific and $geNum$ general epochs in our online method.
	Each general epoch runs through the entire training set of the current time step $t+1$ and evaluates the scoring function for each triple (and a corrupted one).
\end{proof}
\begin{mytheo}
	The online method requires at most $\mathcal{O}(initTimes\cdot (|\Delta V^{+}|+|\Delta\mathcal{R}^{+}|)+(|V^{(t+1)}|+|\mathcal{R}^{(t+1)}|)\cdot (csNum+geNum\cdot num\_batch))$ gradient steps.
\end{mytheo}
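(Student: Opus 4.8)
The plan is to bound the number of gradient steps contributed by each of the three phases of the online method separately — initialization, change specific epochs, general epochs — and then add them. The first two bounds have already been established as lemmas in the excerpt: initialization requires at most $\mathcal{O}(initTimes\cdot(|\Delta V^{+}|+|\Delta\mathcal{R}^{+}|))$ gradient steps, and a single change specific epoch requires at most $\mathcal{O}(|V^{(t+1)}|+|\mathcal{R}^{(t+1)}|)$ gradient steps. Since \cref{TDSM_online} runs exactly $csNum$ change specific epochs, that phase contributes $\mathcal{O}(csNum\cdot(|V^{(t+1)}|+|\mathcal{R}^{(t+1)}|))$ gradient steps.

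The remaining work is to count the gradient steps in a general epoch. First I would recall (from lines~\ref{start_general}--\ref{TDSM_offline_optimization} of \cref{TDSM_offline}) that a general epoch divides the training set into $num\_batch$ batches and performs one backpropagation per batch, i.e.\ $num\_batch$ backpropagations in total. Then I would invoke the convention fixed earlier in \cref{sec:time_complexity_analysis}: a backpropagation is a collection of gradient steps run in parallel, one gradient step per vector representation that is updated. In the worst case a single batch touches every entity and every relation in the current KG, so one backpropagation costs at most $|V^{(t+1)}|+|\mathcal{R}^{(t+1)}|$ gradient steps. Multiplying, one general epoch costs at most $(|V^{(t+1)}|+|\mathcal{R}^{(t+1)}|)\cdot num\_batch$ gradient steps, and the $geNum$ general epochs together cost $\mathcal{O}(geNum\cdot num\_batch\cdot(|V^{(t+1)}|+|\mathcal{R}^{(t+1)}|))$.

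Summing the three contributions gives
\[
\mathcal{O}\bigl(initTimes\cdot(|\Delta V^{+}|+|\Delta\mathcal{R}^{+}|)\bigr)
+\mathcal{O}\bigl(csNum\cdot(|V^{(t+1)}|+|\mathcal{R}^{(t+1)}|)\bigr)
+\mathcal{O}\bigl(geNum\cdot num\_batch\cdot(|V^{(t+1)}|+|\mathcal{R}^{(t+1)}|)\bigr),
\]
and factoring $(|V^{(t+1)}|+|\mathcal{R}^{(t+1)}|)$ out of the last two terms collapses this to the claimed bound $\mathcal{O}\bigl(initTimes\cdot(|\Delta V^{+}|+|\Delta\mathcal{R}^{+}|)+(|V^{(t+1)}|+|\mathcal{R}^{(t+1)}|)\cdot(csNum+geNum\cdot num\_batch)\bigr)$. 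The only mildly delicate point — and the one I would be most careful to state explicitly — is the accounting convention that turns "one backpropagation over a full batch" into "at most $|V^{(t+1)}|+|\mathcal{R}^{(t+1)}|$ gradient steps"; everything else is bookkeeping and reuse of the already-proved intermediate theorems. Note also that the validation steps and corrupted-triple sampling are ignored by assumption (as stated in \cref{sec:time_complexity_analysis}), so they do not enter the count.
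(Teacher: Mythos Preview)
Your proposal is correct and follows essentially the same approach as the paper's own proof: reuse the already-established bounds for initialization and change specific epochs, then argue that each general epoch performs $num\_batch$ backpropagations, each of which in the worst case touches every entity and relation in the current snapshot. The paper's proof is terser and omits the explicit summation and factoring you spell out, but the substance is identical.
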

\begin{proof}
	Again, we already proved the intermediate results for the initialization and the change specific epochs.
	Each general epoch first divides the training set into $num\_batch$ batches and performs a backpropagation for each one.
	In the worst case, every entity and relation in the KG is part of a triple in each batch.
	Then, each of their vector representations has to be updated by one gradient step.
\end{proof}

Note that the online method is an extension of the offline method, i.e., if we just throw away the embedding of the last snapshot, ignore the initialization of new entities and relations, set the number of change specific epochs to zero and of general epochs to $num\_epoch$ as used in alg.~\ref{TDSM_offline}, it boils down to the original offline method.
This means, the time complexity of the offline method is given by $\mathcal{O}(num\_epoch\cdot |S|)$ in terms of evaluations of the scoring function and $\mathcal{O}(num\_epoch\cdot num\_batch\cdot (|V|+|\mathcal{R}|))$ for the gradient steps.\footnote{The formalism is adjusted to alg.~\ref{TDSM_offline}. If the offline method is used to recalculate the embeddings for the snapshots of an dynamic KG, the training set and sets of entities and relations have to be marked with the time step.}
To obtain good results, the change of the dynamic KG between two consecutive time steps should be rather small\footnote{We make a choice for a reasonable size in \cref{main-sec:evaluation_on_real_datasets} in the main paper, but in general, this is up the user or the particular use case.}, i.e., $|\Delta E^{+}_{S}| << |S^{(t+1)}|$, $|\Delta E^{-}_{S}| << |S^{(t+1)}|$, $|\Delta V^{+}| << |V^{(t+1)}|$ and $|\Delta\mathcal{R}^{+}| << |\mathcal{R}^{(t+1)}|$.
Therefore, the number of general epochs mainly determines the runtime.
While the offline method starts from scratch, the online method continues the training of an already existing embedding.
As a result, we usually have $num\_epoch >> geNum$, which means that in practice a recalculation of the entire embedding using the offline method takes longer than an update step by the online method.
We will validate this statement for a large-scale dynamic KG in \cref{main-sec:evaluation_on_real_datasets} in the main paper.

\section{Evaluation on Real Datasets}
This section includes more details about the used datasets as well as the evaluation in terms of stability and scalability.
\subsection{Datasets}
\subsubsection{NELL5}
As already briefly described in \cref{main-sec:use_cases} in the main paper, NELL~\cite{NEL} is a system, which constructs a probabilistic KG automatically by extracting knowledge from the World Wide Web over time since 2010.
We use the extracted KG of high-confidence triples\footnote{http://rtw.ml.cmu.edu/rtw/resources} with $2,766,078$ triples over $833$ relations and $2{,}211{,}486$ entities.
As we can see for example at the triple per entity proportion, NELL's KG is really sparse.
The reason for that is that NELL extracts many entities with just one assigned category, which we translate into a triple of the relation ``type'', e.g. $\texttt{(empire\_state\_building, type, skyscraper)}$.
Sparsity is a basic problem regarding KG embedding.
If the KG is too sparse, there are too less constraints to learn an appropriate embedding.
Although we are more concerned to keep the embedding quality stable over time, the start embedding has to be at least reasonable enough to draw conclusions about the performance of our approach, when we update it.
Hence, we preprocess NELL's KG by iteratively excluding all entities and relations (as well as the triples including them) with less than a certain threshold of triples, in which they occur, until the resulting KG is stable.
The exact statistics about the change between the different snapshots are given in table~\ref{tab:nell5} of the appendix.

\begin{table}[!h]
\caption[Statistics of NELL5 $2\_20\_0.5\_0$]{The cardinality of changes between two consecutive snapshots of NELL5 $2\_20\_0.5\_0$. On average the training sets contain $181{,}606$ triples, the validation sets $9{,}978$ and the test sets $10{,}074$.} \label{tab:nell5}
\resizebox{\textwidth}{!}{
\begin{tabular}{|c||cc|cc|cc|cc|cc|}
\hline
\textbf{Step} & $|\Delta E^{+}_{S}|$ & $|\Delta E^{-}_{S}|$ & $|\Delta E^{+}_{W}|$ & $|\Delta E^{-}_{W}|$ & $|\Delta E^{+}_{T}|$ & $|\Delta E^{-}_{T}|$ & $|\Delta\mathcal{R}^{+}|$ & $|\Delta\mathcal{R}^{-}|$ & $|\Delta V^{+}|$ & $|\Delta V^{-}|$ \\
\hline
\hline
$0-1$ & $9{,}118$ & $9{,}046$ & $477$ & $510$ & $488$ & $527$ & $0$ & $11$ & $992$ & $438$ \\
$1-2$ & $9{,}081$ & $9{,}081$ & $492$ & $488$ & $510$ & $514$ & $0$ & $16$ & $431$ & $630$ \\
$2-3$ & $9{,}130$ & $9{,}112$ & $465$ & $488$ & $488$ & $483$ & $4$ & $2$ & $1{,}245$ & $1{,}256$ \\
$3-4$ & $9{,}095$ & $9{,}067$ & $480$ & $508$ & $508$ & $508$ & $6$ & $1$ & $773$ & $2{,}028$ \\
$4-5$ & $9{,}065$ & $9{,}075$ & $514$ & $490$ & $503$ & $517$ & $3$ & $1$ & $1{,}078$ & $757$ \\
$5-6$ & $9{,}086$ & $9{,}085$ & $498$ & $520$ & $499$ & $478$ & $5$ & $2$ & $1{,}139$ & $1{,}243$ \\
$6-7$ & $9{,}041$ & $9{,}102$ & $518$ & $499$ & $524$ & $482$ & $1$ & $2$ & $1{,}631$ & $1{,}394$ \\
$7-8$ & $9{,}100$ & $9{,}102$ & $491$ & $468$ & $492$ & $513$ & $0$ & $0$ & $523$ & $353$ \\
$8-9$ & $9{,}043$ & $9{,}071$ & $495$ & $533$ & $545$ & $479$ & $0$ & $3$ & $182$ & $170$ \\
$9-10$ & $9{,}090$ & $9{,}047$ & $501$ & $485$ & $492$ & $551$ & $1$ & $2$ & $102$ & $268$ \\
$10-11$ & $9{,}085$ & $9{,}081$ & $482$ & $500$ & $516$ & $502$ & $123$ & $8$ & $2{,}337$ & $455$ \\
$11-12$ & $9{,}052$ & $9{,}100$ & $505$ & $488$ & $526$ & $495$ & $18$ & $13$ & $1{,}288$ & $753$ \\
$12-13$ & $9{,}114$ & $9{,}144$ & $490$ & $487$ & $479$ & $452$ & $16$ & $37$ & $831$ & $1{,}526$ \\
$13-14$ & $9{,}068$ & $9{,}127$ & $485$ & $490$ & $530$ & $466$ & $7$ & $3$ & $541$ & $1{,}031$ \\
$14-15$ & $9{,}075$ & $9{,}055$ & $499$ & $519$ & $509$ & $509$ & $2$ & $5$ & $324$ & $1{,}202$ \\
$15-16$ & $9{,}033$ & $9{,}072$ & $549$ & $518$ & $500$ & $492$ & $4$ & $7$ & $248$ & $451$ \\
$16-17$ & $9{,}106$ & $9{,}054$ & $496$ & $503$ & $481$ & $526$ & $5$ & $2$ & $199$ & $338$ \\
$17-18$ & $9{,}025$ & $9{,}059$ & $525$ & $522$ & $533$ & $502$ & $3$ & $5$ & $135$ & $728$ \\
$18-19$ & $9{,}082$ & $9{,}033$ & $513$ & $512$ & $488$ & $538$ & $4$ & $7$ & $91$ & $1{,}585$ \\
\hline
\end{tabular}
}
\end{table}

\subsubsection{MathOverflow}
The MathOverflow data set\footnote{https://snap.stanford.edu/data/sx-mathoverflow.html} is a temporal interaction graph of the users from the MathOverflow forum\footnote{https://mathoverflow.net/} and has been constructed from the Stack Exchange Data Dump over a time period of $2{,}350$ days~\cite{Mathoverflow}.
Interactions between users, which are represented by the entities, are modeled by only three types of relations: ``answers a question of'' (a2q), ``comments on an answer of'' (c2a) and ``comments on a question of'' (c2q).
The KG contains $506{,}550$ facts about $24{,}818$ entities.
Since the KG is already dense enough, we can use the original data set instead of filtering it first.
The statistics about the changes of the dynamic KG are given in table~\ref{tab:MathOverflow}.

\begin{table}
\caption[Statistics of MathOverflow $2\_20\_0.5\_0$]{The cardinality of changes between two consecutive snapshots of MathOverflow $2\_20\_0.5\_0$. On average the training sets contain $129{,}083$ triples, the validation sets $7{,}177$ and the test sets $7{,}117$.} \label{tab:MathOverflow}
\resizebox{\textwidth}{!}{
\begin{tabular}{|c||cc|cc|cc|cc|cc|}
\hline
\textbf{Step} & $|\Delta E^{+}_{S}|$ & $|\Delta E^{-}_{S}|$ & $|\Delta E^{+}_{W}|$ & $|\Delta E^{-}_{W}|$ & $|\Delta E^{+}_{T}|$ & $|\Delta E^{-}_{T}|$ & $|\Delta\mathcal{R}^{+}|$ & $|\Delta\mathcal{R}^{-}|$ & $|\Delta V^{+}|$ & $|\Delta V^{-}|$ \\
\hline
\hline
$0-1$ & $6{,}232$ & $6{,}256$ & $378$ & $328$ & $326$ & $368$ & $0$ & $0$ & $605$ & $277$ \\
$1-2$ & $6{,}149$ & $5{,}534$ & $367$ & $288$ & $347$ & $300$ & $0$ & $0$ & $670$ & $229$ \\
$2-3$ & $6{,}130$ & $5{,}552$ & $320$ & $302$ & $329$ & $295$ & $0$ & $0$ & $607$ & $220$ \\
$3-4$ & $6{,}044$ & $6{,}062$ & $349$ & $325$ & $345$ & $343$ & $0$ & $0$ & $625$ & $247$ \\
$4-4$ & $6{,}447$ & $6{,}112$ & $354$ & $312$ & $336$ & $349$ & $0$ & $0$ & $722$ & $264$ \\
$5-5$ & $6{,}119$ & $6{,}101$ & $332$ & $314$ & $351$ & $337$ & $0$ & $0$ & $742$ & $246$ \\
$6-6$ & $5{,}624$ & $6{,}327$ & $326$ & $359$ & $294$ & $370$ & $0$ & $0$ & $701$ & $323$ \\
$7-7$ & $5{,}797$ & $6{,}205$ & $329$ & $332$ & $316$ & $334$ & $0$ & $0$ & $744$ & $328$ \\
$8-8$ & $5{,}786$ & $6{,}039$ & $320$ & $334$ & $327$ & $333$ & $0$ & $0$ & $710$ & $282$ \\
$9-10$ & $6{,}004$ & $6{,}001$ & $329$ & $333$ & $334$ & $337$ & $0$ & $0$ & $779$ & $292$ \\
$10-11$ & $5{,}898$ & $6{,}039$ & $321$ & $377$ & $307$ & $330$ & $0$ & $0$ & $717$ & $345$ \\
$11-12$ & $5{,}747$ & $6{,}432$ & $301$ & $350$ & $319$ & $353$ & $0$ & $0$ & $703$ & $387$ \\
$12-13$ & $5{,}807$ & $6{,}574$ & $312$ & $324$ & $330$ & $343$ & $0$ & $0$ & $702$ & $396$ \\
$13-14$ & $5{,}496$ & $6{,}126$ & $307$ & $334$ & $303$ & $358$ & $0$ & $0$ & $680$ & $468$ \\
$14-15$ & $5{,}677$ & $6{,}115$ & $319$ & $350$ & $311$ & $348$ & $0$ & $0$ & $690$ & $464$ \\
$15-16$ & $5{,}638$ & $6{,}322$ & $322$ & $348$ & $307$ & $342$ & $0$ & $0$ & $676$ & $465$ \\
$16-17$ & $5{,}749$ & $6{,}145$ & $314$ & $351$ & $312$ & $308$ & $0$ & $0$ & $793$ & $502$ \\
$17-18$ & $5{,}556$ & $6{,}337$ & $316$ & $336$ & $328$ & $360$ & $0$ & $0$ & $631$ & $500$ \\
$18-19$ & $5{,}755$ & $6{,}196$ & $281$ & $327$ & $329$ & $343$ & $0$ & $0$ & $735$ & $530$ \\
\hline
\end{tabular}
}
\end{table}

\subsubsection{Higgs Twitter}
The Higgs Twitter data set\footnote{https://snap.stanford.edu/data/higgs-twitter.html} consists of the messages posted on Twitter between the 1st and 7th July 2012 about the discovery of a new particle with the features of the elusive Higgs boson on the 4th July 2012~\cite{Higgs}.
To be more precise, the dynamic KG represents four different types of interactions between users: ``user A follows user B'' (FL), ``user A re-tweets a tweet from user B'' (RT), ``user A replies to a tweet of user B'' (RE) and ``user A mentions user B'' (MT).
Overall, the data set contains $15{,}367{,}315$ interactions between $456{,}626$ users.
However, the follower relation is responsible for $14{,}855{,}842$ ($96.67\%$) of these triples forming an underlying directed social network, which does not include any time stamps unlike the other relations.
As we need some kind of temporal order of the triples to obtain chronological snapshots, for each entity $v$, we compute the minimum time stamp $min_v$ of the timed triples, in which it occurs.
If the entity does not interact with any other entity in the form of a re-tweet, reply or mention, it is excluded from the dynamic KG. Hence, each entity has a minimum time stamp.
Now, given a triple $(h,follows,t)$, we assign the minimum of $min_h$ and $min_t$ as the time stamp.
Therefore, as soon as an entity has its first interaction with another one, all its direct predecessors and successors of the follower relation are or have been contained in the dynamic KG leading to a reasonable temporal order of triples while preventing extreme changes in the dynamic KG.
The preprocessed data set contains $10{,}473{,}241$ triples over $304{,}691$ entities.
The statistics about the changes of the training, validation and test sets are given in table~\ref{tab:higgs}.

\begin{table}
\caption[Statistics of Higgs Twitter $2\_20\_0.5\_0$]{The cardinality of changes between two consecutive snapshots of Higgs Twitter $2\_20\_0.5\_0$. On average the training sets contain $4{,}694{,}363$ triples, the validation sets $260{,}790$ and the test sets $259{,}978$.} \label{tab:higgs}
\resizebox{\textwidth}{!}{
\begin{tabular}{|c||cc|cc|cc|cc|cc|}
\hline
\textbf{Step} & $|\Delta E^{+}_{S}|$ & $|\Delta E^{-}_{S}|$ & $|\Delta E^{+}_{W}|$ & $|\Delta E^{-}_{W}|$ & $|\Delta E^{+}_{T}|$ & $|\Delta E^{-}_{T}|$ & $|\Delta\mathcal{R}^{+}|$ & $|\Delta\mathcal{R}^{-}|$ & $|\Delta V^{+}|$ & $|\Delta V^{-}|$ \\
\hline
\hline
$0-1$ & $233{,}234$ & $235{,}383$ & $13{,}106$ & $13{,}051$ & $13{,}148$ & $13{,}348$ & $0$ & $0$ & $1{,}287$ & $1{,}909$ \\
$1-2$ & $234{,}491$ & $235{,}609$ & $12{,}842$ & $12{,}992$ & $12{,}970$ & $13{,}157$ & $0$ & $0$ & $1{,}458$ & $1{,}319$ \\
$2-3$ & $234{,}067$ & $235{,}254$ & $13{,}069$ & $13{,}081$ & $12{,}947$ & $13{,}376$ & $0$ & $0$ & $1{,}813$ & $2{,}989$ \\
$3-4$ & $233{,}081$ & $235{,}422$ & $12{,}939$ & $13{,}260$ & $13{,}022$ & $12{,}959$ & $0$ & $0$ & $2{,}450$ & $1{,}842$ \\
$4-5$ & $234{,}033$ & $235{,}344$ & $12{,}979$ & $13{,}100$ & $12{,}972$ & $13{,}122$ & $0$ & $0$ & $1{,}924$ & $1{,}650$ \\
$5-6$ & $234{,}262$ & $235{,}278$ & $13{,}298$ & $13{,}022$ & $12{,}865$ & $13{,}074$ & $0$ & $0$ & $1{,}363$ & $2{,}159$ \\
$6-7$ & $234{,}618$ & $235{,}345$ & $12{,}969$ & $12{,}938$ & $12{,}922$ & $13{,}279$ & $0$ & $0$ & $1{,}714$ & $2{,}287$ \\
$7-8$ & $234{,}556$ & $235{,}437$ & $12{,}728$ & $13{,}011$ & $13{,}014$ & $12{,}983$ & $0$ & $0$ & $1{,}534$ & $3{,}026$ \\
$8-9$ & $234{,}620$ & $235{,}521$ & $13{,}019$ & $12{,}978$ & $12{,}963$ & $13{,}041$ & $0$ & $0$ & $1{,}878$ & $3{,}333$ \\
$9-10$ & $234{,}473$ & $235{,}452$ & $13{,}109$ & $13{,}009$ & $13{,}003$ & $12{,}938$ & $0$ & $0$ & $2{,}402$ & $3{,}688$ \\
$10-11$ & $234{,}309$ & $235{,}061$ & $13{,}069$ & $13{,}182$ & $13{,}200$ & $13{,}041$ & $0$ & $0$ & $2{,}780$ & $2{,}157$ \\
$11-12$ & $234{,}430$ & $235{,}124$ & $13{,}166$ & $13{,}079$ & $13{,}090$ & $13{,}014$ & $0$ & $0$ & $2{,}317$ & $2{,}851$ \\
$12-13$ & $234{,}717$ & $235{,}274$ & $12{,}977$ & $12{,}832$ & $12{,}981$ & $12{,}901$ & $0$ & $0$ & $2{,}490$ & $3{,}089$ \\
$13-14$ & $234{,}550$ & $234{,}964$ & $13{,}037$ & $13{,}238$ & $12{,}903$ & $12{,}950$ & $0$ & $0$ & $2{,}727$ & $2{,}871$ \\
$14-15$ & $234{,}259$ & $234{,}972$ & $13{,}140$ & $13{,}056$ & $13{,}036$ & $12{,}950$ & $0$ & $0$ & $2{,}547$ & $3{,}324$ \\
$15-16$ & $234{,}401$ & $235{,}084$ & $12{,}939$ & $13{,}104$ & $13{,}040$ & $12{,}824$ & $0$ & $0$ & $2{,}759$ & $3{,}040$ \\
$16-17$ & $234{,}136$ & $234{,}114$ & $12{,}946$ & $13{,}152$ & $12{,}986$ & $12{,}985$ & $0$ & $0$ & $2{,}768$ & $2{,}419$ \\
$17-18$ & $233{,}699$ & $234{,}140$ & $13{,}131$ & $12{,}801$ & $12{,}824$ & $12{,}882$ & $0$ & $0$ & $3{,}686$ & $2{,}051$ \\
$18-19$ & $232{,}654$ & $234{,}080$ & $12{,}718$ & $13{,}001$ & $12{,}909$ & $13{,}111$ & $0$ & $0$ & $3{,}548$ & $2{,}290$ \\
\hline
\end{tabular}
}
\end{table}

\subsection{Experimental Setup}\label{sec:experimental_setup}
To determine the best configuration of our approach, the link prediction performance as well as the stability of the online method is evaluated for different combinations of parameters.
However, before we discuss these combinations, we provide all parameters of the initial embedding trained by the offline method.
Therefore, we simply adopt the settings, which were used by Han et al.~\cite{openke} for embeddings of the static FB15K data set.
\footnote{In this paper we use standard SGD for all TD models and Adagrad for all SM models and most parameters are as was found in the work around OpenKE\cite{openke}. A further work on parameter settings provides more dataset specific training settings \cite{pykeen}, but in the current work it is less important to find optimal performance than it is to find a stable performance over time.}

The dimension of the embedding space is $d=100$ for each translational distance and semantic matching model except for \textsc{Rescal} with $d=50$.
Furthermore, we minimize the loss functions of translational distance models using plain stochastic gradient descent with a learning rate of $\lambda =0.001$, while for semantic matching models we use Adagrad~\cite{Adagrad} with a learning rate of $\lambda =0.1$.
As the weight for the $L_{2}$-penalty we use $w=0.02$ and additionally we decrease all learning rates in the offline and online method by multiplying it with $0.95$, if the loss function is not decreasing by $0.5\%$ for $20$ consecutive epochs to converge faster towards local minima.
The current state of the embedding is evaluated on the validation set after every $10$ epochs and if the link prediction performance in terms of Hits@10 is not increasing for ten consecutive validations, the offline or online method is stopped early.
The offline method runs for at most $1000$ epochs.

After introducing up to twelve new parameters for the complete online method, we need to have a consistent naming of the graphs for the combinations of parameters.
\Cref{fig:parameter} describes the order of parameters with their possible values and in which part of the online method they occur.
\begin{figure}[!b]
\includegraphics[width=\textwidth]{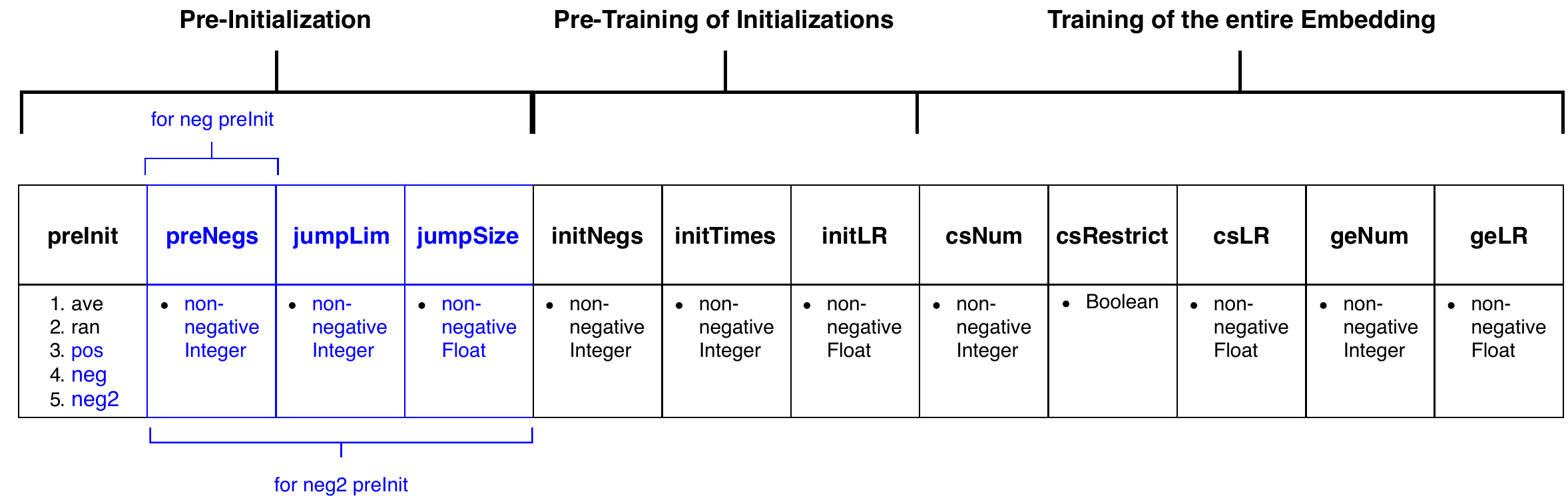}
\caption[Order of all hyperparameters for the online method with their possible values]{Parameter order with their possible values.  Blue marked parameters and values are only usable for \textsc{TransE} and \textsc{TransH}.} \label{fig:parameter}
\end{figure}
Furthermore, the meaning of the different parameter names can be summarized as follows:
\begin{enumerate}
\item \textbf{preInit}: method for the pre-initialization of new entities and relations as described in \cref{main-sec:global_optima,main-sec:negative_evidence} in the main paper
\item \textbf{preNegs}: number of negative triples per positive one, if either neg or neg2 is set as pre-initialization mode
\item \textbf{jumpLim}: maximum number of epochs in iterative approach of incorporating negative evidence
\item \textbf{jumpSize}: size of steps in iterative approach
\item \textbf{initNegs}: number of negative triples per positive one in pre-training of initializations as described in \cref{main-sec:local_optima} in the main paper
\item \textbf{initTimes}: maximum number of epochs in pre-training of initializations
\item \textbf{initLR}: learning rate in pre-training of initializations
\item \textbf{csNum}: number of change specific epochs in updates of old embeddings
\item \textbf{csRestrict}: flag for the restriction of change specific epochs to added triples with already embedded entities and relations
\item \textbf{csLR}: learning rate of change specific epochs
\item \textbf{geNum}: number of general epochs
\item \textbf{geLR}: learning rate of general epochs
\end{enumerate}

To reduce the number of combinations for the parameter tuning, we fix the parameters for pre-initialization as well as pre-training to the following combination, which showed the best results in the task of link prediction on the synthetic dynamic KG of FB15K.
\begin{equation}
\begin{split}
preInit &= ave \\
initNegs &= 1 \\
initTimes &= 50 \\
initLR &=\begin{cases}
 0.001 & \text{for translational distance models}\\
 0.1  & \text{for semantic matching models}\\
\end{cases}
\end{split}
\end{equation}
However, if we only regard two values for the remaining five parameters for each of the six embedding models, we still have $6\cdot 2^5 = 192$ combinations for a grid search on each data set.
To reduce this number even further, we choose \textsc{TransE} and \textsc{DistMult} as the representatives for translational distance and semantic matching models and stick to the synthetic dynamic KG of FB15K for now.
Moreover, we require the sum of change specific and general epochs to be $200$ in order to tune the ratio instead of the overall number of epochs and also set the learning rate of general epochs to $0.0002$ for \textsc{TransE} and $0.02$ for \textsc{DistMult}, as these settings showed the best link prediction performance in experiments, where only the general learning rate was tuned. 
We conduct a grid search with $csNum\in\{0, 20, 40\}$, $csRestrict\in\{False, True\}$ and $csLR\in\{0.0001, 0.0002, 0.0004\}$ for translational distance models or $csLR\in\{0.01, 0.02, 0.04\}$ for semantic matching models respectively to find the remaining three hyperparameters. 
The differences between the chosen values for the learning rate are so small, because they have to be seen in proportion to the learning rate of the general epochs.
Moreover, the change specific epochs strongly bias the embedding towards the changes of the dynamic KG, which should be used very carefully.
We will use the best configurations in comparison to completely recalculated embeddings for each time step on the real dynamic KGs afterwards.

\paragraph*{Best Configuration}
As we noticed that the restriction of change specific epochs to additions with already embedded entities and relations does not influence the link prediction results substantially, we only look at the configurations without the restriction.
Furthermore, we use the MRR metric as a good compromise of Hits@n and MR for the comparison in \cref{TransE_FB15K_changespecs_MRR,DistMult_FB15K_changespecs_MRR}.
\begin{figure}[!b]
\includegraphics[clip, trim=5cm 0.5cm 5.8cm 0.5cm, width=\textwidth]{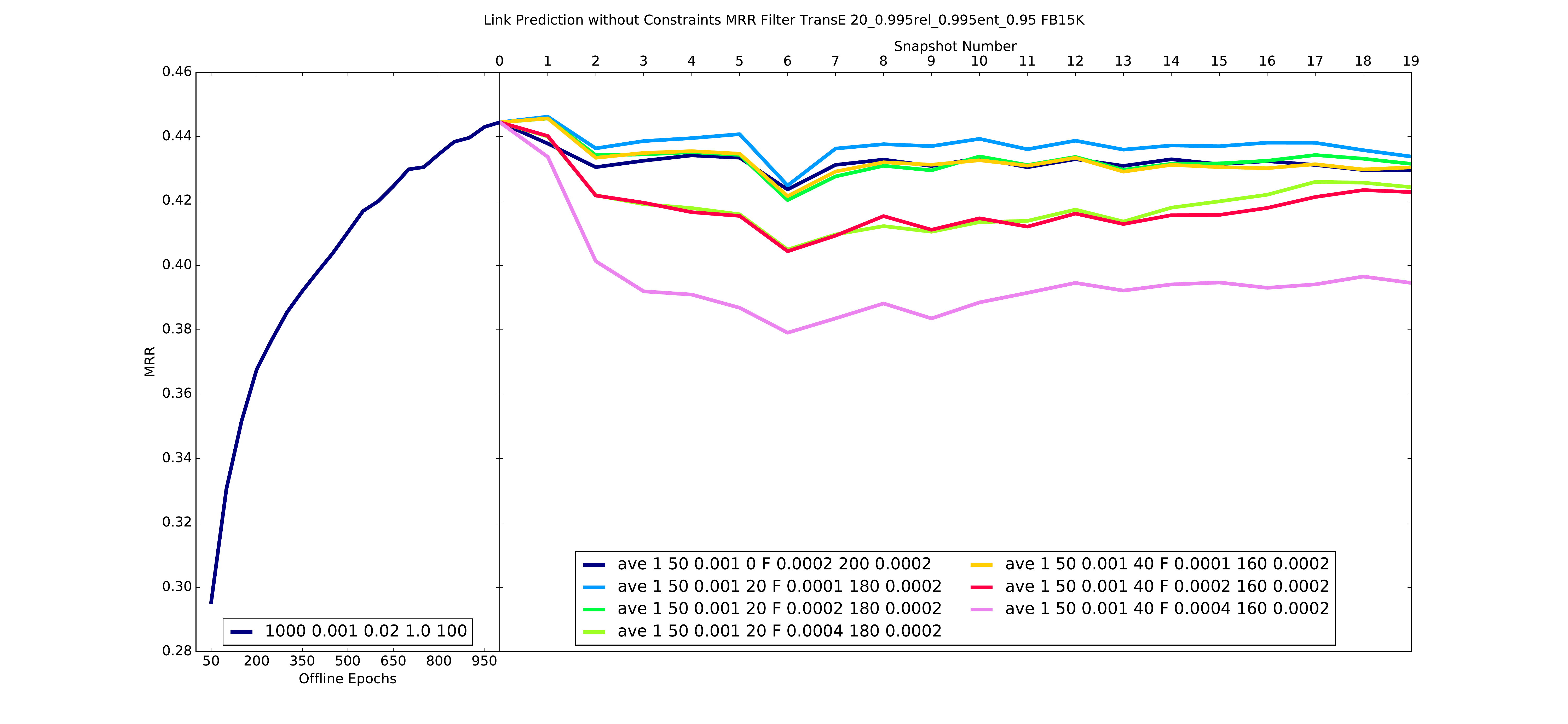}
\caption{Link prediction performance in the MRR metric of change specific epochs for \textsc{TransE}} \label{TransE_FB15K_changespecs_MRR}
\end{figure}
\begin{figure}
\includegraphics[clip, trim=5cm 0.5cm 5.8cm 0.5cm, width=\textwidth]{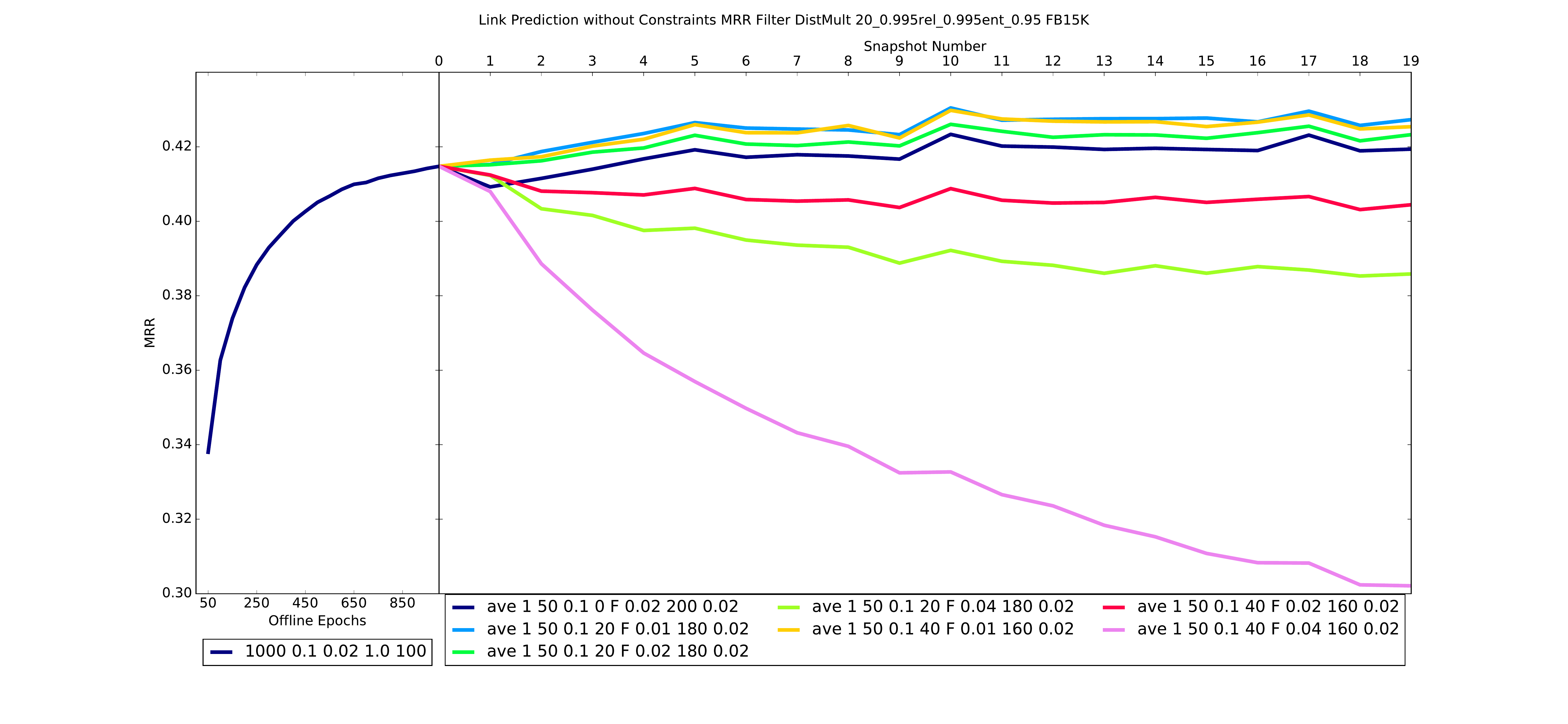}
\caption{Link prediction performance in the MRR metric of change specific epochs for \textsc{DistMult}} \label{DistMult_FB15K_changespecs_MRR}
\end{figure}
For both, \textsc{TransE} and \textsc{DistMult}, we can see that a too large number (e.g. $40$) of change specific epochs as well as a relatively high learning rate (twice the learning rate of general epochs) decreases the performance in comparison to the configuration without any change specific epochs.
This is due to the fact that the change specific epochs already have a really strong influence on the embedding, because they consider all additions and deletions at once instead of separating them into batches as done for general epochs.
However, if we choose only $20$ change specific epochs with half the learning rate of the general ones, the link prediction results are improving only slightly but consistently.
In the case of \textsc{DistMult}, this configuration even increases the MRR score of the initial embedding, while the setting without change specific epochs just keeps it more or less constant.
Although we replace $20$ general epochs by $20$ change specific epochs with only half the learning rate, the normalized mean change for the $L_2$-distance given in 
figures~\ref{TransE_FB15K_changespecs_L2_Rel} to~\ref{DistMult_FB15K_changespecs_L2_Ent} increases in comparison to the setup without any change specific epochs\footnote{The NMC score for snapshot $0$ in the right side plots results from the backtracking to the best embedding according to the validation results. It describes the change between the embedding at the end of the offline method and the best intermediate result, which becomes the initial embedding for the online method. Therefore, this score is excluded from any following calculations of arithmetic means, standard deviations or proportions.}.
\begin{figure}
\includegraphics[clip, trim=5cm 0.5cm 5.8cm 0.5cm, width=\textwidth]{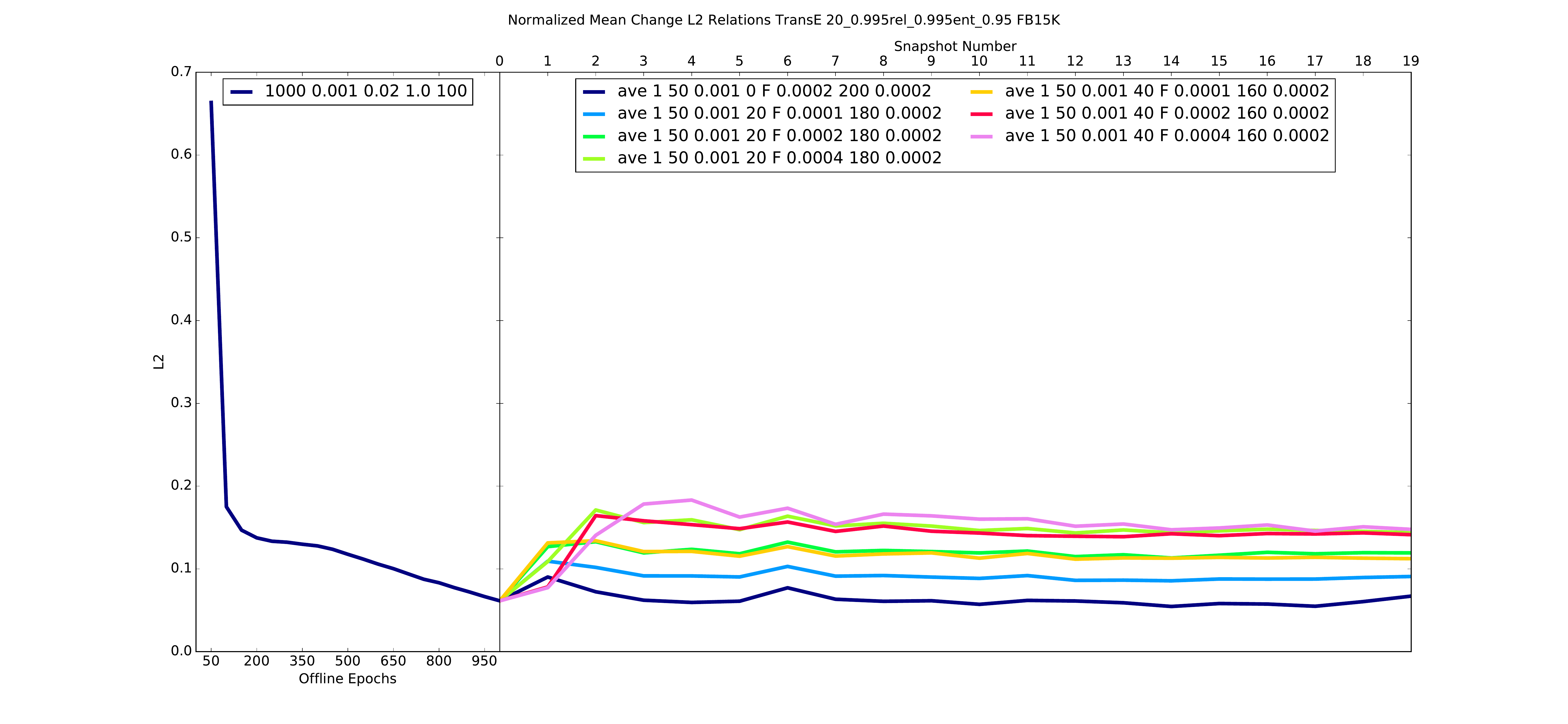}
\caption[Normalized Mean Change for relations using change specific epochs for \textsc{TransE} on FB15K]{Normalized Mean Change with the $L_2$-distance for relations using change specific epochs for \textsc{TransE} on %
the synthetic dynamic KG of FB15K}
\label{TransE_FB15K_changespecs_L2_Rel}
\end{figure}
\begin{figure}
\includegraphics[clip, trim=5cm 0.5cm 5.8cm 0.5cm, width=\textwidth]{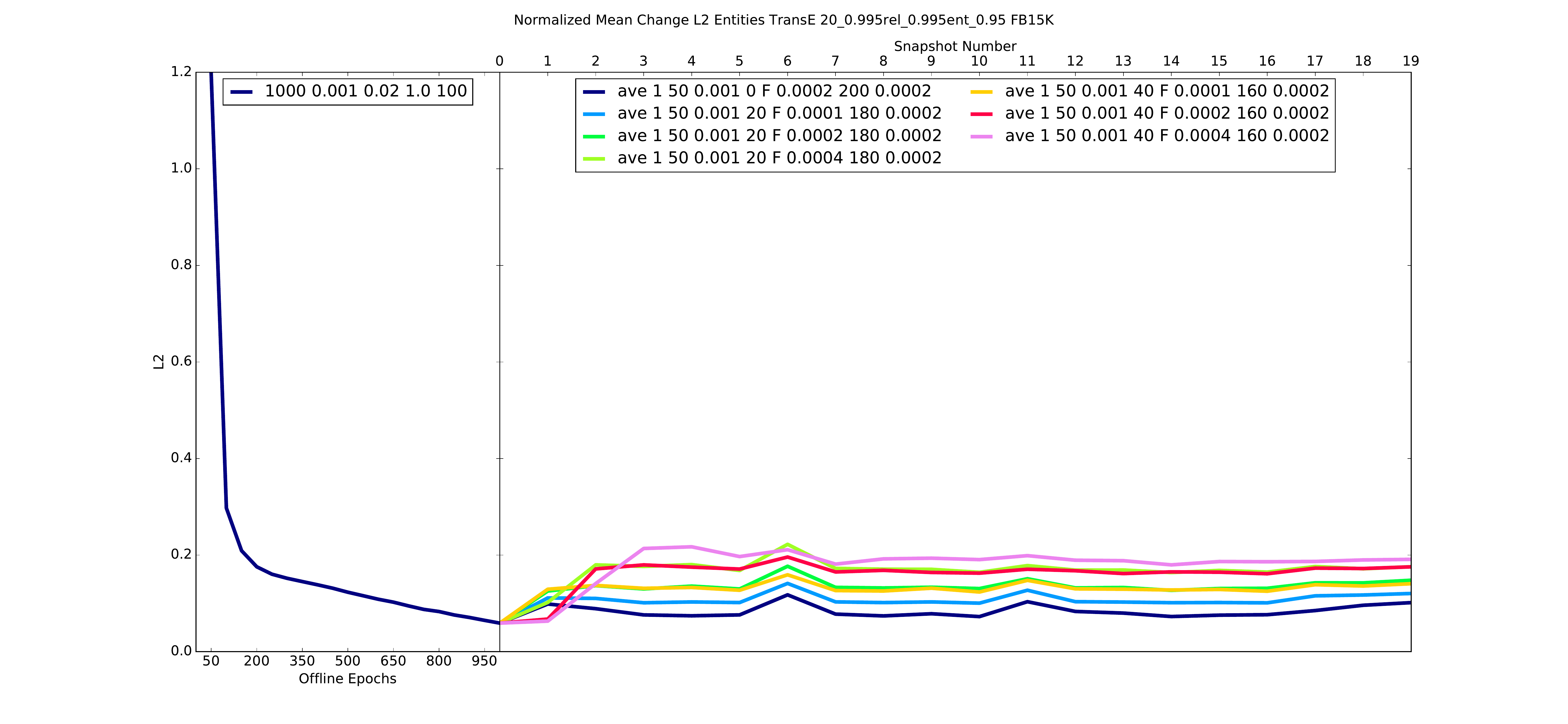}
\caption[Normalized Mean Change for entities using change specific epochs for \textsc{TransE} on FB15K]{Normalized Mean Change with the $L_2$-distance for entities using change specific epochs for \textsc{TransE} on %
the synthetic dynamic KG of FB15K}
\label{TransE_FB15K_changespecs_L2_Ent}
\end{figure}
\begin{figure}
\includegraphics[clip, trim=5cm 0.5cm 5.8cm 0.5cm, width=\textwidth]{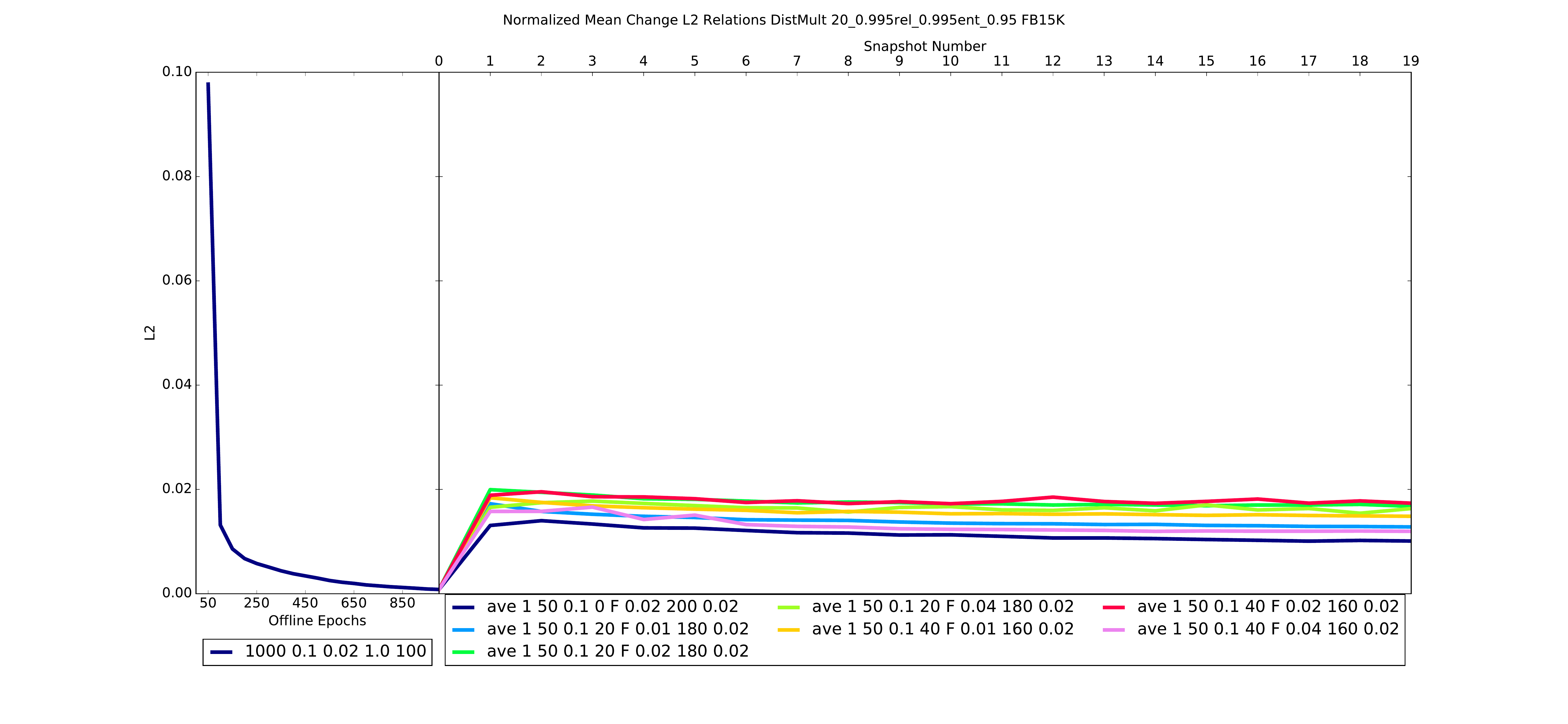}
\caption[Normalized Mean Change for relations using change specific epochs for \textsc{DistMult} on FB15K]{Normalized Mean Change with the $L_2$-distance for relations using change specific epochs for \textsc{DistMult} on %
the synthetic dynamic KG of FB15K}
\label{DistMult_FB15K_changespecs_L2_Rel}
\end{figure}
\begin{figure}
\includegraphics[clip, trim=5cm 0.5cm 5.8cm 0.5cm, width=\textwidth]{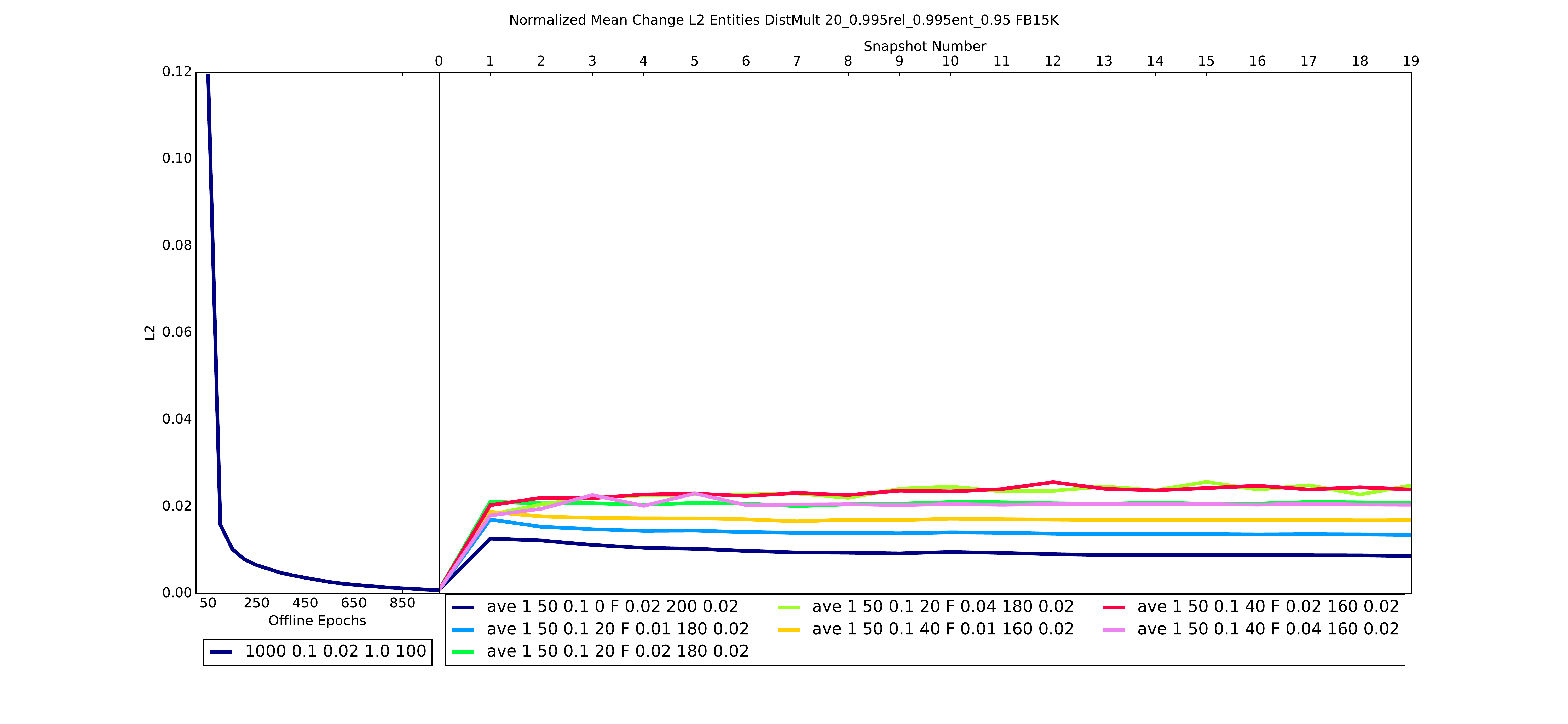}
\caption[Normalized Mean Change for entities using change specific epochs for \textsc{DistMult} on FB15K]{Normalized Mean Change with the $L_2$-distance for entities using change specific epochs for \textsc{DistMult} on %
the synthetic dynamic KG of FB15K}
\label{DistMult_FB15K_changespecs_L2_Ent}
\end{figure}
A possible explanation is that the change of the synthetic dynamic KG for FB15K (i.e. the change of the training set given in table~\ref{FB15K_20_0.995rel_0.995ent_0.95}) is much larger than one batch (ca. $4{,}757$ triples for a separation into $100$ batches) of a general epoch leading to a larger movement of the vector representations despite the relatively small learning rate.
However, since the impact of the change specific epochs on the link prediction performance as well as on the stability highly depends on the scale of change in the dynamic KG, these results just indicate the natural trade-off between embedding quality and stability, if the KG is changing substantially.
All in all, we consider the specified configuration with $20$ change specific epochs and half the learning rate of the general ones to be the best one, which will be used for the evaluation on the real data sets in the next sections.

\subsection{Computational Resources}
To run each of the experiments, we made use of a single node of a cluster. The servers we used have 2 Intel Xeon Platinum 8160 Processors (SkyLake --  2.1 GHz, 24 cores each totalling 48 cores) and 192 GB main memory. 
Further, we made use of a NVIDIA Volta 100 GPU with 16 GB HBM2 memory.
The servers run CentOS Linux 7 with kernel version 3.10.0-1160.15.2.el7.x86\_64.

\subsection{Results}
\subsubsection{Stability}\label{sec:stability_results}
Besides the measurement of the embedding quality, we employ the NMC score for an impression of the stability.
\paragraph{NELL}
The results for the embedding stability of our approach are given by the NMC scores (using the $L_2$-distance) separately for entities and relations as well as for translational distance and semantic matching models in \ref{TDs_NELL5_final_L2_Rel} to~\ref{SMs_NELL5_final_L2_Ent}.

\begin{figure}
\includegraphics[clip, trim=5cm 0.5cm 5.8cm 0.5cm, width=\textwidth]{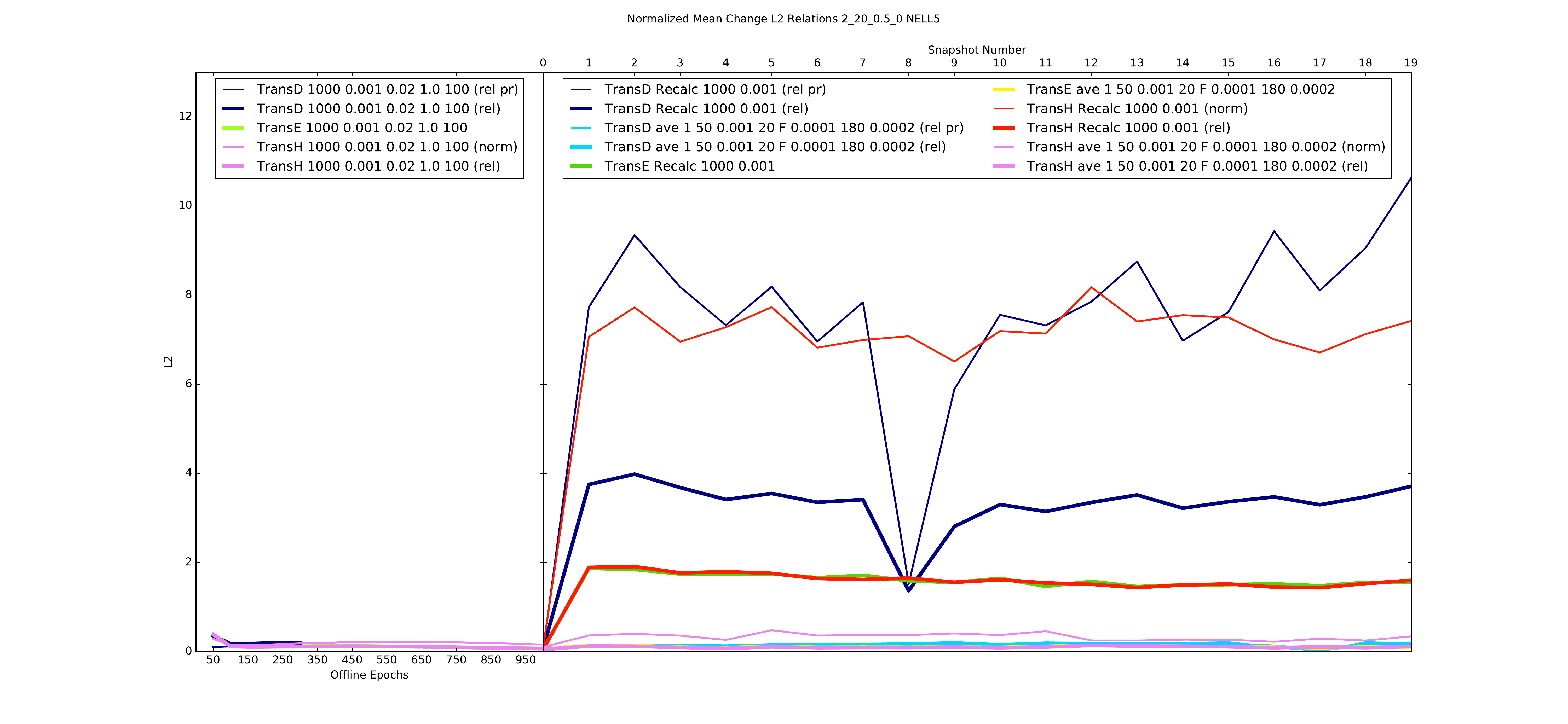}
\caption[Normalized Mean Change for relations using the online method in comparison to recalculations for translational distance models on NELL5]{Normalized Mean Change with the $L_2$-distance for relations using the online method in comparison to recalculations for translational distance models on %
NELL5}
\label{TDs_NELL5_final_L2_Rel}
\end{figure}
\begin{figure}
\includegraphics[clip, trim=5cm 0.5cm 5.8cm 0.5cm, width=\textwidth]{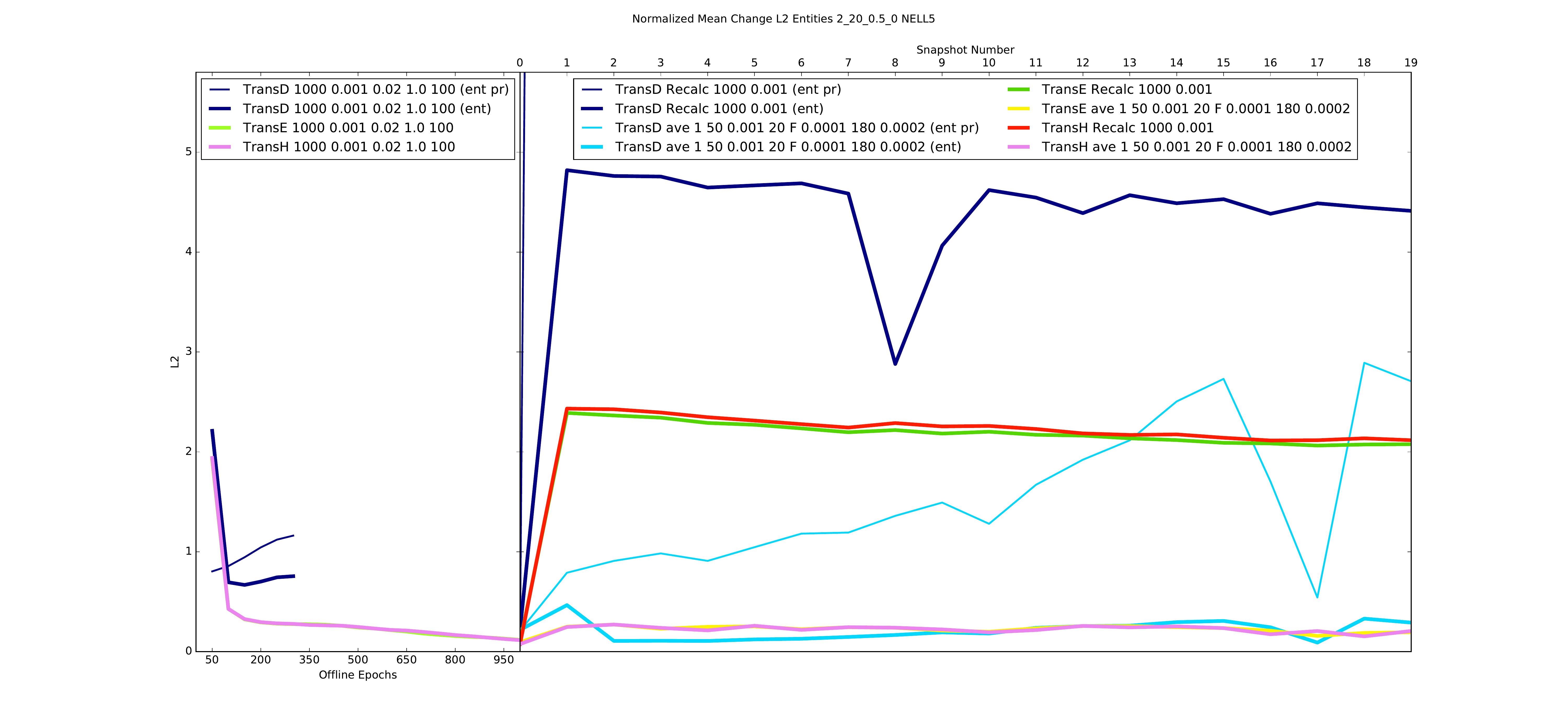}
\caption[Normalized Mean Change for entities using the online method in comparison to recalculations for translational distance models on NELL5]{Normalized Mean Change with the $L_2$-distance for entities using the online method in comparison to recalculations for translational distance models on %
NELL5. The NMC scores of the projection vectors for the recalculated embeddings of \textsc{TransD} have a much larger scale of $55.959$ on average with a standard deviation of $13.058$.}
\label{TDs_NELL5_final_L2_Ent}
\end{figure}
\begin{figure}
\includegraphics[clip, trim=5cm 0.5cm 5.8cm 0.5cm, width=\textwidth]{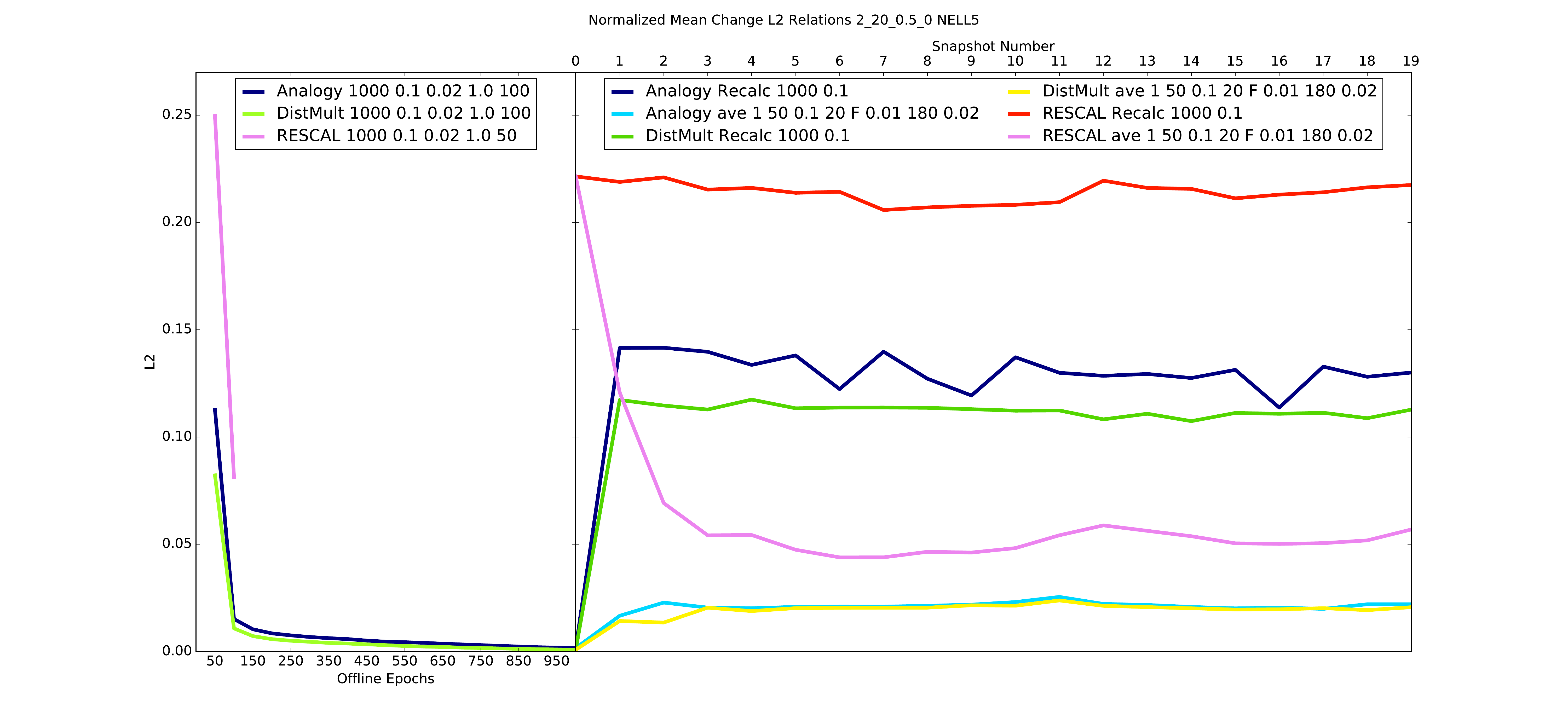}
\caption[Normalized Mean Change for relations using the online method in comparison to recalculations for semantic matching models on NELL5]{Normalized Mean Change with the $L_2$-distance for relations using the online method in comparison to recalculations for semantic matching models on %
NELL5}
\label{SMs_NELL5_final_L2_Rel}
\end{figure}
\begin{figure}
\includegraphics[clip, trim=5cm 0.5cm 5.8cm 0.5cm, width=\textwidth]{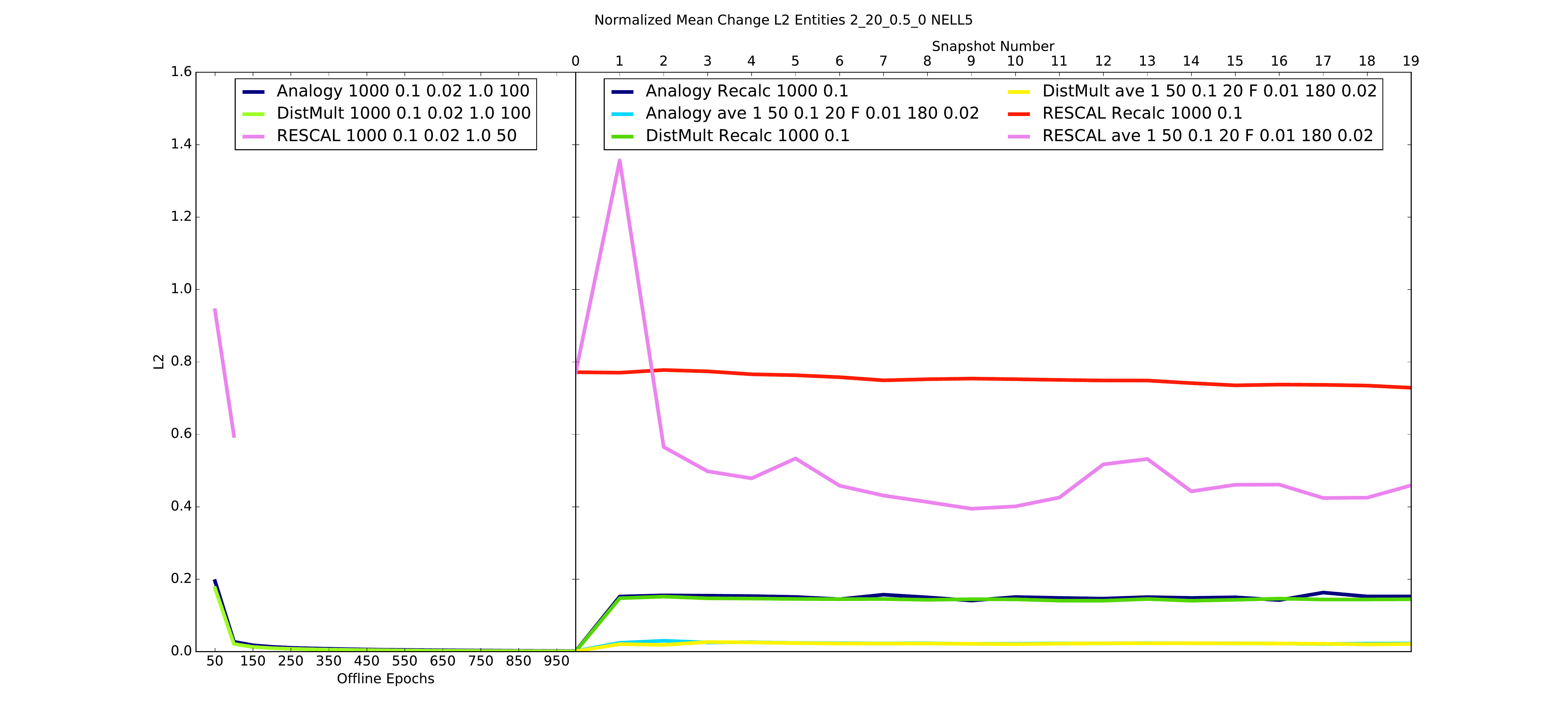}
\caption[Normalized Mean Change for entities using the online method in comparison to recalculations for semantic matching models on NELL5]{Normalized Mean Change with the $L_2$-distance for entities using the online method in comparison to recalculations for semantic matching models on %
NELL5}
\label{SMs_NELL5_final_L2_Ent}
\end{figure}

For the translational distance models we observe large scale differences between the NMC scores of the online method and the ones of recalculated embeddings.
In contrast to our approach, which continues the training with almost the same NMC score as in the final training phase of the initial embedding, simple recalculations result in average values of more than $10$ (for entities) and $15$ (for relations) times the scale for \textsc{TransE} and \textsc{TransH} and even $21$ and $22$ times the scale for \textsc{TransD}\footnote{We only focus on the stability for the actual entity and relation representations, not on the norm vectors of hyperplanes for \textsc{TransH} or the projection vectors for \textsc{TransD}.}.
This is not the case for the semantic matching models.
Although the online method still provides a better stability, the recalculated embeddings reach ``only'' $6$ times higher NMC scores for \textsc{DistMult} and \textsc{Analogy} and an even smaller proportion for \textsc{RESCAL}.
Furthermore, the overall values of this metric are smaller than for the translational distance models indicating less movement of the vectors during training.
\paragraph{MathOverflow}
Looking at the results for the embedding stability in terms of NMC scores (again with the $L_2$-distance) given in figures~\ref{TDs_MathOverflow_final_L2_Rel} to~\ref{SMs_MathOverflow_final_L2_Ent}, we can observe an overall higher stability of relation representations for translational distance models in comparison to the results for NELL5.
\begin{figure}
\includegraphics[clip, trim=5cm 0.5cm 5.8cm 0.5cm, width=\textwidth]{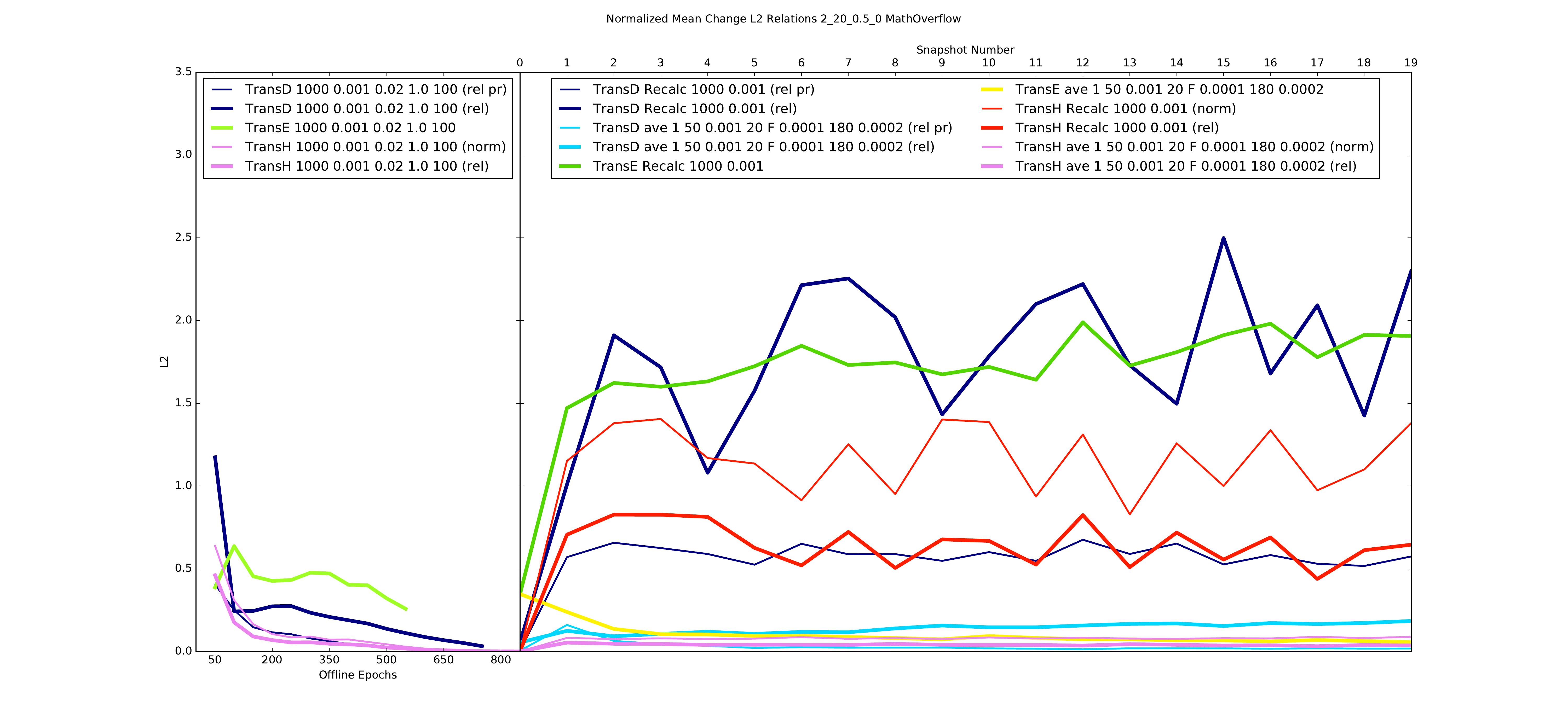}
\caption[Normalized Mean Change for relations using the online method in comparison to recalculations for translational distance models on MathOverflow]{Normalized Mean Change with the $L_2$-distance for relations using the online method in comparison to recalculations for translational distance models on %
MathOverflow}
\label{TDs_MathOverflow_final_L2_Rel}
\end{figure}
\begin{figure}
\includegraphics[clip, trim=5cm 0.5cm 5.8cm 0.5cm, width=\textwidth]{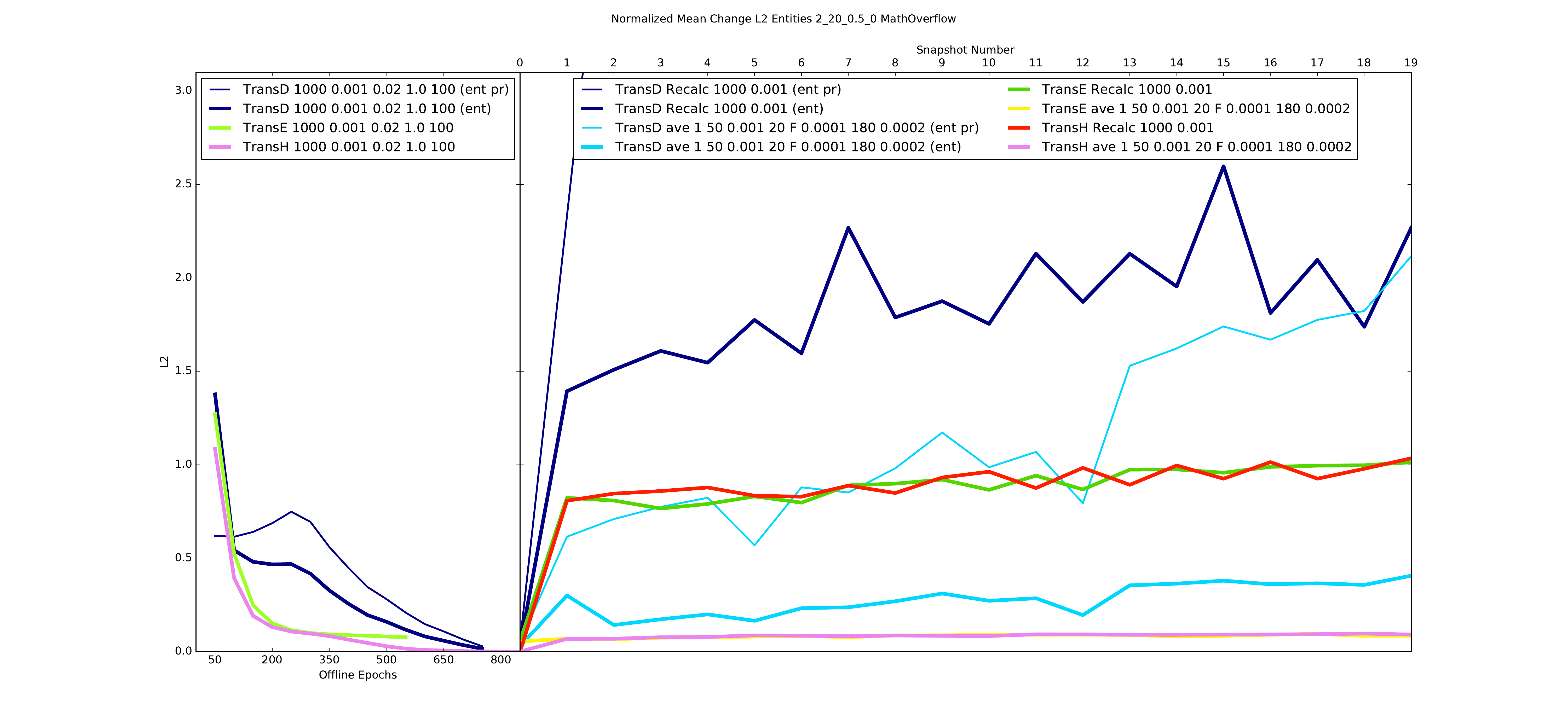}
\caption[Normalized Mean Change for entities using the online method in comparison to recalculations for translational distance models on MathOverflow]{Normalized Mean Change with the $L_2$-distance for entities using the online method in comparison to recalculations for translational distance models on %
MathOverflow. The NMC scores of the projection vectors for the recalculated embeddings of \textsc{TransD} have a much larger scale of $12.152$ on average with a standard deviation of $8.423$.}
\label{TDs_MathOverflow_final_L2_Ent}
\end{figure}
\begin{figure}
\includegraphics[clip, trim=5cm 0.5cm 5.8cm 0.5cm, width=\textwidth]{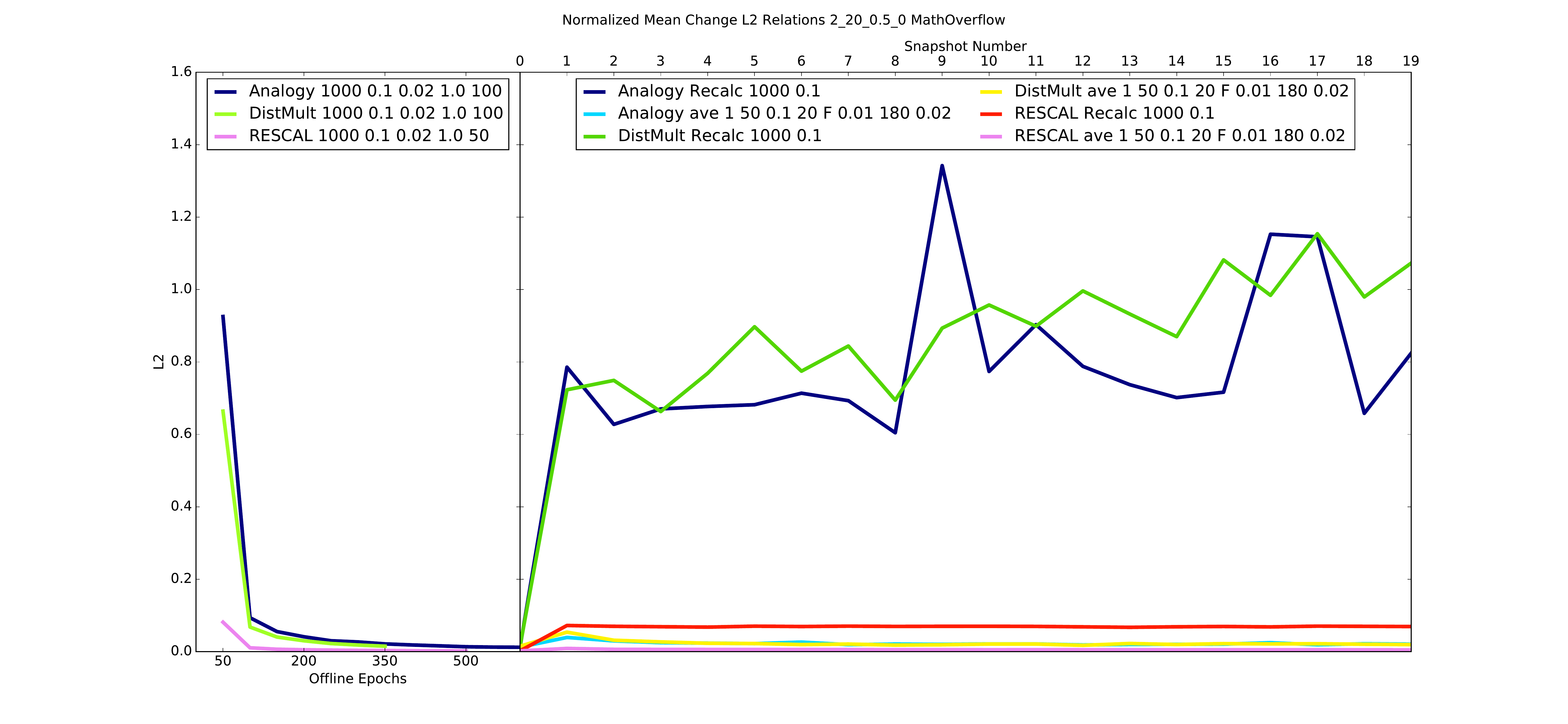}
\caption[Normalized Mean Change for relations using the online method in comparison to recalculations for semantic matching models on MathOverflow]{Normalized Mean Change with the $L_2$-distance for relations using the online method in comparison to recalculations for semantic matching models on %
MathOverflow}
\label{SMs_MathOverflow_final_L2_Rel}
\end{figure}
\begin{figure}
\includegraphics[clip, trim=5cm 0.5cm 5.8cm 0.5cm, width=\textwidth]{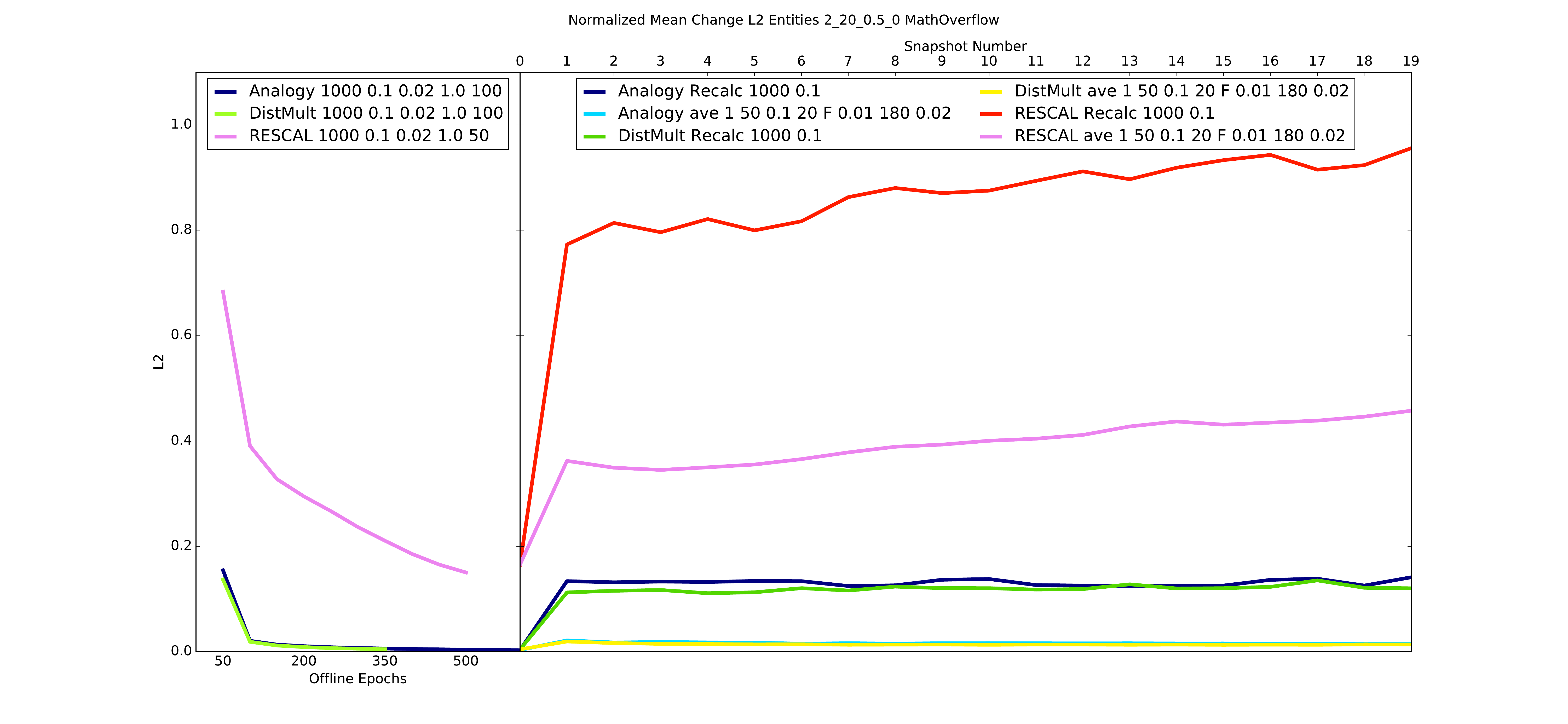}
\caption[Normalized Mean Change for entities using the online method in comparison to recalculations for semantic matching models on MathOverflow]{Normalized Mean Change with the $L_2$-distance for entities using the online method in comparison to recalculations for semantic matching models on %
MathOverflow}
\label{SMs_MathOverflow_final_L2_Ent}
\end{figure}
This can be explained with the different numbers of relations in the two data sets.
While there are $739$ relations in NELL5, the vector representations of the only three relations in MathOverflow are much more constrained leading to a smaller change over time.
As for NELL5, the updated embeddings of the translational distance models evolve almost stable, while the recalculated ones reach $11$ (for entities) and $19$ (for relations) times higher NMC scores for \textsc{TransE}, $11$ and $16$ as the proportions for \textsc{TransH} and $7$ and $13$ times higher ones for \textsc{TransD}.
For the semantic matching models, our online method provides the same stability on MathOverflow than on NELL5, but the proportions to the NMC scores of the recalculated embeddings are completely different for the relations this time.
For \textsc{DistMult} and \textsc{Analogy} the recalculated embeddings have on average $39$ and $35$ times higher NMC scores for relations than the updated embeddings.
This effect appears again due to the small number of relations in the MathOverflow data set.
Since all three relations are extreme many-to-many relations, their embeddings are constrained to be similar leading to a very small normalization factor in the NMC calculation.
Combined with a substantial distance between old and new vector representations, this results in very high NMC scores.

To evaluate the last one of our four desired properties, the scalability, we use the large Higgs Twitter data set and compare the runtimes besides the embedding quality.

\subsubsection{Scalability}\label{results_scalability}
We have an individual look at the Higgs Twitter dataset for the evaluation in terms of scalability.
Because of the large scale of the Higgs Twitter set, we had to make some adjustments to the general learning procedure to avoid exceeding memory and time limits.
\paragraph*{Adjusted experimental setup}
The training, validation and test sets include on average $4{,}694{,}363$, $260{,}790$ and $259{,}978$ triples.
Furthermore, each snapshot contains $277{,}002$ entities on average.
This means, to validate the current state of the embedding on the validation set in terms of link prediction, the scoring function has to be evaluated for about $2\cdot 260{,}790\cdot 277{,}002 \approx 1.445\cdot 10^{11}$ triples, since the head and the tail of each validation triple are replaced with each other entity in the snapshot.
As a result, the validation together with the test of the final embedding exceeds our time limits.
We decided to cancel the validation and therefore early stopping so that the online method runs for the complete number of epochs.
To reduce the memory requirements, we also had to increase the number of batches for the general epochs to $200$ instead of $100$.
Since \textsc{RESCAL} represents relations as matrices, we would further need to lower the batch size for this embedding model to reduce memory requirements, but then again the resulting runtime would exceed time limits.
Hence, we do not provide results for \textsc{RESCAL} on the Higgs Twitter data set.
Finally, as our stability metric requires the computation of a pairwise distance matrix between all entity embeddings, the calculation of the NMC scores exceeded memory limits, although the matrix is calculated and processed in chunks.
However, as we have already evaluated the stability on NELL5 and MathOverflow, we are more interested in the scalability of the online method in terms of embedding quality and especially the runtime.

\paragraph*{Embedding Quality}
All explained metrics for the measurement of the link prediction performance depend on the size of the given KG, because it is just more unlikely to calculate a small rank for a test triple in a large-scale data set than in a smaller one (especially the number of entities is responsible for this effect).
Hence, to obtain comparable results, we use the Hits@100 metric instead of the previously employed MRR.
The results of the translational distance models and semantic matching models (except \textsc{RESCAL}) are given in \cref{TDs_Higgs_final_Hits@100,SMs_Higgs_final_Hits@100}.
\begin{figure}
\includegraphics[clip, trim=5cm 0.5cm 5.8cm 0.5cm, width=\textwidth]{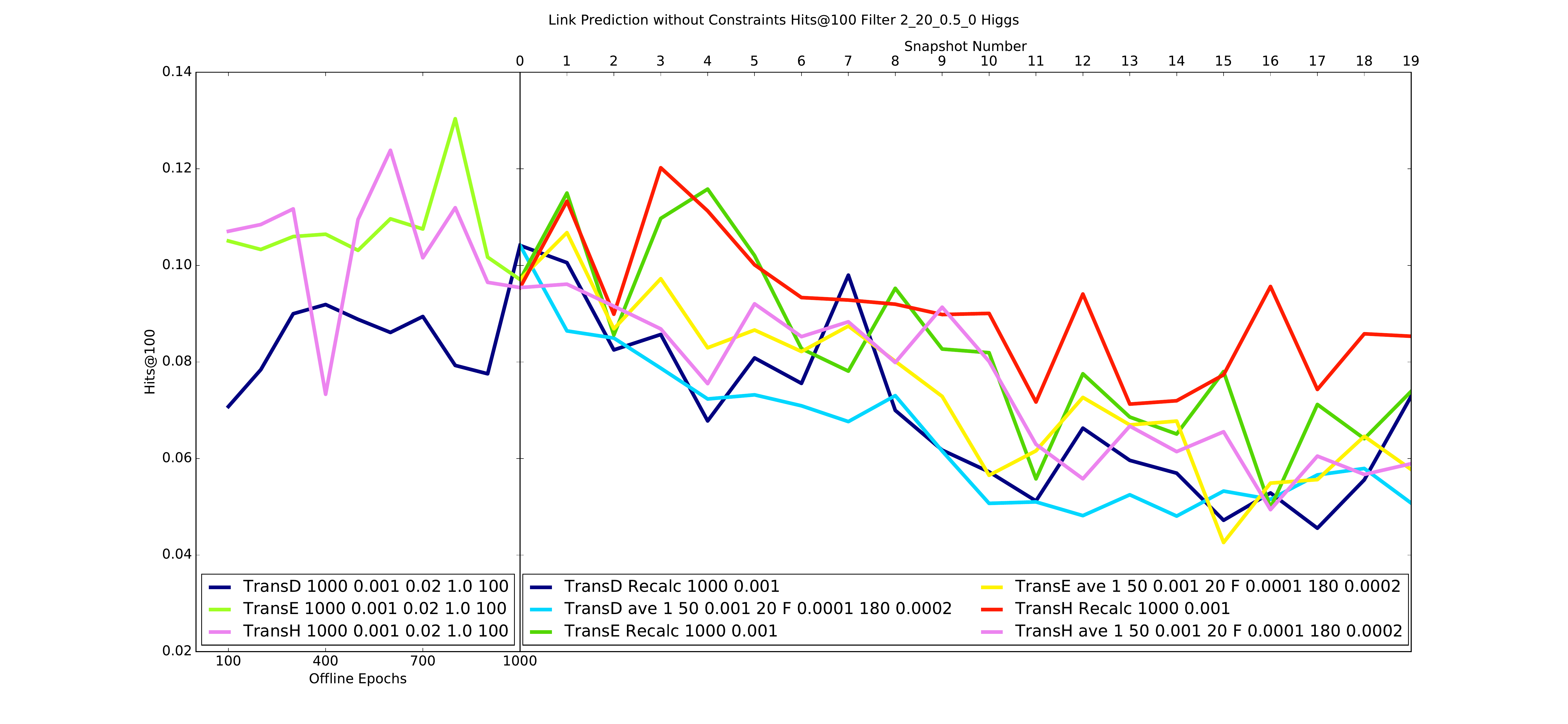}
\caption[Link prediction performance in the Hits@100 metric of the online method in comparison to recalculations for translational distance models on Higgs Twitter]{Link prediction performance in the Hits@100 metric of the online method in comparison to recalculations for translational distance models on %
Higgs Twitter}
\label{TDs_Higgs_final_Hits@100}
\end{figure}
\begin{figure}
\includegraphics[clip, trim=5cm 0.5cm 5.8cm 0.5cm, width=\textwidth]{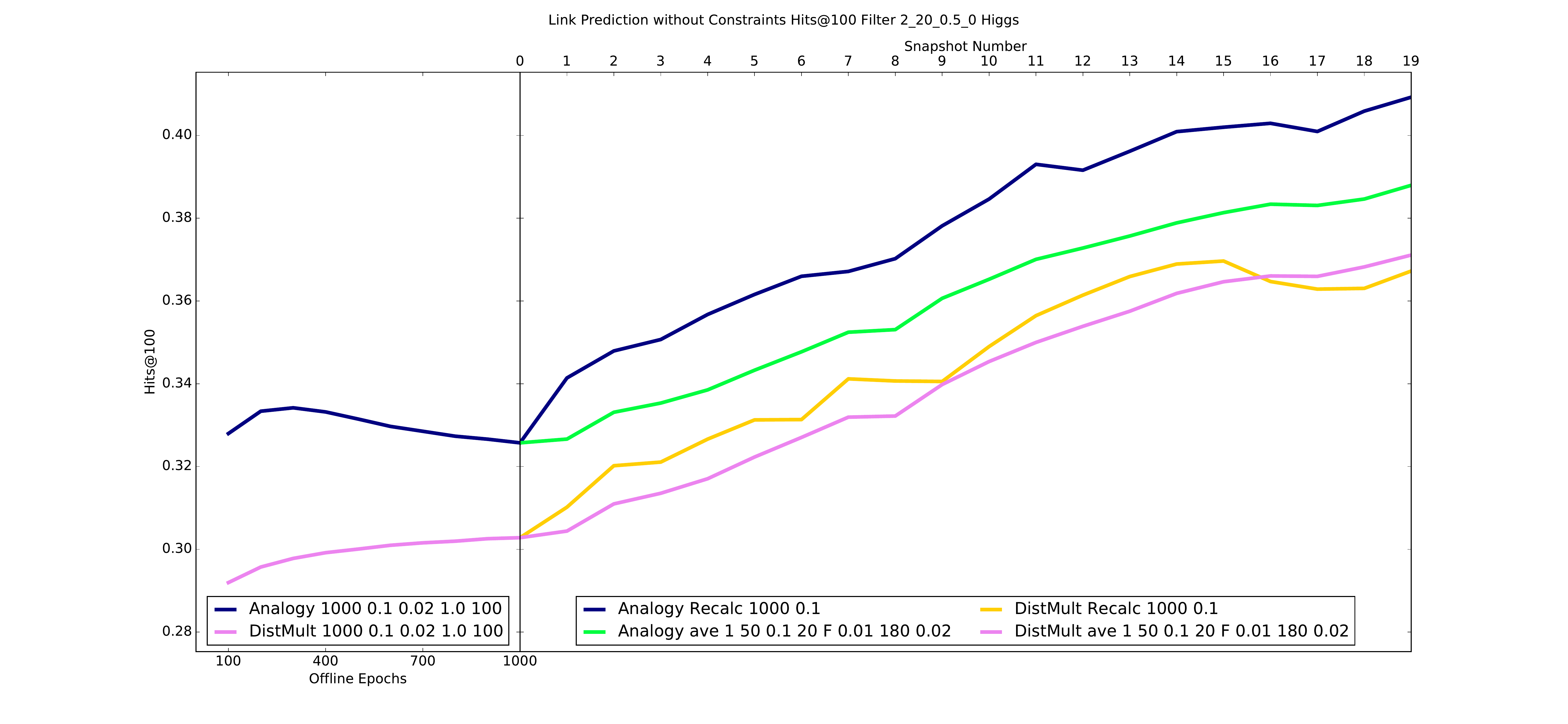}
\caption[Link prediction performance in the Hits@100 metric of the online method in comparison to recalculations for semantic matching models on Higgs Twitter]{Link prediction performance in the Hits@100 metric of the online method in comparison to recalculations for semantic matching models (except \textsc{RESCAL}) on %
Higgs Twitter}
\label{SMs_Higgs_final_Hits@100}
\end{figure}

First of all, we can observe huge performance differences between the translational distance and the semantic matching models.
On average, the recalculated embeddings of \textsc{DistMult} and \textsc{Analogy} achieve about $3.8 - 4.9$ and $4.2 - 5.4$ times higher Hits@100 proportions than \textsc{TransE}, \textsc{TransH} and \textsc{TransD}.
This could be due to the huge underlying directed network consisting of the follower relation. 
Furthermore, the embedding quality for the translational distance models is fluctuating from snapshot to snapshot, whereas for the semantic matching models there is a clearly visible positive trend due to the change of the dynamic KG. 
As before in the case of MathOverflow, the online method for \textsc{DistMult} and especially for \textsc{Analogy} is not completely able to catch up with the improving link prediction results of the recalculated embeddings. The bigger difference for \textsc{Analogy} could have been prevented by using validations and therefore backtracking to the best intermediate result for the initial embedding, as can be seen from the decreasing Hits@100 proportion during the later offline epochs.
For \textsc{DistMult} the average Hits@100 score of the recalculated embeddings is just $1.47\%$ better than for the online method.
Hence, we draw the conclusion that our approach is able to maintain comparable embedding quality for large-scale dynamic KGs.
Finally, we have a look at the runtime of the online method.

\paragraph*{Runtime}\label{sec:runtime}
In theory, the online method could be at most five times faster than the recalculations, as we use only a fifth of the epochs in the offline method.
These epochs mainly determine the runtime as already explained in the analysis of the time complexity.
For each snapshot we documented the runtime of the online method as well as of the recalculations without the time for the switch to the new snapshot or for the final link prediction test.
The results in terms of average runtime per snapshot and standard deviation are given in \cref{tab:higgs_runtimes}.

\begin{table}[!h]
\centering
\caption[Runtime of online method and recalculations on Higgs Twitter]{Average runtime $\varnothing$ and deviation $\sigma$ in seconds of the online method in comparison to recalculations for \textsc{TransE}, \textsc{TransH}, \textsc{TransD}, \textsc{DistMult} and \textsc{Analogy} on Higgs Twitter. The embeddings are trained by the NVIDIA Tesla V100 GPU with CUDA v.9.0 (driver 418.43) and cuDNN v.7.5.} \label{tab:higgs_runtimes}
\begin{tabular}{|c||cc|cc|}
\cline{2-5}
\multicolumn{1}{c|}{} & \multicolumn{2}{c|}{\textbf{Online Method}} & \multicolumn{2}{c|}{\textbf{Recalculations}} \\
\cline{2-5}
\multicolumn{1}{c|}{} & $\varnothing$ & $\sigma$ & $\varnothing$ & $\sigma$ \\
\hline
\textsc{TransE} & $1{,}981$ & $380$ & $6{,}135$ & $70$ \\
\hline
\textsc{TransH} & $3{,}254$ & $373$ & $13{,}122$ & $136$ \\
\hline
\textsc{TransD} & $3{,}056$ & $339$ & $12{,}053$ & $152$ \\
\hline
\textsc{DistMult} & $2{,}052$ & $307$ & $6{,}408$ & $94$ \\
\hline
\textsc{Analogy} & $4{,}090$ & $334$ & $16{,}698$ & $239$ \\
\hline
\end{tabular}
\end{table}

For the faster models \textsc{TransE} and \textsc{DistMult} the online method is three times faster than the recalculations.
While the other three embedding models \textsc{TransH}, \textsc{TransD} and \textsc{Analogy} took at most $4$ hours and $38$ minutes to recalculate an embedding from scratch, the online method is even four times faster with a maximum runtime of $1$ hour and $8$ minutes.
The larger standard deviation mainly results from an up to $15$ minutes slower runtime for the first snapshot because of hardware reasons, but also from variation in the changes of the dynamic KG.

Overall the online method is scalable to large real-world dynamic KGs and even saves a lot of time in comparison to recalculations.

\bibliographystyle{splncs04}
\bibliography{bibliography}